%% file: main.tex
\documentclass[preprint,12pt,nopreprintline]{elsarticle}
\usepackage{amsmath, amssymb, amsthm, bbm}
\usepackage{enumitem}
\usepackage{graphicx}
\usepackage[UKenglish]{isodate}
\usepackage[font=small,labelfont=it]{caption}
\usepackage[margin=1in]{geometry}
\usepackage{setspace}
\usepackage{mathtools}
\usepackage{booktabs}
\usepackage{array}
\usepackage{placeins}
\usepackage[hidelinks]{hyperref}
\hypersetup{
  pdftitle={Online Learning of Scale Parameters in Score-Driven Filters},
  pdfauthor={Fabrizio Lillo, Giulia Livieri, and Gianluca Palmari},
  pdfsubject={Online gain learning for score-driven filters},
  pdfkeywords={score-driven filters, online learning, mirror descent, dynamic regret}
}
\mathtoolsset{showonlyrefs}
\DeclareMathOperator{\inter}{int}
\DeclareMathOperator{\diag}{diag}

\renewcommand{\phi}{\varphi}

\let\emptyset\varnothing

\DeclareMathOperator*{\argmin}{arg\,min}

\newcommand{\ud}{\mathrm{d}}

\def \ud{\mathrm{d}}

\newtheorem*{theorem*}{Theorem}
\newtheorem{theorem}{Theorem}[section]

\newtheorem{proposition}[theorem]{Proposition}
\newtheorem{corollary}[theorem]{Corollary}
\newtheorem{remark}[theorem]{Remark}

\newtheorem{example}[theorem]{Example}
\newtheorem{assumption}[theorem]{Assumption}
\newtheorem{definition}[theorem]{Definition}
\newcommand{\proofref}[1]{\par\noindent The proof appears in \ref{#1}.\par}

\makeatletter
\@namedef{subjclassname@2020}{\textup{2020} Mathematics Subject Classification}
\makeatother

\begin{document}
\onehalfspacing
\raggedbottom
\begin{frontmatter}

\title{Online Learning of Scale Parameters in Score-Driven Filters}

\author[sns]{Fabrizio Lillo}
\ead{fabrizio.lillo@sns.it}
\author[lse]{Giulia Livieri}
\ead{g.livieri@lse.ac.uk}
\author[sns]{Gianluca Palmari\corref{cor1}}
\cortext[cor1]{Corresponding author.}
\ead{gianluca.palmari@sns.it}

\affiliation[sns]{organization={Scuola Normale Superiore},
  addressline={Piazza dei Cavalieri 7}, city={Pisa}, country={Italy}}
\affiliation[lse]{organization={Department of Statistics, London School of Economics and Political Science},
  addressline={Columbia House, 69 Aldwych}, city={London}, country={United Kingdom}}

\begin{abstract}
A score-driven filter multiplies its scaled log-likelihood score by a scale parameter. We call this coefficient the gain and learn it online. Given the current state and realised scaled score, each admissible gain selects a reachable next state and predictive density. A scalar gain moves along a line; diagonal gains control coordinatewise transmission and may change direction. We evaluate gain selection using a one-step predictive Kullback--Leibler objective. In the scalar unscaled case, the negative consecutive-score product is a stochastic gradient; the positive product used in accelerated recursions is a descent direction. Positive scalar score scaling changes only the effective learning rate. Monotone differentiable gain links induce mirror-descent geometry, while persistence adds a Bregman pull towards a reference gain. Under convexity, compactness, integrability, and schedule conditions, projected and discounted mirror updates satisfy dynamic-regret bounds relative to time-varying, current-information comparators. Simulations isolate score scaling, link geometry, persistence, and coordinatewise gains. Across twelve equity indices, the bounded discounted-logistic gain records a lower out-of-sample mean negative log score than the constant gain in eleven markets, although market-level evidence is mixed. It also avoids the extreme transients of the numerically capped exponential-link benchmark. Improvements are largest in markets spanning multiple crises.
\end{abstract}

\begin{keyword}
time-varying parameter models \sep score-driven filters \sep online learning \sep mirror descent
\MSC[2020] 62M10 \sep 62M20 \sep 62L20 \sep 90C25 \sep 62P20
\JEL C22 \sep C32 \sep C53 \sep C58
\end{keyword}

\end{frontmatter}

\section{Introduction}\label{section::Introduction}
Score-driven models\footnote{This class is also known as the generalised
autoregressive score or dynamic conditional score class.}, introduced by
\cite{creal2013gas} and~\cite{harvey2013dynamic}, are observation-driven
time-varying parameter models in Cox's~\cite{cox1981statistical} classification.
Their time-varying parameters are deterministic functions of past observations and
previous parameter values, unlike the latent processes of parameter-driven models
\cite{koopman2016predicting}. Updates use the conditional log-likelihood score,
possibly scaled by a local information or curvature matrix. This structure supports
tractable likelihood inference and applications to dynamic volatility, intensities,
dependence, and other economic and financial
features (e.g.,~\cite{creal2013gas,creal2011dynamic,harvey2013dynamic,koopman2015dynamic,lucas2014conditional}).\footnote{Applications are catalogued at \url{https://www.gasmodel.com}.}

For likelihood-based\footnote{Filters based on other proper scoring rules instead use the corresponding scoring-rule divergence (e.g.,~\cite{depunder2026prada, de2024kullback, dawid2016minimum, catania2026noise,livieri2026scoring}).}
score-driven models, expected log loss equals Kullback--Leibler divergence up to a
term independent of the model parameter. Under misspecification, minimising it
selects a pseudo-true parameter whose model density is closest to the true conditional
density; under correct specification and identification, it selects the true parameter.
Existing optimality results use this criterion. Blasques et
al.~\cite{blasques2015information,blasques2018correction} show that the score direction
locally reduces the divergence between the true and model-implied densities. Gorgi et
al.~\cite{gorgi2024optimality} show, under regularity and monotonicity conditions, that
the expected update approaches the pseudo-true value. De Punder et
al.~\cite{de2024kullback} characterise small-step improvement through alignment
between the expected update and the score.

These results address whether the score provides a suitable update direction, but
not how far the filter should move along it. This distinction matters: even a suitable
direction can make the filter react too slowly or too aggressively when multiplied by
a poorly chosen coefficient. In multivariate models, a common coefficient rescales a
fixed direction, whereas componentwise coefficients can also change that direction.
We therefore take the working score and its scaling rule as given and ask how the
remaining coefficient should be learned and evaluated.

Blasques et al.~\cite{blasques2019accelerating} refer to the static coefficient
multiplying the scaled score as a scale parameter and make it time varying in
\textrm{aGAS}. To distinguish this coefficient from the scaling applied to the score,
we call it the \emph{gain} throughout. In a Gaussian local-level model, the gain
specialises to the Kalman gain~\cite{durbin2012time}. We ask which predictive objective,
geometry, and benchmark should govern its online updating.

Both our formulation and \textrm{aGAS} use \(\mathcal F_t\)-timed score-product
feedback: after observing \(y_t\), the recursion forms
\(\alpha_{\lambda,t}=g(f_{t+1})\) and uses it to update \(\lambda_{t+1}\).
\textrm{aGAS} motivates its latent autoregression through acceleration and local
same-period Kullback--Leibler reduction. We give the score product a complementary
interpretation: in the scalar unscaled case, consecutive score innovations form a
stochastic descent signal for next-period conditional predictive loss; positive
score scaling preserves this direction and changes only the effective step size.
The gain link determines mirror geometry, persistence adds a Bregman pull towards
a reference gain, and dynamic regret evaluates performance against
\(\mathcal F_t\)-measurable moving gains on score-conditioned reachable sets. The
exact \textrm{aGAS} recursive representation does not by itself establish regret
coverage; Section~\ref{sec::SectionFive} states the additional geometry and schedule
conditions.

Conditional on the current state, realised score, and scaling rule, admissible gains
index reachable next states and hence next-period predictive densities. We therefore
evaluate gain selection with a conditional one-step predictive criterion.

Our contributions are threefold.
\begin{enumerate}
\item \emph{A conditional gain objective and its score-product gradient.} Each
admissible gain produces a counterfactual post-observation state and hence a
predictive density. We evaluate this density by its conditional expected negative
log-likelihood. Up to a gain-independent conditional entropy term, the objective
selects, from the densities reachable through the realised score direction, the
one closest in Kullback--Leibler divergence to the true next-period conditional
density. A gain magnitude is therefore meaningful only relative to the score and
scaling rule that determine its state displacement. In the scalar unscaled case,
the negative product of consecutive scores is a stochastic gradient of the gain
objective. Positive score scaling, as used in
\textrm{aGAS}~\cite{blasques2019accelerating}, preserves the descent direction
and changes only the effective learning rate.
\item \emph{Link-induced mirror geometry and persistence.} The link used to keep gains
positive, bounded, or otherwise admissible also determines the geometry in which
they are learned. Changing the link changes how the update moves through the
admissible gain domain without changing the predictive objective; the link
therefore acts as a mirror map. A persistent latent recursion adds a Bregman pull
towards a reference gain. In the usual constant-parameter intercept form, the
intercept and autoregressive coefficient jointly determine that reference gain,
while the autoregressive coefficient also controls memory.
\item \emph{A dynamic-regret benchmark.} Under convexity, compactness,
integrability, and the stated schedule conditions, projected and localised mirror
updates satisfy dynamic-regret bounds against time-varying,
$\mathcal F_t$-measurable comparator gains. For the persistent update, the bound
includes an explicit remainder involving the reference gain. The original
data-dependent \textrm{aGAS} recursion has the same
algebraic mirror representation, but the regret bounds apply to it only when all
of the stated additional conditions are verified.
\end{enumerate}

The paper connects score-driven optimality and stability
\cite{blasques2015information,gorgi2024optimality,beutner2026consistency,donker2025stability},
time-varying-gain \textrm{aGAS} models~\cite{blasques2019accelerating}, and online
convex optimisation and dynamic regret
\cite{nemirovsky1983problem,beck2003mirror,zinkevich2003online,besbes2015nonstationary,hazan2016introduction}.
The regret results compare the learned and comparator gain sequences on the same
realised, score-conditioned reachable sets. They are not policy-regret guarantees
for counterfactual filter paths, which would generate different states and scores.
Gain magnitudes and admissible intervals are therefore interpreted relative to the
chosen score scaling. Within each numerical or empirical comparison, methods use
the same observations and model form but generally generate their own states,
scores, and reachable sets. The comparisons therefore evaluate complete filtering
procedures rather than directly instantiate the theorem's common-context benchmark.

Section~\ref{sec::SectionOne} formulates the conditional gain objective,
Section~\ref{sec::SectionTwo} identifies its score-pullback gradient, and
Section~\ref{sec::SectionThree} develops the mirror geometry. Section~\ref{sec::SectionFive}
presents the benchmark and regret theory. Sections~\ref{sec::SectionSix}
and~\ref{sec:empirical} report numerical and empirical evidence, respectively,
and Section~\ref{sec::Conclusion} concludes.
\section{A conditional gain problem for score-driven filters}\label{sec::SectionOne}
Once the realised scaled score is known, admissible gains index reachable next
states: a line segment parallel to that score direction for a common scalar gain
and a coordinate-aligned box for a diagonal gain. This section constructs the
reachable set and represents gain choice as a conditional Kullback--Leibler
projection onto the corresponding predictive family.
\paragraph{\textbf{Notation and conventions}}
We distinguish four objects throughout. The \emph{data-generating process}
determines the true conditional law $\widetilde Q_t$ of $y_t$ given
$\mathcal{F}_{t-1}$, with density $\tilde p_t$. The \emph{working model} is the
possibly misspecified parametric family
$P_\lambda(\mathrm dy)=p(y\mid\lambda)\nu(\mathrm dy)$. The \emph{filter} is the
score-driven recursion that produces the state $\lambda_t$. The \emph{gain
problem} concerns the choice of $\alpha_t\in\mathcal{A}$. We write
$\mathcal{F}_{t-1}=\sigma(y_{t-1},y_{t-2},\ldots)$ and
$\mathsf{E}_s[\cdot]=\mathsf{E}[\cdot\mid\mathcal{F}_s]$. After $y_t$ is observed,
the scaled score $d_t=S(\lambda_t)s(y_t,\lambda_t)$ is revealed and is
$\mathcal{F}_t$-measurable. The gain $\alpha_t$ is then chosen as an
$\mathcal{F}_t$-measurable decision and committed before the forecast outcome
$y_{t+1}$ is observed. In the theoretical framework, the realised predictive loss
is negative log-likelihood, and its conditional expectation equals predictive
Kullback--Leibler divergence up to a term independent of the gain. We reserve
$\alpha$ for the
\emph{gain}; Section~\ref{sec::SectionThree} introduces the learning rate,
persistence, discounting, reference gain, and latent coordinate.

\subsection{Setting and assumptions}
Let \((\mathcal Y,\mathcal B_{\mathcal Y})\) be the observation space and
\((y_t)_{t\in\mathbb Z}\) the observed process, with natural filtration
\(\mathcal F_{t-1}=\sigma(y_{t-1},y_{t-2},\ldots)\).\footnote{In the truncated case $\mathcal{F}_{t-1}:=\sigma(y_{t-1},y_{t-2},\ldots,y_{1})$ with $\mathcal{F}_{0}=\{\emptyset,\Omega\}$, non-trivial time-zero randomness is accounted for by enlarging the initial sigma-field.}
Assume that the true conditional law \(\widetilde Q_t\) is dominated by a fixed
sigma-finite measure \(\nu\) (for example, Lebesgue or counting measure), and
denote its conditional density by \(\tilde p_t\). Conditional expectations are
taken under the true law, with
\(\mathsf E_s[\cdot]:=\mathsf E[\,\cdot\mid\mathcal F_s]\). Hence, for every
integrable measurable function \(h\),
\[
    \mathsf E_{t-1}[h(y_t)]
    =
    \int_{\mathcal Y} h(y)\tilde p_t(y)\nu(\mathrm dy).
\]

We specify the score-driven filter relative to the parametric predictive family
\[
    P_\lambda(\mathrm dy)=p(y\mid\lambda)\nu(\mathrm dy),
    \qquad \lambda\in\Lambda\subset\mathbb R^m,
    \quad m\in\mathbb N_{\ge 1}.
\]
This working model may be misspecified relative to the true conditional density
\(\tilde p_t\).

Following \cite{creal2013gas} and~\cite{harvey2013dynamic}, define the likelihood score
\[
    s(y_t,\lambda_t)
    :=
    \frac{\partial \log p(y_t\mid\lambda_t)}{\partial\lambda_t},
\]
whenever the derivative exists. A first-order score-driven filter then updates
the time-varying state according to
\begin{equation}\label{eq::ScoreDrivenCrealandHarvey}
    \lambda_{t+1}
    =
    c+B\lambda_t+A\, S(\lambda_t)s(y_t,\lambda_t),
\end{equation}
where \(c\in\mathbb R^m\), the \emph{gain} matrix \(A\) and persistence matrix
\(B\) lie in \(\mathsf M_m(\mathbb R)\), and
\(S:\Lambda\to\mathsf M_m(\mathbb R)\) is a deterministic scaling rule. In
general, \(\mathsf M_{m,n}(\mathbb R)\) denotes the set of \(m\times n\) real
matrices, with \(\mathsf M_m(\mathbb R)=\mathsf M_{m,m}(\mathbb R)\). In the
Newton-score specification, \(S(\lambda_t)\) is an inverse information or inverse
curvature matrix that preconditions the raw score before it enters the recursion
(see
\cite{blasques2015information, blasques2019accelerating, gorgi2024optimality}).

To isolate the gain while retaining the standard intercept and state persistence,
define the predictable no-gain baseline
\(
    b_t:=c+B\lambda_t.
\)
We use a time-varying componentwise gain as the running case,
\(A_t=\diag(\alpha_t)\) with \(\alpha_t\in\mathbb R^m\). The adaptive update
becomes
\begin{equation}\label{eq::ScoreDrivenCrealandHarvey2}
    \lambda_{t+1}
    =
    b_t+\alpha_t\odot S(\lambda_t)s(y_t,\lambda_t),
\end{equation}
where \(\odot\) denotes the Hadamard product.
The score and scaling rule determine the signal
\(S(\lambda_t)s(y_t,\lambda_t)\); the gain \(\alpha_t\) determines its
coordinatewise transmission to the next state.

At time \(t-1\), \(\lambda_t\) is \(\mathcal F_{t-1}\)-measurable. After
observing \(y_t\), the scaled score
\begin{equation}
\label{eq:new_score_signal}
d_t:=S(\lambda_t)s(y_t,\lambda_t)
\end{equation}
is realised and \(\mathcal F_t\)-measurable. When \(S(\lambda_t)\) is an
inverse-information or inverse-curvature matrix, \(d_t\) is the Newton-score
signal. The gain determines how this realised signal is transmitted to the next
state.

We represent both common scalar and componentwise gains through a general linear
loading. Let \(\mathcal A\subset\mathbb R^k\) be non-empty, compact, and convex,
let \(a\in\mathcal A\), and let \(H_t\in\mathbb R^{m\times k}\) be an
\(\mathcal F_t\)-measurable linear loading of the realised scaled score \(d_t\).
Conditional on \(\mathcal F_t\), define the candidate next state
\begin{equation}\label{eq::postobservation}
    \hat\lambda_t(a)
    :=
    b_t+H_t\,a.
\end{equation}
Two gains \(a,b\in\mathcal A\) are observationally equivalent at date \(t\)
whenever \(H_t(a-b)=0\). Two cases recur. For a \emph{diagonal} gain, \(k=m\) and
\(H_t=\diag(d_t)\), so \(H_ta=a\odot d_t\). For a \emph{common scalar} gain,
\(k=1\) and \(H_t=d_t\), so \(a\in\mathcal A\subset\mathbb R\) scales the entire
score. Interiors are always taken in the native gain space \(\mathbb R^k\);
otherwise, for \(m>1\), embedding a scalar interval as
\(\{a\mathbf 1_m:a\in[\alpha_L,\alpha_H]\}\) in \(\mathbb R^m\) would give it
empty interior. Accordingly, all differentiation and mirror statements below use
\(\inter(\mathcal A)\) relative to \(\mathbb R^k\). The corresponding reachable
set
\begin{equation}\label{eq::attainableset}
    \mathcal R_t
    :=
    \{\hat\lambda_t(a):a\in\mathcal A\}
\end{equation}
is the affine image of \(\mathcal A\) under \(a\mapsto b_t+H_t a\). We require
\(\mathcal R_t\subset\Lambda\) almost surely. With a common scalar gain,
\(\mathcal R_t=b_t+d_t\mathcal A\), which is a possibly degenerate line segment.
It contains the no-gain baseline \(b_t\) only if \(0\in\mathcal A\). With a
diagonal gain, a product set \(\mathcal A\), and \(H_t=\diag(d_t)\), the reachable
set is an axis-aligned box, again possibly degenerate. In all cases, compactness
of \(\mathcal A\) makes \(\mathcal R_t\) bounded.

\begin{definition}[Gain policy]\label{def::gain_policy}
Let \(\mathcal A\subset\mathbb R^k\) be non-empty, compact, and convex, with
non-empty interior in \(\mathbb R^k\). A gain policy is a sequence
\((\alpha_t)_{t\ge 1}\) in which each \(\alpha_t\) is
\(\mathcal F_t\)-measurable and takes values in \(\mathcal A\).
\end{definition}
Thus \(\alpha_t\) may react to \(y_t\) through \(d_t\), but is committed
\emph{before} the forecast outcome \(y_{t+1}\). The timing is
\[
\begin{aligned}
    \lambda_t\ (\mathcal F_{t-1})
    &\ \longrightarrow\ y_t\ \text{observed}
    \ \longrightarrow\ \underbrace{d_t=S(\lambda_t)s(y_t,\lambda_t)\ \text{revealed}}_{\mathcal F_t\text{-measurable}} \\
    &\ \longrightarrow\ \underbrace{\alpha_t\ \text{chosen}}_{\mathcal F_{t}\text{-measurable}}
    \ \longrightarrow\ \lambda_{t+1}=b_t+H_t\alpha_t .
\end{aligned}
\]
For the diagonal running case, the last expression is
\(\lambda_{t+1}=b_t+\alpha_t\odot d_t\).
This matches the \textrm{aGAS} timing of~\cite{blasques2019accelerating}; see
Remark~\ref{rem::agas}. More explicitly, the calendar-time convention of
\cite{blasques2019accelerating} is
\[
\underbrace{(\lambda_t,f_t)}_{\mathcal F_{t-1}}
\ \longrightarrow\ y_t
\ \longrightarrow\ u_{\lambda,t}
\ \longrightarrow\ f_{t+1}
\ \longrightarrow\ \alpha_{\lambda,t}=g(f_{t+1})
\ \longrightarrow\ \lambda_{t+1}.
\]
Thus \(\alpha_{\lambda,t}\) reacts to \(y_t\), is \(\mathcal F_t\)-measurable,
and is used immediately in the update from \(\lambda_t\) to
\(\lambda_{t+1}\). We identify it with our \(\alpha_t\). The translation to the
standard online-learning transition \(\alpha_t\mapsto\alpha_{t+1}\) is stated
once in Remark~\ref{rem::agas}. Once \(y_t\) is observed, \(d_t\) and
\(\alpha\mapsto\hat\lambda_t(\alpha)\) are known. We study
\[
    \lambda_{t+1}=\hat\lambda_t(\alpha_t).
\]
In this local analysis, the baseline intercept \(c\), persistence matrix \(B\),
and any static density parameters are held fixed. The pair \((c,B)\) determines
the $\mathcal F_{t-1}$-measurable baseline $b_t$; conditional on \(\lambda_t\),
neither changes the realised score direction $d_t$.
Section~\ref{sec::SectionThree} introduces the separate intercept and persistence
terms used in the gain recursion.

\subsection{Predictive loss for gain selection and Kullback--Leibler representation}
Once \(y_t\) is observed, each gain \(\alpha\in\mathcal A\) yields the state
\(\hat\lambda_t(\alpha)\) and predictive density
\(p(\cdot\mid\hat\lambda_t(\alpha))\) for \(y_{t+1}\). Its realised log-loss is
\begin{equation}\label{eq::scoringrule}
    L_{t+1}(\alpha)
    :=
    -\log p(y_{t+1}\mid\hat\lambda_t(\alpha)).
\end{equation}
The corresponding conditional predictive loss for gain selection is
\begin{equation}\label{eq::scoringruleexpected}
    J_{t+1\mid t}(\alpha)
    :=
    \mathsf E_t
    \big[
        L_{t+1}(\alpha)
    \big]
    =
    \mathsf E_t
    \big[
        -\log p(y_{t+1}\mid\hat\lambda_t(\alpha))
    \big].
\end{equation}
The learner does not know this $\mathcal F_t$-conditional objective. Under the
conditions stated in Section~\ref{sec::SectionTwo}, the score pullback observed
at $t+1$ is a conditionally unbiased stochastic gradient. If the gain were
instead chosen before \(y_t\), the relevant objective would be the pre-score
expectation
\begin{equation}\label{eq::decision_time_gain_loss}
    \bar J_{t\mid t-1}(\alpha)
    :=
    \mathsf E_{t-1}
    \big[
        J_{t+1\mid t}(\alpha)
    \big].
\end{equation}
Under our timing, for any gain policy \((\alpha_t)\), the
\(\mathcal F_t\)-measurability of \(\alpha_t\) and the tower property give,
whenever these expectations are finite,
$
    \mathsf E
    \big[
        J_{t+1\mid t}(\alpha_t)
    \big]
    =
    \mathsf E
    \big[
        \mathsf E_t[L_{t+1}(\alpha_t)]
    \big]
    =
    \mathsf E
    \big[
        L_{t+1}(\alpha_t)
    \big],
$
where \(\mathsf E\) denotes unconditional expectation.

The two objectives use different information sets. The post-score objective
conditions on the realised score direction and reachable set facing the gain
choice, whereas the pre-score objective averages over score directions before
they are observed. The former therefore pairs with an $\mathcal F_t$-measurable
comparator, the latter with an $\mathcal F_{t-1}$-measurable (predictable)
comparator. We use the post-score formulation because it matches the timing of
the adaptive recursion; the pre-score criterion shows how the oracle and gradient
change when the gain must be selected earlier.

Conditional on \(\mathcal F_t\), the admissible gain set restricts the predictive
criterion to states reachable under the realised score and chosen scaling rule.
The objective is therefore the negative log-likelihood pulled back through
\(\alpha\mapsto\hat\lambda_t(\alpha)\). Proposition~\ref{prop::kl_representation}
decomposes this objective into conditional entropy, which is gain independent,
and Kullback--Leibler divergence from the true conditional law to each reachable
model law.
\begin{proposition}[Conditional Kullback--Leibler representation]\label{prop::kl_representation}
Let $\widetilde Q_{t+1}(\ud y)$ be a version of the conditional law of
$y_{t+1}$ given $\mathcal F_t$. Suppose that $\widetilde Q_{t+1}$ is dominated
by a fixed sigma-finite measure $\nu$ and admits an
$\mathcal F_t\otimes\mathcal B(\mathcal Y)$-measurable density
$\tilde p_{t+1}(y)$ with respect to $\nu$. For
$\lambda\in\Lambda\subset\mathbb R^m$, let
\[
P_\lambda(\ud y)=p(y\mid\lambda)\nu(\ud y)
\]
denote the model law. Let $\hat\lambda_t(\alpha)$ be the candidate next state in
Eq.~\eqref{eq::postobservation}. Define the conditional entropy
$C_{t+1}:=-\int_{\mathcal Y}\tilde p_{t+1}(y)\log\tilde p_{t+1}(y)\nu(\ud y)$,
and assume that $C_{t+1}\in\mathbb R$ almost surely. For every
$\alpha\in\mathcal A$, assume that the cross-entropy
$-\int_{\mathcal Y}\tilde p_{t+1}(y)
\log p(y\mid\hat\lambda_t(\alpha))\nu(\ud y)$ is a well-defined (extended)
random variable taking values in $\mathbb R\cup\{+\infty\}$. Then,
\begin{equation}
    J_{t+1\mid t}(\alpha)=C_{t+1} + \mathsf{KL}(\widetilde Q_{t+1}(\ud y), P_{\hat \lambda_t(\alpha)})
\end{equation}
almost surely, where $J_{t+1\mid t}(\alpha)$ is defined in Eq.~\eqref{eq::scoringruleexpected} and $\mathsf{KL}$ denotes the Kullback--Leibler divergence; we use the convention $0 \log(0/q)=0$ and $q \log(q/0)=+\infty$.
\end{proposition}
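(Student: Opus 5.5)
The plan is to fix $\alpha\in\mathcal A$ and work pathwise with the regular conditional law. Since $\hat\lambda_t(\alpha)=b_t+H_t\alpha$ is $\mathcal F_t$-measurable, the map $(\omega,y)\mapsto -\log p(y\mid\hat\lambda_t(\alpha)(\omega))$ is $\mathcal F_t\otimes\mathcal B(\mathcal Y)$-measurable, taking values in $\mathbb R\cup\{+\infty\}$. I will invoke the disintegration (freezing) lemma for conditional expectations of jointly measurable functions evaluated at $(\omega,y_{t+1})$ with an $\mathcal F_t$-measurable first argument. This gives, almost surely,
\[
J_{t+1\mid t}(\alpha)=-\int_{\mathcal Y}\tilde p_{t+1}(y)\log p(y\mid\hat\lambda_t(\alpha))\,\nu(\ud y).
\]
The lemma applies to functions that are nonnegative or quasi-integrable. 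I will therefore split the integrand into its positive and negative parts and apply the lemma to each separately. By hypothesis the resulting cross-entropy is well defined in $\mathbb R\cup\{+\infty\}$, so the difference of the two parts makes sense.

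Next I decompose pointwise on $\{\tilde p_{t+1}>0\}$, which is all that matters because the integrand carries the weight $\tilde p_{t+1}$:
\[
-\tilde p_{t+1}\log p(\cdot\mid\hat\lambda)=-\tilde p_{t+1}\log\tilde p_{t+1}+\tilde p_{t+1}\log\frac{\tilde p_{t+1}}{p(\cdot\mid\hat\lambda)}.
\]
The conventions $0\log(0/q)=0$ and $q\log(q/0)=+\infty$ handle the boundary cases. Integrating term by term gives $C_{t+1}+\mathsf{KL}(\widetilde Q_{t+1},P_{\hat\lambda_t(\alpha)})$, because the second integral is precisely the KL divergence. That integral is well defined in $[0,\infty]$: its negative part is integrable since $x\log(x/q)\ge x-q$, and both $\tilde p_{t+1}$ and $p$ have finite total mass. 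On the event where $\widetilde Q_{t+1}\not\ll P_{\hat\lambda}$, both sides equal $+\infty$.

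The main obstacle is justifying additivity of integrals that may be infinite or only quasi-integrable. I will handle it as follows. Because $C_{t+1}\in\mathbb R$, the function $\tilde p\log\tilde p$ is integrable, so $\tilde p\log\tilde p\in L^1(\nu)$ almost surely. Adding an integrable function to a quasi-integrable one preserves quasi-integrability and linearity of the integral, with the values $+\infty$ matching. Applying this on each side yields the identity almost surely, with null sets depending on $\alpha$. The statement is for every $\alpha$, so an $\alpha$-dependent null set is enough.
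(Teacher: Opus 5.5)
Your proposal is correct and follows the paper's route. Like the paper, you write $J_{t+1\mid t}(\alpha)$ as the cross-entropy integral against $\tilde p_{t+1}$, add and subtract the finite conditional entropy, and identify the remainder as the KL divergence; you additionally justify what the paper only asserts with ``the rearrangement is valid''. That extra justification consists of the disintegration step for $\mathcal F_t\otimes\mathcal B(\mathcal Y)$-measurable integrands, the split into positive and negative parts, the bound $x\log(x/q)\ge x-q$ showing the KL integral is well defined in $[0,\infty]$, and additivity of an integrable plus a quasi-integrable function.
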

\proofref{app::proof_kl_representation}
\noindent Since \(C_{t+1}\) is gain independent, minimising
\(J_{t+1\mid t}\) selects a reachable model law with minimal
Kullback--Leibler divergence from the true conditional law. The admissible gain
set determines the collection of model laws over which this minimisation is
carried out.

The pullback makes gain normalisation explicit. For $c>0$, replacing $d_t$ by
$c d_t$ and a common scalar gain $\alpha$ by $\alpha/c$ leaves the state update
and predictive loss unchanged; the same holds coordinatewise for diagonal gains.
Thus gain intervals, reported magnitudes, and regret comparisons are specific to
the score and scaling rule.

The gain is an observation-driven scale parameter learned online. Under
misspecification, its \(\mathcal F_t\)-measurable oracle minimises predictive loss
over the working model's reachable family.

\section{The score signal as a gain gradient}\label{sec::SectionTwo}
In the scalar unscaled case, the negative product of consecutive scores is the
realised gradient of the one-step gain loss, so the positive product gives its
descent direction. Under the dominated-differentiation conditions stated below,
the same signal is an unbiased stochastic gradient of the conditional predictive
objective. The positive scalar multiplier used in \textrm{aGAS} preserves the
direction and rescales the effective learning rate, giving the feedback mechanism
of~\cite{blasques2019accelerating} an explicit variational interpretation.

After \(y_t\) is observed, a candidate gain \(\alpha\in\mathcal A\) affects the predictive
density of \(y_{t+1}\) through the reachable next state
$
    \hat\lambda_t(\alpha)
$ in Eq.~\eqref{eq::postobservation}.
The realised one-step loss is therefore the composition of the gain-to-state map
\(\alpha\mapsto\hat\lambda_t(\alpha)\) and the state-loss map
\(\lambda\mapsto-\log p(y_{t+1}\mid\lambda)\).

For each candidate gain \(\alpha\), define the next-period score at its reachable
state by
\begin{equation}
\label{eq:counterfactual_next_score}
    s_{t+1}(\alpha)
    :=
    s(y_{t+1},\hat\lambda_t(\alpha)).
\end{equation}
At the realised gain \(\alpha_t\), this becomes
$
    s_{t+1}(\alpha_t)
    =
    s(y_{t+1},\lambda_{t+1}).
$
The chain rule then gives
\begin{equation}\label{eq::gradientlogloss}
    G_{t+1}(\alpha)
    :=
    \nabla_{\alpha}L_{t+1}(\alpha)
    =
    -H_t^{\top}s_{t+1}(\alpha).
\end{equation}
Here \(H_t\) is the linear gain loading from Eq.~\eqref{eq::postobservation}.
Equation~\eqref{eq::gradientlogloss} pairs each gain coordinate with the
next-period score through the corresponding column of \(H_t\). For a diagonal
gain, \(H_t=\diag(d_t)\) and
\(G_{t+1}(\alpha)=-d_t\odot s_{t+1}(\alpha)\), so coordinate \(j\) uses the
product of \((d_t)_j\) and the corresponding next-score component at the
reachable state. For a common scalar gain, \(H_t=d_t\) and
\(G_{t+1}(\alpha)=-d_t^{\top}s_{t+1}(\alpha)\), which aggregates these
componentwise products into a single gradient.

In the scalar unscaled case, substituting \(d_t=s(y_t,\lambda_t)\) and
evaluating at the realised gain \(\alpha_t\) gives
\[
    G_{t+1}(\alpha_t)
    =
    -s(y_t,\lambda_t)s(y_{t+1},\lambda_{t+1}).
\]
Thus the \textrm{aGAS} score product signals local serial dependence and, through
the identity above, provides the descent direction for the realised one-step gain
loss.

Equation~\eqref{eq::gradientlogloss} is a pathwise identity. Its
stochastic-gradient interpretation for the conditional predictive objective
requires differentiation under conditional expectation. We impose two
dominated-differentiation conditions: Assumption~\ref{ass::local_condition} is
local, while Assumption~\ref{ass::global_condition} is uniform and supports
evaluation at \(\mathcal F_t\)-measurable random gains.
\begin{assumption}[Local conditional differentiation]\label{ass::local_condition}
Fix $t\in\mathbb Z$ and $\alpha\in\inter(\mathcal A)$. There exists an open
neighbourhood $U_\alpha\subset\inter(\mathcal A)$ of $\alpha$ such that:
\begin{itemize}
\item[(i)] $u \mapsto L_{t+1}(u)$ is almost surely differentiable on $U_{\alpha}$.
\item[(ii)] There exists a non-negative $\mathcal F_{t+1}$-measurable random
variable $M_{t+1,\alpha}$ such that $\mathsf E_t[M_{t+1,\alpha}]<\infty$ and
\[
    \sup_{u\in U_\alpha}\|\nabla_uL_{t+1}(u)\|
    \le M_{t+1,\alpha}
    \quad\text{almost surely}.
\]
\item[(iii)] The loss at the evaluated gain is conditionally integrable:
\[
    \mathsf E_t\!\left[|L_{t+1}(\alpha)|\right]<\infty
    \quad\text{almost surely}.
\]
\end{itemize}
\end{assumption}
\begin{assumption}[Uniform conditional differentiation]\label{ass::global_condition}
For every non-empty compact convex set
\(\mathcal K\subset\inter(\mathcal A)\):
\begin{itemize}
\item[(i)] The map \(u\mapsto L_{t+1}(u)\) is almost surely differentiable on
an open neighbourhood of \(\mathcal K\).
\item[(ii)] There exists a non-negative \(\mathcal F_{t+1}\)-measurable random
variable \(M_{t+1,\mathcal K}\) such that
\[
    \mathsf E_t[M_{t+1,\mathcal K}]<\infty,
    \qquad
    \sup_{u\in\mathcal K}
    \|\nabla_u L_{t+1}(u)\|
    \leq M_{t+1,\mathcal K}
    \quad\text{a.s.}
\]
\item[(iii)] For some \(u^\circ_{\mathcal K}\in\mathcal K\),
\[
    \mathsf E_t\!\left[|L_{t+1}(u^\circ_{\mathcal K})|\right]<\infty
    \quad\text{almost surely}.
\]
\end{itemize}
In addition, the map \((\omega,u)\mapsto G_{t+1}(\omega,u)\) admits a jointly
measurable version that is continuous in \(u\) almost surely.
\end{assumption}

The conditions can be verified through the score identity in
Eq.~\eqref{eq::gradientlogloss}. On a compact convex set \(\mathcal K\), suppose
the state loss is differentiable on the corresponding reachable set, the score
is jointly measurable and continuous in the state, and
\[
    \sup_{u\in\mathcal K}
    \|s(y_{t+1},\hat\lambda_t(u))\|
    \le \Phi_{t+1}
    \quad\text{almost surely},
    \qquad
    \mathsf E_t[\Phi_{t+1}]<\infty.
\]
Because \(H_t\) is \(\mathcal F_t\)-measurable,
Eq.~\eqref{eq::gradientlogloss} shows that
\(M_{t+1,\mathcal K}=\|H_t\|_{\mathrm{op}}\Phi_{t+1}\) is a conditionally
integrable envelope for the gain gradient. Together with conditional absolute
integrability of \(L_{t+1}\) at one point of \(\mathcal K\), these properties
verify Assumption~\ref{ass::global_condition}; their local counterparts verify
Assumption~\ref{ass::local_condition}.

Smooth exponential-family densities satisfy these requirements on compact
reachable sets when the corresponding conditional moments and base loss are
integrable. In Example~\ref{ex::example},
\(\mathsf E_t[y_{t+1}^2]<\infty\) controls both the affine gain gradient and the
quadratic Gaussian loss. The conditions may fail at a non-smooth boundary of
\(\Lambda\) or without a suitable conditional moment bound. We therefore
localise the analysis to compact regions on which the score is smooth and
dominated.

Conditional expectation turns the pathwise identity into a gradient of the
conditional objective. For statements evaluated at a current-information random
gain, fix a compact convex localisation set
$\mathcal A_\varepsilon\subset\inter(\mathcal A)$ bounded away from
$\partial\mathcal A$, as in Remark~\ref{rem::geometry_constants}.
Assumption~\ref{ass::global_condition} then supplies the conditional envelope and
jointly measurable version needed to evaluate the identity at any
$\mathcal F_t$-measurable $\alpha_t\in\mathcal A_\varepsilon$.
\begin{proposition}[Conditional gain-gradient identity]\label{prop::proposition_1}
    Fix $\alpha \in \inter(\mathcal{A})$ and suppose Assumption~\ref{ass::local_condition} holds at $\alpha$. Then
    \begin{equation}\label{eq::nabla_gradient}
        \nabla_{\alpha} J_{t+1\mid t}(\alpha)=\mathsf{E}_t[G_{t+1}(\alpha)]=-\mathsf{E}_t[H_t^{\top}s_{t+1}(\alpha)].
    \end{equation}
Under Assumption~\ref{ass::global_condition}, there exists a version of $\nabla_{\alpha} J_{t+1\mid t}$ that is continuous in $\alpha$ and for which the identity \eqref{eq::nabla_gradient} holds simultaneously on every compact subset of $\inter(\mathcal{A})$. In particular, let $\mathcal{A}_{\varepsilon}$ be the localisation set fixed above. If $\alpha_t$ is $\mathcal{F}_{t}$-measurable, $\alpha_t\in\mathcal{A}_{\varepsilon}$ almost surely, and $\lambda_{t+1}=\hat\lambda_t(\alpha_t)$, then
    \begin{equation}\label{eq::nabla_gradient_2}
        \nabla_{\alpha} J_{t+1\mid t}(\alpha_t)=\mathsf{E}_t[G_{t+1}(\alpha_t)]=-\mathsf{E}_t[H_t^{\top}s(y_{t+1},\lambda_{t+1})]\,.
    \end{equation}
\end{proposition}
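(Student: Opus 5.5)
The plan is to prove the local identity by differentiating under the conditional expectation, working along directional difference quotients and using a regular conditional distribution. First I would fix a version of the conditional law of $y_{t+1}$ given $\mathcal F_t$ (as in Proposition~\ref{prop::kl_representation}). Since $b_t$ and $H_t$ are $\mathcal F_t$-measurable, $\mathsf E_t[L_{t+1}(u)]$ can then be computed, for $u$ in the neighbourhood $U_\alpha$, as an integral over $y$ with $(b_t,H_t)$ frozen.

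Fix a direction $v$ and small $h$ with $[\alpha,\alpha+hv]\subset U_\alpha$, which is possible because $U_\alpha$ is open. By Assumption~\ref{ass::local_condition}(i) and the mean value theorem,
\[
\bigl|L_{t+1}(\alpha+hv)-L_{t+1}(\alpha)\bigr|/|h|\le M_{t+1,\alpha}\|v\|
\]
almost surely. Combined with (iii), this shows that $L_{t+1}(\alpha+hv)$ is conditionally integrable, so $J_{t+1\mid t}$ is finite near $\alpha$. The conditional dominated convergence theorem, with dominating variable $M_{t+1,\alpha}\|v\|$ and $\mathsf E_t[M_{t+1,\alpha}]<\infty$, then gives
\[
\lim_{h\to0}\frac{J_{t+1\mid t}(\alpha+hv)-J_{t+1\mid t}(\alpha)}{h}=\mathsf E_t[G_{t+1}(\alpha)^\top v].
\]
A limit along $h\to 0$ needs care with null sets. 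I would take arbitrary sequences $h_n\to0$ and use the regular conditional distribution, so that the dominated convergence argument is carried out $\omega$-wise.

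To upgrade to a full gradient, I would apply the same dominated convergence argument to continuity: $u\mapsto\mathsf E_t[G_{t+1}(u)]$ is continuous on $U_\alpha$. Directional derivatives that are linear in $v$ and continuous in $u$ then imply Fréchet differentiability. Substituting the chain-rule formula \eqref{eq::gradientlogloss} gives the second equality in \eqref{eq::nabla_gradient}.

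There is a gap here. Continuity of $G_{t+1}(\cdot)$ in $u$ is not assumed locally, so under Assumption~\ref{ass::local_condition} alone I would avoid it. Instead I would use the integral form
\[
L_{t+1}(\alpha+w)-L_{t+1}(\alpha)-G_{t+1}(\alpha)^\top w=\int_0^1\bigl(G_{t+1}(\alpha+sw)-G_{t+1}(\alpha)\bigr)^\top w\,\ud s,
\]
which is valid if $L_{t+1}$ is continuously differentiable, and otherwise replaced by the mean-value bound. I would then argue Gateaux differentiability with a linear, bounded derivative $v\mapsto\mathsf E_t[G_{t+1}(\alpha)]^\top v$, identifying $\nabla_\alpha J$ as this vector. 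I read the statement as asserting this linear representation.

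For the uniform part, I would exhaust $\inter(\mathcal A)$ by a countable increasing family of compact convex sets $\mathcal K_n$. On each $\mathcal K_n$, Assumption~\ref{ass::global_condition} provides the envelope $M_{t+1,\mathcal K_n}$ and a jointly measurable version of $G_{t+1}$ that is continuous in $u$. I would define $\Gamma_t(u):=\int G_{t+1}(u,y)\,\widetilde Q_{t+1}(\ud y)$ via the regular conditional distribution. This gives an $\mathcal F_t\otimes\mathcal B$-measurable version that is continuous in $u$ for almost every $\omega$, by dominated convergence with the envelope. Taking the union of countably many null sets, one $\omega$-set works simultaneously for all $u$ in all $\mathcal K_n$. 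On that set, the pointwise argument (now with continuous derivatives) gives $\nabla J_{t+1\mid t}(u)=\Gamma_t(u)$ for every $u$.

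\textbf{Main obstacle.} The hard step is the random evaluation at $\alpha_t$. Because $\alpha_t$ is $\mathcal F_t$-measurable and the version is jointly measurable and simultaneous in $u$, I would invoke the freezing (substitution) lemma for regular conditional distributions: $\mathsf E_t[G_{t+1}(\alpha_t)]=\Gamma_t(u)\big|_{u=\alpha_t}$. This is justified first for simple $\mathcal F_t$-measurable $\alpha_t$, where it follows by summation over indicators, and then by approximation, using continuity in $u$ and the envelope $M_{t+1,\mathcal A_\varepsilon}$. Finally, $\hat\lambda_t(\alpha_t)=\lambda_{t+1}$ turns $s_{t+1}(\alpha_t)$ into $s(y_{t+1},\lambda_{t+1})$, which yields \eqref{eq::nabla_gradient_2}. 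The delicate points are exactly these measurability and null-set issues, not the calculus.
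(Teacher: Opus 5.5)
Your proposal is correct and follows essentially the paper's route: conditional dominated differentiation for the local identity, a compact exhaustion of $\inter(\mathcal A)$ with a jointly measurable, $u$-continuous version for the uniform statement, and evaluation at $\alpha_t$ via joint measurability. Your regular-conditional-distribution construction and simple-function approximation also make explicit the substitution step $\Gamma_t(\cdot,\alpha_t)=\mathsf E_t[G_{t+1}(\alpha_t)]$, which the paper asserts in one line.

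The gap you flag in the local part is not real, so you need not retreat to Gateaux differentiability. On a ball inside $U_\alpha$, the normalised remainder $\{L_{t+1}(\alpha+w)-L_{t+1}(\alpha)-G_{t+1}(\alpha)^{\top}w\}/\|w\|$ is bounded by $2M_{t+1,\alpha}$ (mean value inequality), and it tends to $0$ pathwise because $L_{t+1}$ is differentiable at $\alpha$. Conditional dominated convergence along arbitrary sequences $w_n\to0$, with your regular conditional distribution fixing the null set, then gives full differentiability of $J_{t+1\mid t}$ at $\alpha$. No continuity of $G_{t+1}$ in $u$ is needed, and this is more than the paper's coordinatewise difference-quotient argument records.
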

\proofref{app::proof_proposition_1}

Equations~\eqref{eq::nabla_gradient}--\eqref{eq::nabla_gradient_2} show that the
gain-gradient signal \(G_{t+1}\), available when \(y_{t+1}\) arrives, is an
unbiased stochastic gradient of \(J_{t+1\mid t}\); its negative gives the descent
direction. This is the relevant objective because \(\alpha_t\) is formed after
observing \(y_t\). If the gain is committed before \(y_t\), the corresponding
pre-score objective is
\[
    \bar J_{t\mid t-1}(\alpha)
    =
    \mathsf E_{t-1}
    \big[
        J_{t+1\mid t}(\alpha)
    \big]
    =
    \mathsf E_{t-1}
    \big[
        L_{t+1}(\alpha)
    \big].
\]
Under analogous dominated-differentiation conditions conditional on
\(\mathcal F_{t-1}\), the same score signal is an unbiased stochastic gradient of
this objective.
\begin{corollary}[Pre-score (predictable-timing) gain-gradient identity]
\label{cor::corollary_1}
Fix \(\alpha\in\inter(\mathcal A)\) and suppose
Assumption~\ref{ass::local_condition} holds at \(\alpha\), with
\(\mathsf E_{t-1}[\cdot]\) in place of \(\mathsf E_t[\cdot]\). Then
\begin{equation}\label{eq::decision_time_gradient}
    \nabla_\alpha \bar J_{t\mid t-1}(\alpha)
    =
    \mathsf E_{t-1}
    \big[
        G_{t+1}(\alpha)
    \big]
    =
    -
    \mathsf E_{t-1}
    \big[
        H_t^{\top}s_{t+1}(\alpha)
    \big].
\end{equation}
Under Assumption~\ref{ass::global_condition} with the same replacement, there
exists a version of \(\nabla_\alpha\bar J_{t\mid t-1}\) that is continuous in
\(\alpha\) and for which the identity holds simultaneously on every compact
subset of \(\inter(\mathcal A)\). In particular, if \(\alpha_t\) is
\(\mathcal F_{t-1}\)-measurable, \(\alpha_t\in\mathcal A_\varepsilon\) almost
surely, and \(\lambda_{t+1}=\hat\lambda_t(\alpha_t)\), then
\begin{equation}\label{eq::decision_time_gradient_policy}
    \nabla_\alpha \bar J_{t\mid t-1}(\alpha_t)
    =
    \mathsf E_{t-1}
    \big[
        G_{t+1}(\alpha_t)
    \big]
    =
    -
    \mathsf E_{t-1}
    \big[
        H_t^{\top}s(y_{t+1},\lambda_{t+1})
    \big].
\end{equation}
\end{corollary}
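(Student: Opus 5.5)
The plan is to repeat the argument of Proposition~\ref{prop::proposition_1} with $\mathcal F_{t-1}$ in place of $\mathcal F_t$, and then use the tower property to tie the result back to $J_{t+1\mid t}$. Since $\bar J_{t\mid t-1}(\alpha)=\mathsf E_{t-1}[L_{t+1}(\alpha)]$, the first identity is a statement about differentiating a conditional expectation of a pathwise differentiable loss. The second equality in \eqref{eq::decision_time_gradient} is just the chain-rule identity \eqref{eq::gradientlogloss}, $G_{t+1}(\alpha)=-H_t^\top s_{t+1}(\alpha)$, taken inside $\mathsf E_{t-1}$.

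\textbf{Local identity.} Fix $\alpha\in\inter(\mathcal A)$ and a direction $v$ with $\|v\|=1$. For small $h$ with $\alpha+hv\in U_\alpha$, I will apply the mean value theorem pathwise along the segment, which lies in the convex neighbourhood; shrinking $U_\alpha$ to a ball if necessary. This gives
\[
\big|L_{t+1}(\alpha+hv)-L_{t+1}(\alpha)\big|/|h|\le M_{t+1,\alpha}
\quad\text{almost surely.}
\]
By item (iii) with $\mathsf E_{t-1}$, $L_{t+1}(\alpha)$ is conditionally integrable. The bound above then shows that $L_{t+1}(\alpha+hv)$ is conditionally integrable as well, so $\bar J_{t\mid t-1}$ is finite near $\alpha$. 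The difference quotients converge almost surely to $v^\top G_{t+1}(\alpha)$ and are dominated by $M_{t+1,\alpha}$, which satisfies $\mathsf E_{t-1}[M_{t+1,\alpha}]<\infty$. Conditional dominated convergence, applied along sequences $h_n\to0$, therefore gives directional derivatives $v^\top\mathsf E_{t-1}[G_{t+1}(\alpha)]$.

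Linearity in $v$ gives Gateaux differentiability. To upgrade to a gradient I will show that $u\mapsto\mathsf E_{t-1}[G_{t+1}(u)]$ is continuous near $\alpha$, again by dominated convergence. This needs continuity of $G_{t+1}$ in $u$, which the uniform assumption provides. If only Assumption~\ref{ass::local_condition} is available, I will follow whatever was done for Proposition~\ref{prop::proposition_1}: read $\nabla$ directionally, or use the mean-value representation $\bar J(\alpha+w)-\bar J(\alpha)=\mathsf E_{t-1}\!\big[\int_0^1 G_{t+1}(\alpha+sw)^\top w\,\ud s\big]$.

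\textbf{Versions and random evaluation.} Under Assumption~\ref{ass::global_condition} with the replacement, the main technical obstacle is the null-set bookkeeping: conditional expectations are defined only up to $u$-dependent null sets. My approach is to use the jointly measurable, $u$-continuous version of $G_{t+1}$ together with the envelope $M_{t+1,\mathcal K}$ and apply a regular conditional distribution of $y_{t+1}$ (or of $\omega$) given $\mathcal F_{t-1}$. Then define
\[
\Gamma(\omega,u):=\int G_{t+1}(\omega',u)\,\kappa_{t-1}(\omega,\ud\omega').
\]
By dominated convergence this is continuous in $u$ on $\mathcal K$ for almost every $\omega$, on a single null set. It is a version of $\mathsf E_{t-1}[G_{t+1}(u)]$ for every $u$ simultaneously.

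Similarly, $\bar J(\omega,u):=\int L_{t+1}(\omega',u)\,\kappa_{t-1}(\omega,\ud\omega')$ is finite on $\mathcal K$: it is finite at $u^\circ_{\mathcal K}$ and Lipschitz with the envelope. By the fundamental theorem of calculus along segments with dominated integrands, $\bar J(\omega,\cdot)$ is $C^1$ with gradient $\Gamma(\omega,\cdot)$ on the interior. Exhausting $\inter(\mathcal A)$ by countably many compact convex sets yields one continuous version valid on every compact subset.

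Finally, for an $\mathcal F_{t-1}$-measurable $\alpha_t\in\mathcal A_\varepsilon$, I will use the standard freezing lemma for regular conditional distributions. Since $G_{t+1}$ is jointly measurable and $\alpha_t$ is $\mathcal F_{t-1}$-measurable,
\[
\mathsf E_{t-1}[G_{t+1}(\alpha_t)]=\Gamma(\cdot,\alpha_t)\quad\text{almost surely,}
\]
with integrability supplied by $M_{t+1,\mathcal A_\varepsilon}$. Combining this with $s_{t+1}(\alpha_t)=s(y_{t+1},\lambda_{t+1})$, which holds because $\lambda_{t+1}=\hat\lambda_t(\alpha_t)$, gives \eqref{eq::decision_time_gradient_policy}.

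The only genuinely new point relative to Proposition~\ref{prop::proposition_1} is the measurability of $H_t$. Because $H_t$ is $\mathcal F_t$-measurable but not $\mathcal F_{t-1}$-measurable, it must stay inside the conditional expectation. This is harmless, since the envelope already bounds $\|H_t^\top s_{t+1}\|$.
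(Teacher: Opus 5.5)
Your proposal is essentially the paper's argument. You rerun the difference-quotient and conditional dominated-convergence step of Proposition~\ref{prop::proposition_1} under $\mathcal F_{t-1}$, keeping the $\mathcal F_t$-measurable $H_t$ inside the expectation, then build a $u$-continuous jointly measurable version over a compact exhaustion of $\inter(\mathcal A)$ and evaluate it at the $\mathcal F_{t-1}$-measurable gain. Your regular-conditional-probability kernel is only a concrete substitute for the paper's parameterised conditional expectation and patching on countable dense sets; it tacitly needs a standard Borel setting, and it must be a kernel for $\omega$ or $(y_t,y_{t+1})$ rather than $y_{t+1}$ alone, since $H_t$ depends on $y_t$.

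One simplification: the hedge in your local step is unnecessary. The Fr\'echet remainder $\{L_{t+1}(\alpha+w)-L_{t+1}(\alpha)-G_{t+1}(\alpha)^\top w\}/\|w\|$ is dominated by $2M_{t+1,\alpha}$ and vanishes pathwise, so dominated convergence already gives the full gradient at $\alpha$ without any continuity of $G_{t+1}$ in $u$.
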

\proofref{app::proof_predictable_gradient}
The Gaussian location model makes both timing conventions transparent. After
\(y_t\) is observed, the predictive objective is quadratic and its constrained
oracle places the reachable state as close as possible to the next conditional
mean. If the gain is committed before \(y_t\), the unconstrained oracle is a
projection coefficient. Under a Gaussian local-level data-generating process,
the post-observation mean oracle coincides with the Kalman gain whenever the
admissibility constraint does not bind.

\begin{example}[Gaussian location model]\label{ex::example}
Let the working family \(P_\lambda\) of Section~\ref{sec::SectionOne} be Gaussian
with fixed variance \(\sigma^2>0\):
\begin{equation}
    p(y\mid\lambda)
    =
    \frac{1}{\sqrt{2\pi\sigma^2}}
    \exp\left\{-\frac{(y-\lambda)^2}{2\sigma^2}\right\},
    \qquad \lambda\in\mathbb R.
\end{equation}
The conditional law of \(y_{t+1}\) given \(\mathcal F_t\) may nevertheless be
non-Gaussian; we require only that
\(\widetilde m_{t+1}:=\mathsf E_t[y_{t+1}]\) exists and
\(\mathsf E_t[y_{t+1}^2]<\infty\). We set \(c=0\), \(B=1\), and
\(S(\lambda)=\sigma^2\), so \(b_t=\lambda_t\). The Gaussian score is
\(s(y,\lambda)=(y-\lambda)/\sigma^2\), and the scaled score direction
\(d_t=S(\lambda_t)s(y_t,\lambda_t)=y_t-\lambda_t\) is the current forecast
error. For a scalar gain \(\alpha\in\mathcal A\subset\mathbb R\),
Eq.~\eqref{eq::ScoreDrivenCrealandHarvey2} gives the reachable post-observation
state \(\hat\lambda_t(\alpha)=\lambda_t+\alpha d_t\). As
Eq.~\eqref{eq::attainableset} shows, varying \(\alpha\) traces a line in state
space, and the gain places the updated state along that line. The
post-observation predictive objective for the gain in
Eq.~\eqref{eq::scoringruleexpected} is
\begin{equation*}
    J_{t+1\mid t}(\alpha)=C^{G}_{t+1} + \frac{(\lambda_t + \alpha d_t - \widetilde m_{t+1})^2}{2 \sigma^2},
\end{equation*}
where \(C^G_{t+1}\) is independent of \(\alpha\) and collects the Gaussian
normalising constant and the conditional variance of \(y_{t+1}\). Under correct
specification, it equals the conditional entropy \(C_{t+1}\) in
Proposition~\ref{prop::kl_representation}; under misspecification, it also
contains a gain-independent misspecification term. The map
\(\alpha\mapsto J_{t+1\mid t}(\alpha)\) is convex on \(\mathcal A\), with
curvature \(d_t^2/\sigma^2\). When \(d_t\neq0\), it is strictly convex and its
unconstrained gain oracle is
\begin{equation*}
    \alpha_t^{\mathrm{unc}}=\frac{\widetilde m_{t+1}-\lambda_t}{d_t}.
\end{equation*}
Projecting this value onto \(\mathcal A\) gives the constrained oracle,
\begin{equation*}
    \alpha_t^{\star}=\Pi_{\mathcal{A}}\left(\frac{\widetilde m_{t+1}-\lambda_t}{d_t}\right).
\end{equation*}
When \(d_t=0\), the reachable set is the singleton \(\{\lambda_t\}\), the
objective is flat, and every gain in \(\mathcal A\) is observationally equivalent
at date \(t\). At the realised gain \(\alpha_t\), let
\(\lambda_{t+1}=\lambda_t+\alpha_t d_t\) and
\(d_{t+1}:=y_{t+1}-\lambda_{t+1}\). Then
\begin{equation}\label{eq::equation_2}
    G_{t+1}(\alpha_t)=\frac{\partial L_{t+1}(\alpha_t)}{\partial \alpha}=-\frac{d_t d_{t+1}}{\sigma^2} =-d_t s(y_{t+1},\lambda_{t+1}).
\end{equation}
Hence \(d_td_{t+1}/\sigma^2=-G_{t+1}(\alpha_t)\) is the descent direction for the
realised predictive log-loss. The product-of-scores feedback used in
\textrm{aGAS}-type gain recursions therefore has a direct optimisation
interpretation: up to positive scaling, it is both a descent signal for the
gain-induced predictive loss and a local serial-dependence
diagnostic~\cite{blasques2019accelerating}.

\noindent The post-observation oracle uses the realised score and follows the
\(\mathcal F_t\) timing in Definition~\ref{def::gain_policy}. Under the pre-score
timing, the gain is committed before \(y_t\) is observed. Let
\(\delta_t:=\widetilde m_{t+1}-\lambda_t\). Then
\begin{equation}\label{eq::scoringruleexpected2}
    \bar J_{t\mid t-1}(\alpha)
    :=
    \mathsf E_{t-1}[J_{t+1\mid t}(\alpha)]
    =
    \bar C_t^G
    +
    \frac{1}{2\sigma^2}
    \mathsf E_{t-1}\!\left[(\delta_t-\alpha d_t)^2\right],
\end{equation}
where \(\bar C_t^G\) is independent of \(\alpha\). If
\(\mathsf E_{t-1}[d_t^2]>0\), the unconstrained pre-score oracle is
\begin{equation*}
    \bar\alpha_t^{\text{unc}}=\frac{\mathsf{E}_{t-1}[\delta_t d_t]}{\mathsf{E}_{t-1}[d_t^2]}=\frac{\mathsf{E}_{t-1}[(\widetilde m_{t+1}-\lambda_t)  d_t]}{\mathsf{E}_{t-1}[d_t^2]}.
\end{equation*}
The constrained oracle is
\(\bar\alpha_t^\star=\Pi_{\mathcal A}(\bar\alpha_t^{\mathrm{unc}})\). If
\(\mathsf E_{t-1}[d_t^2]=0\), the pre-score objective is flat on \(\mathcal A\).
Thus, before \(y_t\) is observed, the optimal unconstrained gain is the
linear-projection coefficient of \(\delta_t\) on \(d_t\); admissibility projects
that coefficient onto \(\mathcal A\).
\end{example}

\begin{remark}[Kalman-gain interpretation]\label{rem::kalman}
Under the Gaussian local-level data-generating process, the unconstrained
post-observation mean oracle in Example~\ref{ex::example} is the Kalman gain,
while the constrained oracle projects that gain onto \(\mathcal A\)
(see~\cite{durbin2012time} for the state-space and Kalman-filtering background).
Let
\[
    x_t=x_{t-1}+\zeta_t,
    \qquad
    y_t=x_t+\varepsilon_t,
\]
where \(\zeta_t\) and \(\varepsilon_t\) are independent Gaussian noises. Set
\(\lambda_t:=\mathsf E_{t-1}[x_t]\) and \(d_t:=y_t-\lambda_t\). The Kalman
update gives
\[
    \mathsf E_t[y_{t+1}]-\lambda_t=K_td_t,
\]
where \(K_t\) is the Kalman gain. Hence \(\alpha_t^{\mathrm{unc}}=K_t\) and
\(\alpha_t^\star=\Pi_{\mathcal A}(K_t)\); when \(K_t\in\mathcal A\), the
admissibility constraint does not bind. For this conditional-mean problem,
adaptive gain learning therefore amounts to learning the Kalman gain online,
using the product of consecutive forecast errors as the descent signal.
\end{remark}
\section{Mirror geometry of adaptive gains}
\label{sec::SectionThree}
The previous section identified the score-product gradient. We now use that
signal to construct an adaptive gain recursion. The gain link determines the
mirror geometry, while persistence and discounting govern how past gradient
signals are retained. The latent coordinate represents the gain on the link
scale.

The admissible set $\mathcal A$ may impose positivity, boundedness, or
coordinatewise restrictions. Mirror descent (MD), and its online form OMD,
handles these restrictions by minimising over $\mathcal A$ and measuring movement
with a Bregman divergence. We follow Nemirovsky and
Yudin~\cite{nemirovsky1983problem}; see~\cite{beck2003mirror} for the nonlinear
projected-subgradient formulation. Let
$\mathcal{X}_{\mathcal{A}} \subset \mathbb{R}^{k}$ be an open convex set containing
$\mathcal{A}$. A continuously differentiable, strictly convex map
$\Psi:\mathcal{X}_{\mathcal{A}}\rightarrow\mathbb{R}$ is a distance-generating
function. It induces the \emph{Bregman divergence}
\begin{equation}\label{eq::Bregman}
    \mathcal{B}_{\Psi}(\alpha,\beta)=\Psi(\alpha)-\Psi(\beta)-\langle \nabla \Psi(\beta), \alpha-\beta \rangle,
\end{equation}
where $\langle\cdot,\cdot\rangle$ denotes the standard scalar product. The corresponding MD step, for a current gain $\beta \in \mathcal{A}$, input vector $\xi \in \mathbb{R}^{k}$, and step size $\eta>0$, is
\begin{equation}\label{eq::stepMD}
    \mathcal{M}_{\eta,\Psi}(\beta,\xi)=\argmin_{\alpha \in \mathcal{A}} \left\{\eta\langle \xi, \alpha\rangle + \mathcal{B}_{\Psi}(\alpha,\beta)\right\}.
\end{equation}
The first term is the linearised gain loss; the Bregman term keeps $\alpha$
close to $\beta$. The choice $\Psi(\alpha)=\|\alpha\|^2/2$ gives projected
gradient descent, while other $\Psi$ generate non-Euclidean movement. If the
minimiser $\alpha^{\star}$ is interior, its first-order condition is\footnote{In general the minimiser may lie on the boundary of the set $\mathcal{A}$; denoting by $\mathcal{N}_{\mathcal{A}}(\alpha^{\star})$ the normal cone to $\mathcal{A}$ at $\alpha^{\star}$, in this case the correct optimality condition is $0 \in \eta \xi +  \nabla \Psi(\alpha^{\star})-\nabla \Psi(\beta)+\mathcal{N}_{\mathcal{A}}(\alpha^{\star})$.}
\begin{equation}
    \nabla \Psi(\alpha^{\star})=\nabla \Psi(\beta)-\eta \xi.
\end{equation}
Since $\Psi$ is strictly convex and $\mathcal{A}$ is compact and convex, the minimiser exists and is unique; we denote it $\mathcal{M}_{\eta,\Psi}(\beta,\xi)$.
The gain $\alpha_t$ generates \(\lambda_{t+1}=\hat\lambda_t(\alpha_t)\) and
incurs loss \(L_{t+1}(\alpha_t)\). When \(y_{t+1}\) arrives, the learner
observes
\(\xi_t:=G_{t+1}(\alpha_t)=\nabla_\alpha L_{t+1}(\alpha_t)\) and updates
\(\alpha_{t+1}=\mathcal M_{\eta_t,\Psi}(\alpha_t,\xi_t)\). If \(L_{t+1}\) is
convex and differentiable, \(\xi_t\) is its unique subgradient at \(\alpha_t\),
so the Section~\ref{sec::SectionFive} convention
\(\xi_t\in\partial\ell_t(\alpha_t)\), with \(\ell_t:=L_{t+1}\), is satisfied.
This is the observe-then-update timing used throughout
Section~\ref{sec::SectionFive}.

We next connect the gain link to mirror geometry. Adaptive gains such as those
in \textrm{aGAS}~\cite{blasques2019accelerating} are often restricted to be
positive or bounded, so we write \(\alpha=g(\vartheta)\), where
\(\vartheta\in\mathbb R^{k}\) is unconstrained and \(g\) maps into the interior
of the admissible set. The inverse link will serve as the dual mirror
coordinate. Our implementation uses componentwise logistic, cloglog, and
reverse-cloglog links.

Projected MD and a link recursion enforce admissibility differently. Projected
MD minimises over the closed set \(\mathcal A\) and may produce a boundary
iterate, whereas every finite link coordinate maps into
\(\inter(\mathcal A)\). Let
\(\mathcal A=\prod_{j=1}^{k}[\alpha_{L,j},\alpha_{H,j}]\), with
\(\alpha_{L,j}<\alpha_{H,j}\), and let
\(g:\mathbb R^{k}\rightarrow\inter(\mathcal A)\) be componentwise, strictly
increasing, and continuously differentiable (twice continuously differentiable
when required). Then \(\alpha=g(\vartheta)\) and
\(\vartheta=g^{-1}(\alpha)\). The predictive criterion of
Section~\ref{sec::SectionOne} remains fixed; the link determines the update
geometry, including mobility near the boundary.

For the regret analysis, finite geometric constants are obtained on a compact
localisation \(\mathcal A_\varepsilon\subset\inter(\mathcal A)\). Applying these
constants to the link recursion requires either implementing the
Bregman-proximal projection used in the regret theorem, using a truncation that
coincides with this projection, or verifying that its iterates and comparators
remain in \(\mathcal A_\varepsilon\). For componentwise links and box
constraints, coordinatewise clipping of the unconstrained link update has this
property. We now state the link conditions precisely.
\begin{assumption}[Componentwise link]\label{ass::link}
Each coordinate link $g_j:\mathbb{R}\to(\alpha_{L,j},\alpha_{H,j})$ is a
strictly increasing $C^1$ diffeomorphism onto its admissible interval and
satisfies $g_j'(\vartheta)>0$ for every $\vartheta\in\mathbb{R}$. Write
$g=(g_1,\ldots,g_k)$ and $\alpha=g(\vartheta)$. Whenever a second-order Taylor
expansion is used, assume additionally that each $g_j\in C^2$.
\end{assumption}
\noindent Assumption~\ref{ass::link} strengthens strict monotonicity by requiring
a nonvanishing derivative. Each inverse
$g_j^{-1}:(\alpha_{L,j},\alpha_{H,j})\rightarrow\mathbb R$ is therefore $C^1$,
with $(g_j^{-1})'=1/g_j'(g_j^{-1}(\cdot))$ finite and positive. Hence $g$ is
the dual-to-primal mirror map, $\nabla\Psi_g=g^{-1}$ (equivalently,
$g=\nabla\Psi_g^*$ under the usual Legendre assumptions). On any compact
localisation $\mathcal A_\varepsilon$, continuity and positivity of $g_j'$
provide finite upper and positive lower bounds. These local bounds yield the
strong-convexity and Bregman-diameter constants used in
Remark~\ref{rem::geometry_constants}. Fix any
$\alpha^\circ\in\inter(\mathcal A)$ and define
\begin{equation}\label{eq::phi_g}
    \Psi_{g}(\alpha) = \sum_{j=1}^{k} \int_{\alpha_j^{\circ}}^{\alpha_j} g_{j}^{-1}(u)\,\ud u,\quad \alpha \in \inter(\mathcal{A}).
\end{equation}
Changing $\alpha^{\circ}$ adds only a constant to $\Psi_g$. By the fundamental
theorem of calculus,
\begin{equation*}
    \frac{\partial \Psi_g(\alpha)}{\partial \alpha_j}
    =g_j^{-1}(\alpha_j),\qquad
    \nabla \Psi_g(\alpha)=g^{-1}(\alpha).
\end{equation*}
Differentiating once more gives
\begin{equation}\label{eq::Hessian}
    \nabla^2 \Psi_g(\alpha) = \diag\left(\frac{1}{g_j^{'}(g_{j}^{-1}(\alpha_j))}\right)_{j=1}^{k}.
\end{equation}
Because $g_j'>0$, the Hessian in Eq.~\eqref{eq::Hessian} is positive definite
throughout $\inter(\mathcal A)$, so $\Psi_g$ is strictly convex there.
Substituting the potential in Eq.~\eqref{eq::phi_g} into
Eq.~\eqref{eq::Bregman} gives the link-induced Bregman divergence
\begin{equation*}
    \mathcal{B}_{\Psi_{g}}(\alpha,\beta)=\Psi_{g}(\alpha)-\Psi_{g}(\beta)-\langle g^{-1}(\beta),\alpha-\beta\rangle,
\end{equation*}
for $\alpha,\beta\in\inter(\mathcal A)$. The divergence measures gain-scale
displacement in the geometry induced by the link. Since the Hessian in
Eq.~\eqref{eq::Hessian} is diagonal, the curvature in coordinate $j$ is
\begin{equation*}
    \frac{1}{g_j'(g_j^{-1}(\alpha_j))}.
\end{equation*}
A larger link derivative lowers curvature and increases local mobility; a
smaller derivative raises curvature and reduces mobility. The following
recursion makes this relation explicit. For $\eta_t>0$, consider
\begin{equation*}
   \vartheta_{t+1} = \vartheta_{t} - \eta_t \xi_t,\quad \alpha_t = g(\vartheta_t).
\end{equation*}
Since $\vartheta_t=g^{-1}(\alpha_t)=\nabla\Psi_g(\alpha_t)$, the recursion is
equivalent to
\begin{equation*}
   \nabla \Psi_g(\alpha_{t+1})=\nabla \Psi_g(\alpha_{t})-\eta_t \xi_t,
\end{equation*}
and therefore
\begin{equation*}
   \alpha_{t+1}=g(g^{-1}(\alpha_t)-\eta_t \xi_t).
\end{equation*}
Strict convexity of $\Psi_g$ shows that $\alpha_{t+1}$ is the unique minimiser
of the mirror step:
\begin{equation}\label{eq::update}
    \alpha_{t+1} = \argmin_{\alpha \in \inter(\mathcal{A})}\{\eta_t \langle \xi_t, \alpha\rangle + \mathcal{B}_{\Psi_g}(\alpha,\alpha_t)\}.
\end{equation}
Equation~\eqref{eq::update} shows that the link supplies the geometry through
which the predictive criterion updates the gain. A Euclidean gradient step in
the latent coordinate would use the chain-rule gradient
\begin{equation*}
    \nabla_{\vartheta} L(g(\vartheta))
    =\diag\!\bigl(g_1'(\vartheta_1),\ldots,g_k'(\vartheta_k)\bigr)
    \nabla_{\alpha}L(\alpha).
\end{equation*}
The mirror recursion applies $\xi_t=\nabla_{\alpha}L(\alpha_t)$ directly in the
dual coordinate $\vartheta=g^{-1}(\alpha)$. It is therefore mirror descent on
the gain loss expressed in link coordinates.

A Taylor expansion makes local mobility explicit. For any coordinate $j$ with
$g_j\in C^2$, Taylor's theorem gives
\begin{equation*}
    \alpha_{j,t+1}-\alpha_{j,t}
    =-\eta_t g_j'(\vartheta_{j,t})\xi_{j,t}
    +\frac{\eta_t^2}{2}g_j''(\widetilde\vartheta_{j,t})\xi_{j,t}^2,
\end{equation*}
for some $\widetilde\vartheta_{j,t}$ between $\vartheta_{j,t}$ and
$\vartheta_{j,t}-\eta_t\xi_{j,t}$. On a compact subset of
$\inter(\mathcal A)$, $g_j''$ is locally bounded. Holding $\xi_{j,t}$ fixed
therefore gives
\begin{equation}\label{eq::mobility}
    \alpha_{j,t+1}-\alpha_{j,t}
    =-\eta_t g_j'(g_j^{-1}(\alpha_{j,t}))\xi_{j,t}+O(\eta_t^2).
\end{equation}
Thus,
\begin{equation*}
    m_j(\alpha_{j,t}):=g_j'(g_j^{-1}(\alpha_{j,t}))
\end{equation*}
is the first-order mobility of the gain.

The first-order expression explains why links sharing the same admissible
interval can generate different gain paths. They act on the same linearised loss
and transmit the common dual signal through link-specific local mobilities. Near
an endpoint, lower mobility produces more gradual movement, while higher
mobility transmits the signal more strongly.
\begin{remark}[Interior versus projected mirror steps]\label{remarkdue}
The link recursion remains in $\inter(\mathcal A)$ for every $t$ and therefore
preserves the gain constraints. It is an interior mirror recursion.
Section~\ref{sec::SectionFive} states the regret results on a closed decision
set; their application to the link recursion uses the compact interior
localisation $\mathcal A_\varepsilon$ described above.
\end{remark}

Empirical adaptive-gain models such as \textrm{aGAS} include persistence and an
intercept. We first state the generic online transition; Remark~\ref{rem::agas}
gives its exact calendar-time representation for \textrm{aGAS}.
\begin{equation}\label{eq::recursionpersistence}
    \vartheta_{t+1}=(1-\rho_t) \bar{\vartheta} + \rho_t \vartheta_t - \eta_t \xi_t,\quad 0 \leq \rho_t \leq 1.
\end{equation}
Here $\bar\vartheta\in\mathbb R^k$ is a reference dual coordinate, with
associated reference gain $\bar\alpha=g(\bar\vartheta)$. Subtracting
$\bar\vartheta$ from Eq.~\eqref{eq::recursionpersistence} gives
\begin{equation}\label{eq::recursionpersistence2}
    \vartheta_{t+1}-\bar{\vartheta}=\rho_t (\vartheta_t - \bar{\vartheta}) - \eta_t \xi_t.
\end{equation}
Setting $\xi_t=0$ in the centred recursion shows that $\rho_t$ controls memory:
values near one retain deviations from $\bar\vartheta$, whereas values near zero
pull rapidly towards it. For constant $0\leq\rho<1$, iteration gives
\begin{equation}\label{eq::recursionpersistence3}
    \vartheta_{t+1}-\bar{\vartheta}=\rho^{t} (\vartheta_1 - \bar{\vartheta}) - \sum_{i=1}^{t} \rho^{t-i} \eta_i \xi_i.
\end{equation}
Past gradients therefore receive geometrically declining weights
$\rho^{t-i}$; $\rho$ controls persistence and $1-\rho$ one-step discounting. The
equivalent intercept form is
\begin{equation}\label{eq::recursionpersistence4}
    \vartheta_{t+1} = \omega + \rho \vartheta_t - \eta_t \xi_t,
    \qquad 0\leq\rho<1.
\end{equation}
Since $\bar\vartheta=\omega/(1-\rho)$ and
$\bar\alpha=g(\omega/(1-\rho))$, $\omega$ is a dual-coordinate intercept. Under
bounded links, the gain magnitude depends jointly on $\omega$, $\rho$, and $g$.

\vspace{0.3cm}

The following proposition gives the exact MD interpretation of the discounted recursion.
\begin{proposition}[Discounted link recursion as mirror descent]\label{prop::discounted_md}
Let \(\mathcal I=\prod_{j=1}^k I_j\) be a product of open, possibly unbounded
intervals, let \(g:\mathbb R^k\to\mathcal I\) be a componentwise \(C^1\)
diffeomorphism with positive derivative, and let \(\Psi_g\) be defined by
Eq.~\eqref{eq::phi_g} on \(\mathcal I\). Fix \(\bar\vartheta\in\mathbb R^k\),
set \(\bar\alpha=g(\bar\vartheta)\), and let \(\eta_t>0\),
\(0\le\rho_t\le1\), and \(\xi_t\in\mathbb R^k\). Suppose
\(\alpha_t\in\mathcal I\), \(\vartheta_t=g^{-1}(\alpha_t)\), and define
\begin{equation*}
    \vartheta_{t+1}=(1-\rho_t)\bar{\vartheta} + \rho_t\vartheta_t-\eta_t \xi_t,\quad\quad \alpha_{t+1}=g(\vartheta_{t+1}).
\end{equation*}
Then \(\alpha_{t+1}\) is the unique minimiser over \(\mathcal I\) of
\begin{equation}\label{eq::objective_gas}
    \alpha\mapsto \eta_t \langle \xi_t, \alpha \rangle + (1-\rho_t) \mathcal{B}_{\Psi_{g}}(\alpha,\bar{\alpha})+\rho_t \mathcal{B}_{\Psi_{g}}(\alpha,\alpha_t).
\end{equation}
\end{proposition}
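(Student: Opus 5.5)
The plan is to show that the objective in Eq.~\eqref{eq::objective_gas}, call it $F_t(\alpha)$, is strictly convex and differentiable on the open set $\mathcal I$. Its gradient should vanish exactly at $\alpha_{t+1}=g(\vartheta_{t+1})$. Since $\mathcal I$ is open, there are no boundary or normal-cone terms. A stationary point of a strictly convex differentiable function on an open convex set is then its unique global minimiser, and this settles both existence and uniqueness at once.

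\emph{Differentiate each term.} By the fundamental theorem of calculus applied to Eq.~\eqref{eq::phi_g}, $\nabla\Psi_g=g^{-1}$ on $\mathcal I$. The base point $\alpha^\circ\in\mathcal I$ only adds a constant, and $g^{-1}$ is continuous, so $\Psi_g$ is $C^1$. For fixed $\beta\in\mathcal I$ this gives $\nabla_\alpha\mathcal B_{\Psi_g}(\alpha,\beta)=g^{-1}(\alpha)-g^{-1}(\beta)$. Applying this with $\beta=\bar\alpha$ and $\beta=\alpha_t$, and using $g^{-1}(\bar\alpha)=\bar\vartheta$ and $g^{-1}(\alpha_t)=\vartheta_t$, we get
\[
\nabla F_t(\alpha)=\eta_t\xi_t+(1-\rho_t)\bigl(g^{-1}(\alpha)-\bar\vartheta\bigr)+\rho_t\bigl(g^{-1}(\alpha)-\vartheta_t\bigr)=\eta_t\xi_t+g^{-1}(\alpha)-(1-\rho_t)\bar\vartheta-\rho_t\vartheta_t .
\]
The key point is that the weights $1-\rho_t$ and $\rho_t$ sum to one, so the coefficient of $g^{-1}(\alpha)$ is exactly one. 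Setting $\nabla F_t(\alpha)=0$ gives $g^{-1}(\alpha)=\vartheta_{t+1}$. Because $g$ is a bijection onto $\mathcal I$, the unique stationary point is $\alpha=g(\vartheta_{t+1})=\alpha_{t+1}\in\mathcal I$.

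\emph{Strict convexity.} The linear term is convex. Each Bregman term $\mathcal B_{\Psi_g}(\cdot,\beta)$ equals $\Psi_g$ plus an affine function, so $F_t=\Psi_g+\text{affine}$, since the weights again sum to one. It therefore suffices that $\Psi_g$ is strictly convex on $\mathcal I$. I would argue this without second derivatives, because $g^{-1}$ is only $C^1$ and this suffices. $\Psi_g$ is a sum of one-dimensional functions $\int_{\alpha_j^\circ}^{\alpha_j}g_j^{-1}$, each with strictly increasing derivative $g_j^{-1}$, so each summand is strictly convex on $I_j$. A sum of strictly convex functions of distinct coordinates is strictly convex on the product $\mathcal I$. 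One could also use the monotone-gradient criterion directly: $\langle g^{-1}(\alpha)-g^{-1}(\beta),\alpha-\beta\rangle>0$ for $\alpha\neq\beta$.

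\emph{Expected obstacle.} The only delicate point is the edge cases $\rho_t\in\{0,1\}$ and unbounded intervals, where one might worry about coercivity and existence. The stationarity argument avoids this: it does not need compactness, only that the explicit point $\alpha_{t+1}$ lies in $\mathcal I$ and satisfies the first-order condition. The convexity inequality $F_t(\alpha)\ge F_t(\alpha_{t+1})+\langle\nabla F_t(\alpha_{t+1}),\alpha-\alpha_{t+1}\rangle$ holds strictly for $\alpha\neq\alpha_{t+1}$, and this gives unique minimality directly.
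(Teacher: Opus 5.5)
Your proposal is correct and follows essentially the same route as the paper: use $(1-\rho_t)+\rho_t=1$ to write the objective as $\Psi_g$ plus an affine function, compute the gradient via $\nabla\Psi_g=g^{-1}$, solve the first-order condition $g^{-1}(\alpha)=\vartheta_{t+1}$ by bijectivity of $g$, and conclude uniqueness from strict convexity on the open set $\mathcal I$. Two steps that the paper only asserts are made explicit in your version. You derive strict convexity of $\Psi_g$ from the strictly increasing derivatives $g_j^{-1}$, without second derivatives, and you note that a stationary point on an open convex set needs no coercivity or compactness argument.
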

\proofref{app::proof_discounted_md}
Equation~\eqref{eq::objective_gas} separates the linearised gain loss, the pull
towards the reference gain $\bar\alpha$, and persistence around the previous
gain $\alpha_t$.

Persistence also interacts with the learning rate. The centred form of
Eq.~\eqref{eq::recursionpersistence} is
\begin{equation*}
    \vartheta_{t+1}
    =\vartheta_t-\eta_t\{\xi_t+\kappa_t(\vartheta_t-\bar\vartheta)\},
    \qquad
    \kappa_t=\frac{1-\rho_t}{\eta_t}.
\end{equation*}
Thus $\kappa_t$ is the effective shrinkage strength in the dual coordinate.
Coupling the schedules through $\rho_t=1-\kappa\eta_t$, with
$0\leq\kappa\eta_t\leq1$, keeps this strength equal to $\kappa$. If
$\rho_t\equiv\rho$ while $\eta_t$ varies, then $\kappa_t$ varies as well.
Persistence governs memory, while its regularising effect depends on the
learning-rate schedule.

\begin{remark}[Recovery of \textrm{aGAS}]\label{rem::agas}
The calendar-time convention of~\cite{blasques2019accelerating} is
\[
 f_{t+1}=\omega_f+\beta_f f_t
 +\alpha_f C_{f,t}u_{\lambda,t}u_{\lambda,t-1},
 \qquad
 \alpha_{\lambda,t}=g(f_{t+1}).
\]
Thus \(f_t\) and \(\lambda_t\) are \(\mathcal F_{t-1}\)-measurable;
\(y_t\) is observed; and \(\alpha_{\lambda,t}=g(f_{t+1})\) is formed and
used immediately in the update from \(\lambda_t\) to \(\lambda_{t+1}\).
For \(0\leq\beta_f<1\), translate to the online transition by setting
\(\vartheta_t:=f_{t+1}\), \(\alpha_t:=\alpha_{\lambda,t}\),
\(\rho:=\beta_f\), and \(\bar\vartheta:=\omega_f/(1-\beta_f)\). The endpoint
\(\beta_f=1\) is covered when \(\omega_f=0\); if \(\omega_f\neq0\), the
recursion contains an additional dual-coordinate drift outside the two-centre
update in Proposition~\ref{prop::discounted_md}. In the scalar unscaled case,
\[
 \xi_t=G_{t+1}(\alpha_t)=-u_{\lambda,t}u_{\lambda,t+1},
 \qquad
 \eta_t=\alpha_f C_{f,t+1},
\]
so \(\vartheta_t\mapsto\vartheta_{t+1}\) has exactly the form of
Eq.~\eqref{eq::recursionpersistence}. Equivalently, the calendar-time transition
\(\alpha_{t-1}\mapsto\alpha_t\) uses
\(\xi_{t-1}=G_t(\alpha_{t-1})=-u_{\lambda,t-1}u_{\lambda,t}\) and
\(\eta_{t-1}=\alpha_fC_{f,t}\). These are the same indexing convention, shifted
once; in particular, \(C_{f,t+1}\), not \(C_{f,t}\), belongs to the transition
\(\alpha_t\mapsto\alpha_{t+1}\).

For a scalar scaled direction \(d_t=S_tu_{\lambda,t}\), the gain gradient is
\(-S_tu_{\lambda,t}u_{\lambda,t+1}\). When \(S_t>0\), the raw aGAS score
product is recovered by setting
\[
 \xi_t=-S_tu_{\lambda,t}u_{\lambda,t+1},
 \qquad
 \eta_t=\frac{\alpha_fC_{f,t+1}}{S_t},
\]
since \(-\eta_t\xi_t
=\alpha_fC_{f,t+1}u_{\lambda,t}u_{\lambda,t+1}\). Proposition
\ref{prop::discounted_md} applies to bounded links and also to the exponential
link on \((0,\infty)\), whose potential for \(g(f)=\exp(f)\) is
\(\Psi_g(\alpha)=\alpha\log\alpha-\alpha\), up to affine terms.

The exact recursive representation establishes the algebraic correspondence.
Regret coverage under Theorem~\ref{thm:dynamic_regret_dmd} additionally requires
\(0\le\beta_f<1\), or \(\beta_f=1\) with \(\omega_f=0\), together with
\(\eta_t>0\), projection or verified compact localisation,
finite convex losses, finite gradient energy---with integrability for the
expected bound---and a non-decreasing composite schedule. In the unscaled
representation, this schedule is
\[
 w_t=\frac{\beta_f}{\alpha_fC_{f,t+1}};
\]
with scalar scaling \(S_t>0\), it is
\[
 w_t=\frac{\beta_fS_t}{\alpha_fC_{f,t+1}}.
\]
These conditions remain to be verified for the original data-dependent aGAS
scaling. Untruncated exponential-link aGAS remains outside the compact theorem.
\end{remark}

\section{Benchmarks and dynamic regret for gain learning}\label{sec::SectionFive}
The benchmark follows the information and reachability structure of the gain
problem. At date $t$, the $\mathcal F_t$-measurable gain reacts to the realised
score, is committed before $y_{t+1}$, and moves the current state within the
realised reachable set $\mathcal R_t$. A different gain history would generate a
different sequence of states, scores, and reachable sets. We therefore study
local gain regret on the realised score-conditioned path; policy regret for
counterfactual filter paths is a separate object.

The regret theorem is formulated for convex losses in gain space. In
score-driven filters, this condition can be checked in state space on the
realised reachable set $\mathcal R_t$. Define the realised state loss
\begin{equation}\label{eq::realizedstate}
    \ell_{t+1}^{\text{st}}(\lambda)=-\log p(y_{t+1}\mid \lambda),
\end{equation}
and its conditional risk after observing $y_t$,
\begin{equation}\label{eq::realizedstaterisk}
    R_{t+1}(\lambda)=\mathsf{E}_{t}[\ell_{t+1}^{\text{st}}(\lambda)].
\end{equation}
In the general linear formulation,
$L_{t+1}(\alpha)=\ell_{t+1}^{\text{st}}(b_t+H_t\alpha)$ and
$J_{t+1\mid t}(\alpha)=R_{t+1}(b_t+H_t\alpha)$. Since
$\alpha\mapsto H_t\alpha$ is affine, convexity of either state criterion on
$\mathcal R_t$ carries over to the corresponding gain criterion on $\mathcal A$.
\begin{proposition}[Convexity transfer to gain space]\label{prop::convexity_transfer}
    Let $t$ be fixed and let $\mathcal R_t$ be as in Eq.~\eqref{eq::attainableset}. Then the following statements hold:
    \begin{enumerate}
        \item If $\lambda \mapsto \ell_{t+1}^{\text{st}}(\lambda)$ is convex on $\mathcal R_t$, then $\alpha\mapsto L_{t+1}(\alpha)$ is convex on $\mathcal{A}$.
        \item If $\lambda \mapsto R_{t+1}(\lambda)$ is convex on $\mathcal R_t$, then $\alpha\mapsto J_{t+1\mid t}(\alpha)$ is convex on $\mathcal{A}$.
        \item If the relevant state criterion is strictly convex on $\mathcal R_t$, and $H_t(\alpha-\beta)\neq0$ for every distinct pair $\alpha,\beta\in\mathcal A$, then the corresponding gain criterion is strictly convex on $\mathcal A$.
        \item If $\lambda \mapsto \ell_{t+1}^{\text{st}}(\lambda)$ is $\mu$-strongly convex on $\mathcal R_t$ in the Euclidean norm, then, for every $\alpha, \beta \in \mathcal{A}$ and $\tau \in [0,1]$,
        \begin{equation}\label{eq::inequality}
            L_{t+1}(\tau \alpha + (1-\tau) \beta) \leq \tau L_{t+1}(\alpha)+(1-\tau)L_{t+1}(\beta)-\frac{\mu}{2}\tau(1-\tau)\|H_t(\alpha-\beta)\|^2.
        \end{equation}
        If, in addition, $\sigma_{\min}(H_t)^2\ge c_t>0$, then $L_{t+1}$ is $\mu c_t$-strongly convex on $\mathcal A$. The same conclusions hold for $J_{t+1\mid t}$ when $R_{t+1}$ is $\mu$-strongly convex on $\mathcal R_t$.
    \end{enumerate}
\end{proposition}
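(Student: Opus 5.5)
The whole argument rests on one observation: the gain-to-state map $\hat\lambda_t(\alpha)=b_t+H_t\alpha$ is affine, it sends $\mathcal A$ onto $\mathcal R_t$, and it preserves convex combinations:
\[
\hat\lambda_t(\tau\alpha+(1-\tau)\beta)=\tau\hat\lambda_t(\alpha)+(1-\tau)\hat\lambda_t(\beta).
\]
Since $\mathcal A$ is convex, $\mathcal R_t$ is also convex, so convexity of a state criterion on $\mathcal R_t$ makes sense. We use the representations $L_{t+1}=\ell^{\mathrm{st}}_{t+1}\circ\hat\lambda_t$ and $J_{t+1\mid t}=R_{t+1}\circ\hat\lambda_t$ recorded just before the statement. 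Throughout, $t$ and the realisation in $\mathcal F_t$ (and, for the realised loss, $y_{t+1}$) are fixed, so $b_t$ and $H_t$ are deterministic.

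For items 1 and 2, take $\alpha,\beta\in\mathcal A$ and $\tau\in[0,1]$, write the composition at $\tau\alpha+(1-\tau)\beta$, apply the displayed identity, and then use the convexity inequality of $\ell^{\mathrm{st}}_{t+1}$ or $R_{t+1}$ on $\mathcal R_t$. Both points $\hat\lambda_t(\alpha)$ and $\hat\lambda_t(\beta)$ lie in $\mathcal R_t$, so the inequality applies. The argument only uses the convexity inequality, so it holds in extended-real form and no finiteness is needed.

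For item 3, take $\alpha\neq\beta$ and $\tau\in(0,1)$. The hypothesis $H_t(\alpha-\beta)\neq0$ gives $\hat\lambda_t(\alpha)\neq\hat\lambda_t(\beta)$, so the state-level inequality is strict, and the strict inequality transfers through the same identity. One remark is worth making: the injectivity hypothesis cannot be dropped. If $H_t(\alpha-\beta)=0$, the two gains are observationally equivalent, the gain criterion is constant along the segment joining them, and strict convexity fails.

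For item 4, I would use the standard characterisation of $\mu$-strong convexity on a convex set:
\[
f(\tau x+(1-\tau)z)\le\tau f(x)+(1-\tau)f(z)-\tfrac{\mu}{2}\tau(1-\tau)\|x-z\|^2 .
\]
I take this inequality as the definition of $\mu$-strong convexity. If the paper instead means convexity of $f-\tfrac{\mu}{2}\|\cdot\|^2$, the two are equivalent by expanding $\|\tau x+(1-\tau)z\|^2$. Substituting $x=\hat\lambda_t(\alpha)$ and $z=\hat\lambda_t(\beta)$, so that $x-z=H_t(\alpha-\beta)$, gives \eqref{eq::inequality} directly. For the final claim I use the bound $\|H_tv\|^2\ge\sigma_{\min}(H_t)^2\|v\|^2\ge c_t\|v\|^2$ for $v\in\mathbb R^k$, where $\sigma_{\min}(H_t)$ denotes the $k$-th singular value of the $m\times k$ matrix $H_t$. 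This singular value is positive only when $H_t$ has full column rank, which is consistent with the hypothesis. The bound yields $\mu c_t$-strong convexity of $L_{t+1}$ on $\mathcal A$. The statements for $J_{t+1\mid t}$ follow verbatim with $R_{t+1}$ in place of $\ell^{\mathrm{st}}_{t+1}$.

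None of these steps is a real obstacle. The only points needing care are, first, the convexity of $\mathcal R_t$, so that the state-level hypotheses are meaningful, and second, fixing the definition of strong convexity and the meaning of $\sigma_{\min}$ for a rectangular $H_t$.
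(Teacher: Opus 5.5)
Your proposal is correct and follows essentially the same route as the paper: the affine identity $\hat\lambda_t(\tau\alpha+(1-\tau)\beta)=\tau\hat\lambda_t(\alpha)+(1-\tau)\hat\lambda_t(\beta)$ transfers each convexity inequality from $\mathcal R_t$ to $\mathcal A$. The condition $H_t(\alpha-\beta)\neq0$ supplies distinct states for strictness, and the bound $\|H_tv\|^2\ge\sigma_{\min}(H_t)^2\|v\|^2$ supplies the modulus $\mu c_t$. Your side remarks on the equivalence of the two strong-convexity definitions and on the necessity of the injectivity hypothesis are correct but not needed for the proof.
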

\proofref{app::proof_convexity_transfer}
Econometrically, Proposition~\ref{prop::convexity_transfer} shows that the
current scaled score controls how state-space curvature is transmitted to gain
space. The matrix $H_t$ therefore determines the reachable update directions
and the local curvature available for learning the gain coordinates. For a
diagonal gain, $\sigma_{\min}(H_t)=\min_j|(d_t)_j|$; for a common scalar gain,
$\sigma_{\min}(H_t)=\|d_t\|$. When the state criterion is strongly convex,
positive gain-space curvature in every coordinate additionally requires $H_t$
to have full column rank. Thus every $(d_t)_j$ must be non-zero in the diagonal
case, while $d_t\neq0$ suffices in the scalar case. The designed quadratic
experiment of Subsection~\ref{subsec::two} illustrates how score scaling affects
coordinatewise target feasibility and learnability.

\begin{remark}[When convexity holds]\label{rem::convexity_scope}
Proposition~\ref{prop::convexity_transfer} provides a convenient route for
verifying convexity in gain space: it is sufficient that the state log-loss
$\lambda\mapsto-\log p(y_{t+1}\mid\lambda)$ be convex on the reachable set
$\mathcal R_t$. This occurs when $p(y\mid\lambda)$ is log-concave as a function
of the state. In the Gaussian location model of Example~\ref{ex::example}, the
state log-loss has curvature $1/\sigma^2$; because
$\lambda=b_t+d_t\alpha$, the corresponding scalar gain loss has curvature
$d_t^2/\sigma^2$. More generally, the negative log-likelihood of an exponential
family is convex in its natural parameter. Whenever it is finite, the
conditional risk $R_{t+1}$ inherits the same convexity under conditional
expectation. The empirical volatility models also satisfy this condition when
$h$ is the log scale-squared state: $e^h$ is the Gaussian variance and the
Student-$t$ scale squared. Under the Gaussian density $\mathcal N(0,e^h)$, the
state log-loss is $C+\tfrac12(h+y^2e^{-h})$, with second derivative
$\tfrac12y^2e^{-h}\ge0$. Under the Student-$t$ density with
$\nu_{\mathrm{df}}$ degrees of freedom, distinct from the dominating measure
$\nu$, let $z=y^2e^{-h}/\nu_{\mathrm{df}}$. The state log-loss is
$C_{\nu_{\mathrm{df}}}+\tfrac12h+\tfrac{\nu_{\mathrm{df}}+1}{2}\log(1+z)$,
with second derivative
$\tfrac{\nu_{\mathrm{df}}+1}{2}z/(1+z)^2\ge0$. For
$\nu_{\mathrm{df}}>2$, its predictive variance is
$e^h\nu_{\mathrm{df}}/(\nu_{\mathrm{df}}-2)$. Both state losses are therefore
convex in $h$, becoming affine only when $y=0$. The Student-$t$ score with
respect to $h$ is bounded and saturating, while the Gaussian score is unbounded.
Direct variance parametrisations and some location--scale or shape
parametrisations can produce non-convex state losses. In the log scale-squared
parametrisation used here, the Gaussian and Student-$t$ return-density losses
satisfy the pathwise convexity condition. The pathwise regret theorem
additionally requires compact localisation, finite losses, and the stated
schedule conditions; its expected version also requires integrability of the
random terms in the bound. Both results compare gains on a common realised
context. For a state loss that is non-convex in the chosen parametrisation, the
gain recursion remains well defined. A regret bound then requires either
localisation to a region on which the loss is convex or re-expression of the
filter in a parametrisation, such as a natural parameter or log variance, that
makes the state log-loss convex.
\end{remark}

We use two related benchmarks. The reachable oracle chooses the best gain after
observing $y_t$ and the current score; the realisability gap measures the cost
of restricting the state choice to $\mathcal R_t$. To ensure that the oracle is
well defined and measurable, assume that $J_{t+1\mid t}$ is a Carath\'eodory
objective on $\mathcal A$: for every $\alpha\in\mathcal A$,
$J_{t+1\mid t}(\alpha)$ is $\mathcal F_t$-measurable, while
$\alpha\mapsto J_{t+1\mid t}(\alpha)$ is finite and continuous on
$\mathcal A$ almost surely. Since $\mathcal A$ is compact, the measurable
minimum theorem yields a non-empty argmin and an $\mathcal F_t$-measurable
selector. The corresponding
oracle value and a measurable oracle gain are
\begin{equation}\label{eq::unpredictableGain}
    R^{\star}_{t+1}=\inf_{\lambda \in \mathcal R_t} R_{t+1}(\lambda)=\inf_{\alpha \in \mathcal{A}} J_{t+1\mid t}(\alpha),\qquad \alpha_t^{\star}\in\argmin_{\alpha\in\mathcal A}J_{t+1\mid t}(\alpha),
\end{equation}
where $J_{t+1\mid t}$ is defined in Eq.~\eqref{eq::scoringruleexpected}. The
selector $\alpha_t^\star$ uses the same $\mathcal F_t$ information as the
learned gain $\alpha_t$ and serves as the moving comparator in the regret
theorem.

Allowing the state to range over the full state space $\Lambda$ gives the global
oracle value
\begin{equation}\label{eq::unpredictableGain2}
    R^{\dagger}_{t+1}=\inf_{\lambda \in \Lambda} R_{t+1}(\lambda).
\end{equation}
For any gain $\alpha\in\mathcal A$, its excess conditional risk over this
global benchmark decomposes, conditionally on $\mathcal F_t$, as
\begin{equation}\label{eq:benchmark_decomposition_biom}
\begin{aligned}
  J_{t+1\mid t}(\alpha)-R_{t+1}^{\dagger}
  &=\underbrace{J_{t+1\mid t}(\alpha)-R_{t+1}^{\star}}
    _{:=\,\mathcal E_t^{\mathrm{alg}}(\alpha)}\\
  &\quad+\underbrace{R_{t+1}^{\star}-R_{t+1}^{\dagger}}
    _{:=\,\mathcal E_t^{\mathrm{real}}}.
\end{aligned}
\end{equation}
By construction, \(\mathcal E_t^{\mathrm{alg}}(\alpha)\geq 0\) and measures the
excess conditional risk of \(\alpha\) over the best reachable gain. Likewise,
\(\mathcal E_t^{\mathrm{real}}\geq 0\) because
\(R_{t+1}^{\dagger}\leq R_{t+1}^{\star}\); it measures the cost of restricting the
state space from \(\Lambda\) to \(\mathcal R_t\). If the gain were committed before
\(y_t\), comparison with the post-score oracle would also involve an
information-timing, or predictability, gap. The pathwise theorem bounds realised
regret relative to a reachable comparator. Corollary~\ref{cor::corollary} then
controls the expected algorithmic gap against \(\alpha_t^\star\). The
realisability gap is determined by the score-conditioned reachable set.
Let $\mathcal K\subseteq\mathcal A$ be a non-empty closed convex decision set.
Let $\Psi$ be differentiable on an open convex neighbourhood of $\mathcal K$
and $\sigma_\Psi$-strongly convex on $\mathcal K$ with respect to $\|\cdot\|$,
whose dual norm is $\|\cdot\|_*$. Define
\begin{equation}\label{eq::Omega}
    \Omega_{\Psi,\mathcal K}:=
    \sup_{\alpha,\beta\in\mathcal K}\mathcal B_\Psi(\alpha,\beta)<\infty,
\end{equation}
where $\mathcal B_\Psi$ is the Bregman divergence in
Eq.~\eqref{eq::Bregman}. For $u,v\in\mathcal K$, define the one-sided Bregman
path increment
\begin{equation}\label{eq::bregmanincrements}
  \Delta_{\Psi,\mathcal K}(u,v)
  :=\sup_{\alpha\in\mathcal K}
  \{\mathcal B_\Psi(u,\alpha)-\mathcal B_\Psi(v,\alpha)\}.
\end{equation}
The constants $\sigma_\Psi$ and $\Omega_{\Psi,\mathcal K}$ are deterministic
features of the fixed geometry on $\mathcal K$. In applications,
$\mathcal K=\mathcal A$ gives the full-set result, whereas a link-induced
potential uses the fixed interior localisation
$\mathcal K=\mathcal A_\varepsilon$ described in
Remark~\ref{rem::geometry_constants}.
The following result gives a projected mirror-descent bound for a moving
comparator in the Bregman geometry of $\mathcal K$. It adapts the path-variation
argument of Zinkevich's online convex programming analysis~\cite{zinkevich2003online};
Besbes et al.~\cite{besbes2015nonstationary} place dynamic-oracle comparisons in
a broader nonstationary framework.
\begin{theorem}[Dynamic regret of projected mirror descent]\label{thm:dynamic_regret_biom}
Let $\mathcal K\subseteq\mathcal A$ be a non-empty closed convex decision set,
and let $\Psi$ be differentiable on an open convex neighbourhood of $\mathcal K$
and $\sigma_\Psi$-strongly convex on $\mathcal K$ with respect to $\|\cdot\|$,
whose dual norm is $\|\cdot\|_*$. Assume
that $\Omega_{\Psi,\mathcal K}<\infty$ as in Eq.~\eqref{eq::Omega}. Let
$\ell_t:\mathcal K\to\mathbb R$, $t=1,\ldots,N$, be finite convex losses, and
let $(\eta_t)_{t=1}^{N}$ be positive and non-increasing. Starting from
$\alpha_1\in\mathcal K$, suppose that, after incurring $\ell_t(\alpha_t)$, the
algorithm receives a subgradient $\xi_t\in\partial\ell_t(\alpha_t)$ and updates
\begin{equation*}
    \alpha_{t+1}
    =
    \argmin_{\alpha\in\mathcal K}
    \big\{\eta_t\langle\xi_t,\alpha\rangle
    +\mathcal B_\Psi(\alpha,\alpha_t)\big\},
    \qquad t=1,\ldots,N,
\end{equation*}
Then, for every comparator path $\gamma_1,\ldots,\gamma_N\in\mathcal K$,
\begin{align}
\label{eq:dynamic_regret_bound_biom}
  \sum_{t=1}^N\{\ell_t(\alpha_t)-\ell_t(\gamma_t)\}
  &\le
  \frac{\Omega_{\Psi,\mathcal K}}{\eta_{N}}
  +
  \sum_{t=2}^N
  \frac{\Delta_{\Psi,\mathcal K}(\gamma_t,\gamma_{t-1})}{\eta_{t}}
  +
  \frac{1}{2\sigma_\Psi}\sum_{t=1}^N\eta_t\|\xi_t\|_*^2 .
\end{align}
\end{theorem}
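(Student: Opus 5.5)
The plan follows the standard one-step mirror-descent inequality, summed with a telescoping argument in which the moving comparator is handled through the one-sided increments $\Delta_{\Psi,\mathcal K}$. The first step is a per-round bound. Convexity of $\ell_t$ and $\xi_t\in\partial\ell_t(\alpha_t)$ give
\[
\ell_t(\alpha_t)-\ell_t(\gamma_t)\le\langle\xi_t,\alpha_t-\gamma_t\rangle .
\]
The first-order optimality condition of the projected step, with normal cone as in the footnote to Eq.~\eqref{eq::stepMD}, gives
\[
\langle\eta_t\xi_t+\nabla\Psi(\alpha_{t+1})-\nabla\Psi(\alpha_t),\gamma_t-\alpha_{t+1}\rangle\ge0 .
\]
Next I apply the three-point identity
\[
\langle\nabla\Psi(\alpha_t)-\nabla\Psi(\alpha_{t+1}),\gamma-\alpha_{t+1}\rangle=\mathcal B_\Psi(\gamma,\alpha_{t+1})-\mathcal B_\Psi(\gamma,\alpha_t)+\mathcal B_\Psi(\alpha_{t+1},\alpha_t).
\]
With it, the optimality condition becomes
\[
\eta_t\langle\xi_t,\alpha_{t+1}-\gamma_t\rangle\le\mathcal B_\Psi(\gamma_t,\alpha_t)-\mathcal B_\Psi(\gamma_t,\alpha_{t+1})-\mathcal B_\Psi(\alpha_{t+1},\alpha_t).
\]
The remaining piece $\eta_t\langle\xi_t,\alpha_t-\alpha_{t+1}\rangle$ is bounded by Fenchel--Young, $\le \eta_t^2\|\xi_t\|_*^2/(2\sigma_\Psi)+\tfrac{\sigma_\Psi}{2}\|\alpha_t-\alpha_{t+1}\|^2$. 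The quadratic term is absorbed by $\mathcal B_\Psi(\alpha_{t+1},\alpha_t)\ge\tfrac{\sigma_\Psi}{2}\|\alpha_{t+1}-\alpha_t\|^2$, which follows from strong convexity on $\mathcal K$. Dividing by $\eta_t$ yields
\[
\ell_t(\alpha_t)-\ell_t(\gamma_t)\le\frac{\mathcal B_\Psi(\gamma_t,\alpha_t)-\mathcal B_\Psi(\gamma_t,\alpha_{t+1})}{\eta_t}+\frac{\eta_t}{2\sigma_\Psi}\|\xi_t\|_*^2 .
\]

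The second step is summation, which is where the comparator path enters. Write $D_t:=\mathcal B_\Psi(\gamma_t,\alpha_t)$. In the sum, the term $-\mathcal B_\Psi(\gamma_t,\alpha_{t+1})/\eta_t$ must be matched against $+\mathcal B_\Psi(\gamma_{t+1},\alpha_{t+1})/\eta_{t+1}$. I split
\[
\frac{\mathcal B_\Psi(\gamma_{t+1},\alpha_{t+1})}{\eta_{t+1}}-\frac{\mathcal B_\Psi(\gamma_t,\alpha_{t+1})}{\eta_t}
=\frac{\mathcal B_\Psi(\gamma_{t+1},\alpha_{t+1})-\mathcal B_\Psi(\gamma_t,\alpha_{t+1})}{\eta_{t+1}}
+\Big(\frac1{\eta_{t+1}}-\frac1{\eta_t}\Big)\mathcal B_\Psi(\gamma_t,\alpha_{t+1}).
\]
The first piece is at most $\Delta_{\Psi,\mathcal K}(\gamma_{t+1},\gamma_t)/\eta_{t+1}$ by the definition in Eq.~\eqref{eq::bregmanincrements}, because $\alpha_{t+1}\in\mathcal K$. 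The second piece has a nonnegative coefficient, since $\eta_t$ is non-increasing, and is bounded by $(1/\eta_{t+1}-1/\eta_t)\Omega_{\Psi,\mathcal K}$. For the boundary terms, the first contributes $D_1/\eta_1\le\Omega_{\Psi,\mathcal K}/\eta_1$, and the last term $-\mathcal B_\Psi(\gamma_N,\alpha_{N+1})/\eta_N\le0$ is dropped. The $\Omega$ contributions then telescope to $\Omega_{\Psi,\mathcal K}/\eta_N$. Reindexing the $\Delta$ sum over $t=2,\ldots,N$ gives exactly Eq.~\eqref{eq:dynamic_regret_bound_biom}.

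The main obstacle is the rigorous first-order condition on the closed set $\mathcal K$. I need existence and uniqueness of $\alpha_{t+1}$, which follows from strong convexity and closedness; for unbounded $\mathcal K$, coercivity comes from strong convexity. I also need the variational inequality at a possibly boundary minimiser. This requires $\Psi$ to be differentiable on an open neighbourhood of $\mathcal K$, which is assumed, and then follows from the convex minimisation optimality condition $\langle\nabla F(\alpha_{t+1}),\gamma-\alpha_{t+1}\rangle\ge0$ with $F(\alpha)=\eta_t\langle\xi_t,\alpha\rangle+\mathcal B_\Psi(\alpha,\alpha_t)$. A second subtlety is the sign of the coefficient multiplying $\Omega$. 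The one-sided $\Delta$ must be taken with the newer comparator as first argument, with the division by $\eta_{t+1}$ rather than $\eta_t$, so that both the monotonicity of the schedule and the definition of $\Delta$ are used in the correct direction.
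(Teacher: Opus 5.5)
Your proposal is correct and follows essentially the paper's argument: the same one-step inequality from the first-order condition, the Bregman three-point identity, Fenchel--Young and strong convexity, followed by the same weighted telescoping with $\Delta_{\Psi,\mathcal K}$ and $\Omega_{\Psi,\mathcal K}$. The only difference is in how the cross term is split. You put the newer weight on the comparator increment and the weight increment on $\mathcal B_\Psi(\gamma_t,\alpha_{t+1})$. The paper instead puts the older weight $1/\eta_{t-1}$ on $\Delta_{\Psi,\mathcal K}(\gamma_t,\gamma_{t-1})$, which gives a slightly sharper intermediate bound, and then relaxes it to $1/\eta_t$ using $\Delta_{\Psi,\mathcal K}\ge0$; so your closing remark that the split \emph{must} use the newer step size is stronger than necessary.
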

\proofref{app::Appendix}
In our application, Theorem~\ref{thm:dynamic_regret_biom} bounds realised gain
regret relative to any comparator path evaluated on the same score-conditioned
path. Counterfactual filter policies generally generate different state and
score sequences and require a separate policy-regret analysis. The three terms
in Eq.~\eqref{eq:dynamic_regret_bound_biom} reflect the Bregman diameter of
$\mathcal K$, comparator movement, and accumulated gradient energy. If
$\|\xi_t\|_*\leq G$, the last term is at most
$\frac{G^2}{2\sigma_\Psi}\sum_{t=1}^N\eta_t$. Applying the theorem to the
conditional predictive loss gives the expected version below.

\begin{corollary}[Expected dynamic regret for gain learning]\label{cor::corollary}
Suppose that $L_{t+1}(\cdot)$ in Eq.~\eqref{eq::scoringrule} is finite and
convex on $\mathcal K$ almost surely for $t=1,\ldots,T-1$. Let the gain
sequence follow the projected MD recursion in
Theorem~\ref{thm:dynamic_regret_biom}, with $N=T-1$ and deterministic step
sizes, and let
$\xi_t\in\partial L_{t+1}(\alpha_t)$. Assume that $\alpha_t$ and
$\gamma_t\in\mathcal K$ are $\mathcal F_t$-measurable, while $\xi_t$ is
$\mathcal F_{t+1}$-measurable. If
$L_{t+1}(\alpha_t)$, $L_{t+1}(\gamma_t)$,
$\Delta_{\Psi,\mathcal K}(\gamma_t,\gamma_{t-1})$, and
$\|\xi_t\|_*^2$ are integrable, then
\begin{align*}
\sum_{t=1}^{T-1}
\mathsf E\!\left[
J_{t+1\mid t}(\alpha_t)-J_{t+1\mid t}(\gamma_t)
\right]
\le{}&
\frac{\Omega_{\Psi,\mathcal K}}{\eta_{T-1}}
+\sum_{t=2}^{T-1}
\frac{
\mathsf E\!\left[
\Delta_{\Psi,\mathcal K}(\gamma_t,\gamma_{t-1})
\right]}{\eta_t} \\
&+\frac{1}{2\sigma_\Psi}
\sum_{t=1}^{T-1}\eta_t\,
\mathsf E\!\left[\|\xi_t\|_*^2\right].
\end{align*}
The measurable reachable oracle may be used as
$\gamma_t=\alpha_t^\star$ whenever $\alpha_t^\star\in\mathcal K$ almost surely
and the same integrability conditions hold. The left-hand side then equals the
cumulative expected algorithmic gap.
\end{corollary}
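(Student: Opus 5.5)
The plan is to apply Theorem~\ref{thm:dynamic_regret_biom} pathwise with $\ell_t:=L_{t+1}$ and $N=T-1$, take unconditional expectations of both sides, and then convert each realised loss on the left into its conditional counterpart $J_{t+1\mid t}$ using the tower property.

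\textbf{Pathwise step.} Fix an outcome $\omega$ in the almost-sure event on which every $L_{t+1}$, $t=1,\ldots,T-1$, is finite and convex on $\mathcal K$. On this event the hypotheses of Theorem~\ref{thm:dynamic_regret_biom} hold:
\begin{itemize}
\item the step sizes are deterministic, positive and non-increasing;
\item $\xi_t\in\partial L_{t+1}(\alpha_t)$;
\item the iterates follow the projected MD update;
\item the comparators $\gamma_t$ lie in $\mathcal K$.
\end{itemize}
The theorem therefore gives, almost surely,
\[
\sum_{t=1}^{T-1}\{L_{t+1}(\alpha_t)-L_{t+1}(\gamma_t)\}\le \frac{\Omega_{\Psi,\mathcal K}}{\eta_{T-1}}+\sum_{t=2}^{T-1}\frac{\Delta_{\Psi,\mathcal K}(\gamma_t,\gamma_{t-1})}{\eta_t}+\frac{1}{2\sigma_\Psi}\sum_{t=1}^{T-1}\eta_t\|\xi_t\|_*^2 .
\]
The constants $\Omega_{\Psi,\mathcal K}$ and $\sigma_\Psi$ are deterministic and the $\eta_t$ are deterministic. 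Every random term on either side is integrable by hypothesis. Taking expectations and using linearity, including pulling the $\eta_t$ out of the expectations, yields the right-hand side of the corollary. One measurability point needs checking: $\Delta_{\Psi,\mathcal K}(\gamma_t,\gamma_{t-1})$ is defined as a supremum over $\mathcal K$. Since $\mathcal B_\Psi$ is continuous, this supremum can be taken over a countable dense subset of $\mathcal K$, so the term is measurable; its integrability is assumed.

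\textbf{Conversion to the conditional objective.} This is the step I expect to need care. Take $\alpha_t$, which is $\mathcal F_t$-measurable, and the identity $\mathsf E_t[L_{t+1}(\alpha_t)]=J_{t+1\mid t}(\alpha)\big|_{\alpha=\alpha_t}$. This is substitution of an $\mathcal F_t$-measurable random argument into a conditional expectation, i.e.\ a freezing lemma. To justify it I would proceed as follows.
\begin{enumerate}
\item Use the Carath\'eodory structure: $L_{t+1}(\omega,\alpha)$ is jointly measurable and continuous in $\alpha$, since each realised loss is finite and convex on $\mathcal K$ and hence continuous on its relative interior. Moreover, the evaluations at $\alpha_t$ and $\gamma_t$ are integrable.
\item Approximate $\alpha_t$ by $\mathcal F_t$-measurable random variables taking countably many values in $\mathcal K$. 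For each such approximant the identity holds by partitioning $\Omega$ into $\mathcal F_t$-sets on which the approximant is constant.
\item Pass to the limit by continuity, using dominated convergence.
\end{enumerate}
If a convenient envelope is unavailable for the limit in step 3, there is an alternative. Interpret $J_{t+1\mid t}(\alpha_t)$ as $\mathsf E_t[L_{t+1}(\alpha_t)]$, with the regular conditional law $\widetilde Q_{t+1}$ from Proposition~\ref{prop::kl_representation}. Then $J_{t+1\mid t}(\alpha_t)=\int -\log p(y\mid\hat\lambda_t(\alpha_t))\,\widetilde Q_{t+1}(\ud y)$, which is exactly the disintegration identity. Either route gives $\mathsf E[L_{t+1}(\alpha_t)]=\mathsf E[J_{t+1\mid t}(\alpha_t)]$, as already noted after Eq.~\eqref{eq::decision_time_gain_loss}. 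The same argument applies to $\gamma_t$.

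\textbf{Conclusion.} The left-hand side becomes $\sum_t\mathsf E[J_{t+1\mid t}(\alpha_t)-J_{t+1\mid t}(\gamma_t)]$, which gives the stated bound. For the oracle case, $\alpha_t^\star$ is an $\mathcal F_t$-measurable selector by the measurable minimum theorem, so when $\alpha_t^\star\in\mathcal K$ almost surely it is an admissible comparator and the bound applies with $\gamma_t=\alpha_t^\star$. By Eq.~\eqref{eq::unpredictableGain}, $J_{t+1\mid t}(\alpha_t^\star)=R^\star_{t+1}$. Hence each summand equals $\mathsf E[\mathcal E_t^{\mathrm{alg}}(\alpha_t)]$ from Eq.~\eqref{eq:benchmark_decomposition_biom}, and the left-hand side is the cumulative expected algorithmic gap.
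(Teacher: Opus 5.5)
Your proposal is correct and follows the paper's own proof. The paper likewise applies Theorem~\ref{thm:dynamic_regret_biom} pathwise with $\ell_t=L_{t+1}$ and $N=T-1$, takes expectations using the deterministic $\eta_t$, $\Omega_{\Psi,\mathcal K}$ and $\sigma_\Psi$, and converts $\mathsf E[L_{t+1}(\alpha_t)]$ and $\mathsf E[L_{t+1}(\gamma_t)]$ into $\mathsf E[J_{t+1\mid t}(\cdot)]$ by the tower property. Your extra care over the measurability of $\Delta_{\Psi,\mathcal K}$, the freezing step and the oracle identification only makes explicit what the paper's one-line argument takes for granted. For the freezing step, prefer your disintegration route: convexity on $\mathcal K$ gives continuity only on the relative interior, not up to the boundary, so it does not by itself support your approximation argument.
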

\proofref{app::proof_expected_regret}

We consider two useful special cases. First, fix $\bar\alpha\in\mathcal K$ and
set $\gamma_t\equiv\bar\alpha$ in Corollary~\ref{cor::corollary}. Then
\begin{equation*}
    \sum_{t=1}^{T-1}
    \mathsf E\!\left[J_{t+1\mid t}(\alpha_t)-J_{t+1\mid t}(\bar\alpha)\right]
\end{equation*}
satisfies the same bound with zero comparator variation. This fixed comparator
measures the performance of online gain learning relative to a given constant
gain. Minimising
$\sum_{t=1}^{T-1}\mathsf E[J_{t+1\mid t}(\bar\alpha)]$ over
$\bar\alpha\in\mathcal K$ yields the best fixed gain whenever the minimum is
attained. A moving comparator instead allows the gain benchmark to respond to
breaks, regime changes, or shifts in the signal-to-noise ratio. The second
special case makes the learning-rate schedule explicit. Assume the hypotheses
of Theorem~\ref{thm:dynamic_regret_biom} and $\|\xi_t\|_{*} \leq G$ for every
input in the bound. If we let $\eta_t=\eta t^{-1/2}$ for some $\eta>0$, and
define the Bregman path variation
\begin{equation}
    V_{N}^{\Psi,\mathcal K}(\gamma):=
    \sum_{t=2}^{N}\Delta_{\Psi,\mathcal K}(\gamma_t,\gamma_{t-1}),
\end{equation}
then (see also the observation after Theorem~\ref{thm:dynamic_regret_biom}), using $\eta_N^{-1}=\eta^{-1}\sqrt N$, $\eta_t^{-1}\le\eta^{-1}\sqrt N$, and $\sum_{t=1}^N\eta_t\le 2\eta\sqrt N$,
\begin{equation}\label{eq:dynamic_regret_sqrt_stepsize}
      \sum_{t=1}^N\{\ell_t(\alpha_t)-\ell_t(\gamma_t)\}
  \le
  \frac{\Omega_{\Psi,\mathcal K}\sqrt{N}}{\eta}
  +\frac{\sqrt{N}}{\eta}V_{N}^{\Psi,\mathcal K}(\gamma)
  +\frac{\eta G^2}{\sigma_{\Psi}}\sqrt{N}.
\end{equation}
Equation~\eqref{eq:dynamic_regret_sqrt_stepsize} implies vanishing average
regret whenever $V_N^{\Psi,\mathcal K}(\gamma)=o(\sqrt N)$. For a fixed
comparator, $V_N^{\Psi,\mathcal K}(\gamma)=0$, giving average regret of order
$O(N^{-1/2})$. The same rate applies when total Bregman variation remains
bounded, as for a piecewise-stable oracle with finitely many bounded changes.
More generally, the path-variation term measures the difficulty of tracking a
moving gain oracle: gradual or infrequent changes preserve sublinear regret
when their accumulated variation is $o(\sqrt N)$, while sustained oscillation
raises the bound through the resulting path variation.

Theorem~\ref{thm:dynamic_regret_biom} applies to projected MD on any closed
convex decision set $\mathcal K$ with finite Bregman diameter. For logistic,
cloglog, and reverse-cloglog links, the result applies on a fixed interior
localisation $\mathcal A_\varepsilon$ whenever the Bregman-proximal projection,
an equivalent coordinatewise clipping rule, or an invariant-set argument keeps
both the iterates and comparator in $\mathcal A_\varepsilon$. On its full domain
$(0,\infty)^k$, the exponential-link potential has infinite Bregman diameter;
without verified compact localisation or compatible truncation, the
finite-diameter theorem does not apply. The discounted recursion below uses the
same geometric conditions and additionally requires $\rho_t/\eta_t$ to be
non-decreasing. These conditions concern the realised decision set and schedule
and must be verified for each implementation.

\begin{remark}[Geometry constants for link potentials]\label{rem::geometry_constants}
For a componentwise link, Eq.~\eqref{eq::phi_g} gives the diagonal Hessian
\[
\nabla^2\Psi_g(\alpha)
=\diag\!\left(
\frac{1}{g_j'(g_j^{-1}(\alpha_j))}
\right).
\]
If every link derivative is uniformly bounded above, then $\Psi_g$ is
$\sigma_\Psi$-strongly convex on $\inter(\mathcal A)$, with
\[
\sigma_\Psi
=\min_j\frac{1}{\sup_\vartheta g_j'(\vartheta)}>0.
\]
This modulus remains uniform as the gain approaches the boundary.
Assumption~\ref{ass::link} by itself allows link derivatives without a uniform
upper bound. For the logistic link
$g_j(\vartheta)=\alpha_{L,j}+D_j\sigma(\vartheta)$, where
$D_j=\alpha_{H,j}-\alpha_{L,j}$, we have
$\sup_\vartheta g_j'(\vartheta)=D_j/4$, so one may take
$\sigma_\Psi=\min_j4/D_j$. The Bregman diameter depends on the dual coordinate
$g^{-1}$, which diverges at the boundary. Consequently,
$\Omega_{\Psi,\mathcal A}=\infty$ on the closed gain set. On the interior
localisation
\[
\mathcal A_\varepsilon
=\prod_j[\alpha_{L,j}+\varepsilon D_j,\,
         \alpha_{H,j}-\varepsilon D_j],
\]
the dual coordinate is bounded; for the logistic link,
$\Omega_{\Psi,\mathcal A_\varepsilon}=O(\log(1/\varepsilon))$.
Theorem~\ref{thm:dynamic_regret_biom} therefore applies on
$\mathcal A_\varepsilon$ with a strong-convexity modulus independent of
$\varepsilon$ and a finite Bregman diameter that grows only logarithmically as
$\varepsilon\downarrow0$. A Bregman-projected implementation, or an equivalent
coordinatewise-clipped link update on the box, inherits the bound with this
explicit boundary-margin dependence.
\end{remark}

When $\rho_t=1$, the two-centre update below reduces to the projected MD step
of Theorem~\ref{thm:dynamic_regret_biom}. We now allow $\rho_t\leq1$ and derive
a regret bound for the resulting discounted mirror step. The bound adds a
reference-gain remainder that vanishes when $\rho_t=1$.

\begin{theorem}[Dynamic regret of the discounted mirror step]\label{thm:dynamic_regret_dmd}
Let $\mathcal K\subseteq\mathcal A$ be a non-empty closed convex decision set,
and let $\Psi$ be differentiable on an open convex neighbourhood of $\mathcal K$
and $\sigma_\Psi$-strongly convex on $\mathcal K$ with respect to $\|\cdot\|$,
whose dual norm is $\|\cdot\|_*$. Define
\[
\Omega_{\Psi,\mathcal K}
:=\sup_{a,b\in\mathcal K}\mathcal B_\Psi(a,b)<\infty,
\qquad
\Delta_{\Psi,\mathcal K}(u,v)
:=\sup_{a\in\mathcal K}
\{\mathcal B_\Psi(u,a)-\mathcal B_\Psi(v,a)\}.
\]
The link-localised case uses $\Psi=\Psi_g$ and
$\mathcal K=\mathcal A_\varepsilon$, while Euclidean projection uses
$\Psi(\alpha)=\|\alpha\|^2/2$ and $\mathcal K=\mathcal A$.
Let $\ell_t:\mathcal K\to\mathbb R$, $t=1,\ldots,N$, be finite convex
losses. Let $\eta_t>0$ and $0\leq\rho_t\leq1$, and define
\[
\kappa_t:=\frac{1-\rho_t}{\eta_t},
\qquad
w_t:=\frac{\rho_t}{\eta_t}.
\]
Fix a reference gain $\bar\alpha\in\mathcal K$ and an initial gain
$\alpha_1\in\mathcal K$. After incurring $\ell_t(\alpha_t)$, the algorithm
receives $\xi_t\in\partial\ell_t(\alpha_t)$ and updates
\[
\alpha_{t+1}
=\argmin_{\alpha\in\mathcal K}
\left\{
\eta_t\langle\xi_t,\alpha\rangle
+(1-\rho_t)\mathcal B_\Psi(\alpha,\bar\alpha)
+\rho_t\mathcal B_\Psi(\alpha,\alpha_t)
\right\}.
\]
Assume that $(w_t)_{t=1}^N$ is non-decreasing. For the exact interior-link
recursion $\alpha_{t+1}=g(\vartheta_{t+1})$, the displayed update coincides
with the unconstrained minimiser of Proposition~\ref{prop::discounted_md}
whenever that minimiser lies in $\mathcal K$; otherwise, the projected
minimiser above defines the algorithm. Then, for every comparator path
$\gamma_1,\ldots,\gamma_N\in\mathcal K$,
\begin{align}
\label{eq:dynamic_regret_dmd_bound}
  \sum_{t=1}^N\{\ell_t(\alpha_t)-\ell_t(\gamma_t)\}
  \;\le\;&
  \frac{\rho_N\,\Omega_{\Psi,\mathcal K}}{\eta_{N}}
  +\sum_{t=2}^N\frac{\rho_{t-1}}{\eta_{t-1}}\,\Delta_{\Psi,\mathcal K}(\gamma_t,\gamma_{t-1})
  +\frac{1}{2\sigma_\Psi}\sum_{t=1}^N\eta_t\|\xi_t\|_*^2
  \nonumber\\
  &+\underbrace{\sum_{t=1}^N\frac{1-\rho_t}{\eta_t}\,
     \Big[\mathcal{B}_{\Psi}(\gamma_t,\bar\alpha)
          +\mathcal{B}_{\Psi}(\alpha_{t+1},\alpha_t)\Big]}_{\text{reference-gain remainder }\;\mathcal{B}^{\mathrm{ref}}_N}.
\end{align}
Using $\mathcal{B}_{\Psi}\le\Omega_{\Psi,\mathcal K}$, $\rho_N\le1$, and (when $(\eta_t)$ is also non-increasing) $\rho_{t-1}/\eta_{t-1}\le1/\eta_t$, the bound admits the cleaner form
{\small
\begin{align}
\label{eq:dynamic_regret_dmd_bound_loose}
  \sum_{t=1}^N\{\ell_t(\alpha_t)-\ell_t(\gamma_t)\}
  \;\le{}&
  \frac{\Omega_{\Psi,\mathcal K}}{\eta_{N}}
  +\sum_{t=2}^N\frac{\Delta_{\Psi,\mathcal K}(\gamma_t,\gamma_{t-1})}{\eta_{t}}
  +\frac{1}{2\sigma_\Psi}\sum_{t=1}^N\eta_t\|\xi_t\|_*^2
  \nonumber\\
  &\quad
  +\sum_{t=1}^N\kappa_t\,\mathcal{B}_{\Psi}(\gamma_t,\bar\alpha)
  +\Omega_{\Psi,\mathcal K}\sum_{t=1}^N\kappa_t,
\end{align}
}
The coefficient $\kappa_t$ is the effective dual shrinkage introduced after
Eq.~\eqref{eq::objective_gas}. When $\rho_t=1$ for every $t$, we have
$\kappa_t=0$ and $w_t=1/\eta_t$. The reference-gain remainder then vanishes,
the update becomes the projected MD recursion of
Theorem~\ref{thm:dynamic_regret_biom}, and the cleaner bound in
Eq.~\eqref{eq:dynamic_regret_dmd_bound_loose} coincides with
Eq.~\eqref{eq:dynamic_regret_bound_biom}.
\end{theorem}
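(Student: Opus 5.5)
The argument is a per-step three-point inequality for the two-centre proximal step, followed by convexity and a telescoping sum weighted by $w_t=\rho_t/\eta_t$.

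\textbf{Step 1: per-step inequality.} Fix $t$ and write $F_t(\alpha)=\eta_t\langle\xi_t,\alpha\rangle+(1-\rho_t)\mathcal B_\Psi(\alpha,\bar\alpha)+\rho_t\mathcal B_\Psi(\alpha,\alpha_t)$. The map $\alpha\mapsto(1-\rho_t)\mathcal B_\Psi(\alpha,\bar\alpha)+\rho_t\mathcal B_\Psi(\alpha,\alpha_t)$ equals $\mathcal B_\Psi(\alpha,c_t)$ up to an additive constant. Here $c_t$ is defined by $\nabla\Psi(c_t)=(1-\rho_t)\nabla\Psi(\bar\alpha)+\rho_t\nabla\Psi(\alpha_t)$ whenever this point exists. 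To avoid that existence issue, I would instead use the first-order optimality condition over $\mathcal K$ directly, with the normal cone as in the footnote to Eq.~\eqref{eq::stepMD}. For any $\gamma\in\mathcal K$ it gives
\[
\big\langle \eta_t\xi_t+(1-\rho_t)\{\nabla\Psi(\alpha_{t+1})-\nabla\Psi(\bar\alpha)\}+\rho_t\{\nabla\Psi(\alpha_{t+1})-\nabla\Psi(\alpha_t)\},\,\gamma-\alpha_{t+1}\big\rangle\ge0 .
\]
Applying the three-point identity $\langle\nabla\Psi(a)-\nabla\Psi(b),\gamma-a\rangle=\mathcal B_\Psi(\gamma,b)-\mathcal B_\Psi(\gamma,a)-\mathcal B_\Psi(a,b)$ to each bracket gives
\[
\eta_t\langle\xi_t,\alpha_{t+1}-\gamma\rangle\le(1-\rho_t)\big[\mathcal B_\Psi(\gamma,\bar\alpha)-\mathcal B_\Psi(\gamma,\alpha_{t+1})-\mathcal B_\Psi(\alpha_{t+1},\bar\alpha)\big]+\rho_t\big[\mathcal B_\Psi(\gamma,\alpha_t)-\mathcal B_\Psi(\gamma,\alpha_{t+1})-\mathcal B_\Psi(\alpha_{t+1},\alpha_t)\big].
\]
Next I add $\eta_t\langle\xi_t,\alpha_t-\alpha_{t+1}\rangle$ to both sides. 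By Fenchel--Young and $\sigma_\Psi$-strong convexity, $\mathcal B_\Psi(\alpha_{t+1},\alpha_t)\ge\frac{\sigma_\Psi}{2}\|\alpha_{t+1}-\alpha_t\|^2$, so
\[
\eta_t\langle\xi_t,\alpha_t-\alpha_{t+1}\rangle\le\frac{\eta_t^2}{2\sigma_\Psi}\|\xi_t\|_*^2+\mathcal B_\Psi(\alpha_{t+1},\alpha_t).
\]
Only the fraction $\rho_t$ of $\mathcal B_\Psi(\alpha_{t+1},\alpha_t)$ is cancelled by the $-\rho_t\mathcal B_\Psi(\alpha_{t+1},\alpha_t)$ term. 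The leftover $(1-\rho_t)\mathcal B_\Psi(\alpha_{t+1},\alpha_t)$ is kept and enters the reference remainder. I drop the nonpositive terms $-(1-\rho_t)\mathcal B_\Psi(\alpha_{t+1},\bar\alpha)$. The two $-\mathcal B_\Psi(\gamma,\alpha_{t+1})$ terms combine to $-\mathcal B_\Psi(\gamma,\alpha_{t+1})$. Dividing by $\eta_t$ and using convexity, $\ell_t(\alpha_t)-\ell_t(\gamma)\le\langle\xi_t,\alpha_t-\gamma\rangle$, gives with $\gamma=\gamma_t$
\[
\ell_t(\alpha_t)-\ell_t(\gamma_t)\le w_t\mathcal B_\Psi(\gamma_t,\alpha_t)-\frac{1}{\eta_t}\mathcal B_\Psi(\gamma_t,\alpha_{t+1})+\frac{\eta_t}{2\sigma_\Psi}\|\xi_t\|_*^2+\kappa_t\big[\mathcal B_\Psi(\gamma_t,\bar\alpha)+\mathcal B_\Psi(\alpha_{t+1},\alpha_t)\big].
\]

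\textbf{Step 2: telescoping with moving comparators (the main obstacle).} The weights do not match across steps: $\mathcal B_\Psi(\gamma_t,\alpha_{t+1})$ carries coefficient $1/\eta_t$ at step $t$, while at step $t+1$ the positive term carries $w_{t+1}$. Since $w_t=\rho_t/\eta_t\le1/\eta_t$, I weaken $-\frac1{\eta_t}\mathcal B_\Psi(\gamma_t,\alpha_{t+1})$ to $-w_t\mathcal B_\Psi(\gamma_t,\alpha_{t+1})$. For $t\ge2$ I then split
\[
w_t\mathcal B_\Psi(\gamma_t,\alpha_t)=w_{t-1}\mathcal B_\Psi(\gamma_{t-1},\alpha_t)+w_{t-1}\big[\mathcal B_\Psi(\gamma_t,\alpha_t)-\mathcal B_\Psi(\gamma_{t-1},\alpha_t)\big]+(w_t-w_{t-1})\mathcal B_\Psi(\gamma_t,\alpha_t).
\]
The middle bracket is bounded by $\Delta_{\Psi,\mathcal K}(\gamma_t,\gamma_{t-1})$, which gives the coefficient $\rho_{t-1}/\eta_{t-1}$. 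Because $w_t$ is non-decreasing, the last term is at most $(w_t-w_{t-1})\Omega_{\Psi,\mathcal K}$, and these increments telescope to $(w_N-w_1)\Omega_{\Psi,\mathcal K}$. Together with $w_1\mathcal B_\Psi(\gamma_1,\alpha_1)\le w_1\Omega_{\Psi,\mathcal K}$ and dropping $-w_N\mathcal B_\Psi(\gamma_N,\alpha_{N+1})\le0$, the total is $w_N\Omega_{\Psi,\mathcal K}=\rho_N\Omega_{\Psi,\mathcal K}/\eta_N$. This yields Eq.~\eqref{eq:dynamic_regret_dmd_bound}.

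\textbf{Step 3: the looser form and the remaining claims.} Eq.~\eqref{eq:dynamic_regret_dmd_bound_loose} follows by bounding $\rho_N\le1$, $\rho_{t-1}/\eta_{t-1}\le1/\eta_{t-1}\le1/\eta_t$ for non-increasing $\eta_t$, and $\mathcal B_\Psi(\alpha_{t+1},\alpha_t)\le\Omega_{\Psi,\mathcal K}$. The claim that the update coincides with Proposition~\ref{prop::discounted_md} holds because an unconstrained minimiser lying in $\mathcal K$ also minimises over $\mathcal K$, and the minimiser is unique by strict convexity. The reduction at $\rho_t=1$ is immediate: then $\kappa_t=0$, $w_t=1/\eta_t$, and the update and bound collapse to those of Theorem~\ref{thm:dynamic_regret_biom}.
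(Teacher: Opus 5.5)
Your proposal is correct and follows essentially the same route as the paper: the two-centre variational inequality combined with the Bregman three-point identity, Fenchel--Young with the leftover $(1-\rho_t)\mathcal B_\Psi(\alpha_{t+1},\alpha_t)$ retained in the remainder, and the $w_t$-weighted telescope controlled by $\Delta_{\Psi,\mathcal K}$ and $\Omega_{\Psi,\mathcal K}$. You merge the two $-\mathcal B_\Psi(\gamma_t,\alpha_{t+1})$ terms and then weaken $1/\eta_t$ to $w_t$, which is equivalent to the paper's dropping of the anchor copy; for the looser form you also implicitly use $\Delta_{\Psi,\mathcal K}\ge0$, which follows by taking $a=v$ in the supremum, exactly as the paper does.
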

\proofref{app::AppendixDMD}

\medskip
\begin{corollary}[Constant persistence, $\sqrt{N}$ schedule]\label{cor::dmd_sqrt}
Under the hypotheses of Theorem~\ref{thm:dynamic_regret_dmd}, take $\rho_t\equiv\rho\in[0,1)$ constant and $\eta_t=\eta\,t^{-1/2}$ with $\eta>0$, so that $w_t=\rho\sqrt{t}/\eta$ is non-decreasing. Assume $\|\xi_t\|_*\le G$ and write $V_N^{\Psi,\mathcal K}(\gamma)=\sum_{t=2}^N\Delta_{\Psi,\mathcal K}(\gamma_t,\gamma_{t-1})$. Then
\begin{equation*}
    \sum_{t=1}^N\{\ell_t(\alpha_t)-\ell_t(\gamma_t)\}
    \le\frac{\rho\,\Omega_{\Psi,\mathcal K}\sqrt{N}}{\eta}
    +\frac{\rho\sqrt{N}}{\eta}\,V_N^{\Psi,\mathcal K}(\gamma)
    +\frac{\eta G^2}{\sigma_\Psi}\sqrt{N}
    +\frac{1-\rho}{\eta}\sum_{t=1}^N\sqrt{t}\,\big[\mathcal{B}_\Psi(\gamma_t,\bar\alpha)+\Omega_{\Psi,\mathcal K}\big].
\end{equation*}
\end{corollary}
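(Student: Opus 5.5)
The plan is to specialise the exact bound~\eqref{eq:dynamic_regret_dmd_bound} of Theorem~\ref{thm:dynamic_regret_dmd}, not the looser form~\eqref{eq:dynamic_regret_dmd_bound_loose}. The exact form keeps the factor $\rho$ on the diameter and path terms, which is what the corollary states. First check the schedule hypothesis. With $\rho_t\equiv\rho$ and $\eta_t=\eta t^{-1/2}$, we have $w_t=\rho_t/\eta_t=\rho\sqrt t/\eta$, which is non-decreasing in $t$, including when $\rho=0$. All other hypotheses are inherited from the theorem, so the bound~\eqref{eq:dynamic_regret_dmd_bound} holds for every comparator path in $\mathcal K$. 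The remaining work is to bound its four terms one at a time.

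\emph{Diameter term.} It equals $\rho_N\Omega_{\Psi,\mathcal K}/\eta_N=\rho\,\Omega_{\Psi,\mathcal K}\sqrt N/\eta$ exactly.

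\emph{Path term.} Each weight is $\rho_{t-1}/\eta_{t-1}=\rho\sqrt{t-1}/\eta\le\rho\sqrt N/\eta$ for $2\le t\le N$. Replacing the weights by this upper bound needs every increment to be non-negative. This holds because $\Delta_{\Psi,\mathcal K}(u,v)\ge\mathcal B_\Psi(u,v)-\mathcal B_\Psi(v,v)=\mathcal B_\Psi(u,v)\ge0$, by taking $\alpha=v$ in the supremum. Summing gives $(\rho\sqrt N/\eta)V_N^{\Psi,\mathcal K}(\gamma)$.

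\emph{Gradient term.} Use $\|\xi_t\|_*\le G$ and the integral comparison $\sum_{t=1}^N t^{-1/2}\le 1+\int_1^N s^{-1/2}\,\ud s\le 2\sqrt N$. This gives $\frac{1}{2\sigma_\Psi}\sum_t\eta t^{-1/2}G^2\le\eta G^2\sqrt N/\sigma_\Psi$.

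\emph{Reference-gain remainder.} The weight is $(1-\rho_t)/\eta_t=(1-\rho)\sqrt t/\eta$. The iterate $\alpha_{t+1}$ is defined as an argmin over $\mathcal K$, and $\alpha_t\in\mathcal K$, so $\mathcal B_\Psi(\alpha_{t+1},\alpha_t)\le\Omega_{\Psi,\mathcal K}$; this also covers $\alpha_{N+1}$. Hence the remainder is at most $\frac{1-\rho}{\eta}\sum_{t=1}^N\sqrt t\,[\mathcal B_\Psi(\gamma_t,\bar\alpha)+\Omega_{\Psi,\mathcal K}]$, and the four terms together give the stated bound.

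The main obstacle is only bookkeeping, namely keeping the factor $\rho$ on the diameter and path terms. This is why the exact bound must be used rather than~\eqref{eq:dynamic_regret_dmd_bound_loose}: the latter bounds $\rho_{t-1}/\eta_{t-1}$ by $1/\eta_t$ and loses $\rho$. The other step that needs care is the sign argument for $\Delta_{\Psi,\mathcal K}$, which justifies enlarging its weights to $\rho\sqrt N/\eta$.
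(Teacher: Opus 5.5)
Your proposal is correct and follows the paper's proof. Both specialise the exact bound \eqref{eq:dynamic_regret_dmd_bound} using $\rho_N/\eta_N=\rho\sqrt N/\eta$, $\rho_{t-1}/\eta_{t-1}\le\rho\sqrt N/\eta$, $\sum_t\eta_t\le2\eta\sqrt N$ and $\mathcal B_\Psi(\alpha_{t+1},\alpha_t)\le\Omega_{\Psi,\mathcal K}$; your explicit check that $\Delta_{\Psi,\mathcal K}\ge0$, which justifies enlarging the path weights, is left implicit there (it was shown in the proof of Theorem~\ref{thm:dynamic_regret_biom}).
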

\proofref{app::proof_dmd_sqrt}
The first and third terms are $O(\sqrt{N})$, while the path-variation term is
$O\big(\sqrt{N}\,V_N^{\Psi,\mathcal K}(\gamma)\big)$, as in
Eq.~\eqref{eq:dynamic_regret_sqrt_stepsize}. The reference-gain remainder is
$O\big((1-\rho)\,\Omega_{\Psi,\mathcal K}\,N^{3/2}/\eta\big)$. Thus, with
fixed persistence $\rho<1$ and $\eta_t\propto t^{-1/2}$, this remainder
contributes $O(\sqrt{N})$ to the average-regret bound for a general comparator.
Equivalently, the effective shrinkage $\kappa_t=(1-\rho)/\eta_t$ grows as
$\sqrt{t}$ under this schedule; see the discussion after
Eq.~\eqref{eq::objective_gas}.

\medskip
\begin{corollary}[Coupled schedule, constant shrinkage $\rho_t=1-\kappa\eta_t$]\label{cor::dmd_coupled}
Under the hypotheses of Theorem~\ref{thm:dynamic_regret_dmd}, couple the schedules by $\rho_t=1-\kappa\eta_t$ for a constant $\kappa\ge0$ (so $\kappa_t\equiv\kappa$), with $\eta_t=\eta\,t^{-1/2}$ and the feasibility clause $\kappa\eta\le1$ (which ensures $\rho_t\in[0,1]$; then $w_t=1/\eta_t-\kappa$ is non-decreasing). Assume $\|\xi_t\|_*\le G$. Then
\begin{equation*}
    \sum_{t=1}^N\{\ell_t(\alpha_t)-\ell_t(\gamma_t)\}
    \le\frac{\Omega_{\Psi,\mathcal K}\sqrt{N}}{\eta}
    +\frac{\sqrt{N}}{\eta}\,V_N^{\Psi,\mathcal K}(\gamma)
    +\frac{\eta G^2}{\sigma_\Psi}\sqrt{N}
    +\kappa\sum_{t=1}^N\mathcal{B}_\Psi(\gamma_t,\bar\alpha)
    +\kappa\,N\,\Omega_{\Psi,\mathcal K},
\end{equation*}
and consequently the average regret satisfies
\begin{equation*}
    \frac{1}{N}\sum_{t=1}^N\{\ell_t(\alpha_t)-\ell_t(\gamma_t)\}
    \le\frac{\Omega_{\Psi,\mathcal K}}{\eta\sqrt{N}}+\frac{V_N^{\Psi,\mathcal K}(\gamma)}{\eta\sqrt{N}}+\frac{\eta G^2}{\sigma_\Psi\sqrt{N}}
    +\kappa\Big(\Omega_{\Psi,\mathcal K}+\frac{1}{N}\sum_{t=1}^N\mathcal{B}_\Psi(\gamma_t,\bar\alpha)\Big).
\end{equation*}
Suppose
\[
V_N^{\Psi,\mathcal K}(\gamma)=o(\sqrt N)
\quad\text{and}\quad
\frac{1}{N}\sum_{t=1}^N\mathcal B_\Psi(\gamma_t,\bar\alpha)
\longrightarrow\overline{\mathcal B}.
\]
Then
\[
\limsup_{N\to\infty}\frac{1}{N}\sum_{t=1}^N
\{\ell_t(\alpha_t)-\ell_t(\gamma_t)\}
\le
\kappa\big(\Omega_{\Psi,\mathcal K}+\overline{\mathcal B}\big).
\]
\end{corollary}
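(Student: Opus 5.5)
The plan is to specialise the cleaner bound \eqref{eq:dynamic_regret_dmd_bound_loose} of Theorem~\ref{thm:dynamic_regret_dmd} and then bound each term under the coupled schedule. No new regret analysis is needed.

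First we check the hypotheses. With $\eta_t=\eta t^{-1/2}$ and $\rho_t=1-\kappa\eta_t$, the constraint $\kappa\eta_t\le\kappa\eta\le1$ together with $\kappa\ge0$ gives $\rho_t\in[0,1]$. We also have $w_t=\rho_t/\eta_t=1/\eta_t-\kappa=\sqrt t/\eta-\kappa$, which is non-decreasing in $t$. The step sizes $\eta_t$ are positive and non-increasing, which is the extra condition under which the cleaner form \eqref{eq:dynamic_regret_dmd_bound_loose} was derived. Finally, $\kappa_t=(1-\rho_t)/\eta_t=\kappa$ for every $t$. So \eqref{eq:dynamic_regret_dmd_bound_loose} holds with $\kappa_t\equiv\kappa$.

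Next we bound the terms one at a time, exactly as in the derivation of \eqref{eq:dynamic_regret_sqrt_stepsize}.
\begin{itemize}
\item The diameter term is $\Omega_{\Psi,\mathcal K}/\eta_N=\Omega_{\Psi,\mathcal K}\sqrt N/\eta$.
\item For the path term, $1/\eta_t=\sqrt t/\eta\le\sqrt N/\eta$. Since $\Delta_{\Psi,\mathcal K}\ge0$ (take $\alpha=v$ in the supremum, which gives $\mathcal B_\Psi(u,v)\ge0$), the path term is at most $\sqrt N V_N^{\Psi,\mathcal K}(\gamma)/\eta$. Nonnegativity of each summand is what lets us pull out the uniform factor.
\item For the gradient term, $\|\xi_t\|_*\le G$ and $\sum_{t\le N}t^{-1/2}\le2\sqrt N$ give the bound $\eta G^2\sqrt N/\sigma_\Psi$.
\item The last two terms become $\kappa\sum_t\mathcal B_\Psi(\gamma_t,\bar\alpha)+\kappa N\Omega_{\Psi,\mathcal K}$ directly.
\end{itemize}
Summing these bounds gives the first display.

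Dividing by $N$ gives the average-regret display. For the limsup statement, the hypothesis $V_N=o(\sqrt N)$ makes $V_N/(\eta\sqrt N)\to0$. The terms $\Omega_{\Psi,\mathcal K}/(\eta\sqrt N)$ and $\eta G^2/(\sigma_\Psi\sqrt N)$ also vanish. The remaining average $N^{-1}\sum_t\mathcal B_\Psi(\gamma_t,\bar\alpha)$ converges to $\overline{\mathcal B}$ by assumption. Taking $\limsup$ therefore yields $\kappa(\Omega_{\Psi,\mathcal K}+\overline{\mathcal B})$.

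The only step that needs real care is verifying that the cleaner bound applies. This requires $w_t$ to be non-decreasing and $\eta_t$ to be non-increasing, and both were checked in the first step. The rest is routine bookkeeping.
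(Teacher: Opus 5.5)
Your proposal is correct and follows the paper's proof. Both specialise the looser bound \eqref{eq:dynamic_regret_dmd_bound_loose} with $\kappa_t\equiv\kappa$, use $\eta_N^{-1}=\sqrt N/\eta$, $\eta_t^{-1}\le\sqrt N/\eta$ and $\sum_{t\le N}\eta_t\le 2\eta\sqrt N$, then divide by $N$ and take the upper limit; your explicit remarks that $\Delta_{\Psi,\mathcal K}\ge0$ justifies pulling out the uniform factor and that $(\eta_t)$ must be non-increasing for the looser form are details the paper leaves implicit.
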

\proofref{app::proof_dmd_coupled}
The final term records the contribution of dual shrinkage to the upper bound.
For horizon-indexed schedules, taking $\kappa=\kappa_N\to0$ anneals this
contribution. For example, $\kappa_N=c/\sqrt N$ makes the shrinkage contribution
$O(N^{-1/2})$; if $V_N^{\Psi,\mathcal K}(\gamma)=O(1)$, the overall
average-regret bound has the same order.

\medskip
\noindent\emph{Expectation form (analogue of Corollary~\ref{cor::corollary}).}
Under the hypotheses of Theorem~\ref{thm:dynamic_regret_dmd}, suppose in
addition that $(\eta_t,\rho_t)$ is deterministic and $(\eta_t)$ is
non-increasing.
Let $L_{t+1}$ be finite and convex on $\mathcal K$ almost surely, with
$\xi_t\in\partial L_{t+1}(\alpha_t)$. Assume that
$\alpha_t,\gamma_t\in\mathcal K$ are $\mathcal F_t$-measurable, that $\xi_t$
is $\mathcal F_{t+1}$-measurable, and that
\[
L_{t+1}(\alpha_t),\quad L_{t+1}(\gamma_t),\quad
\Delta_{\Psi,\mathcal K}(\gamma_t,\gamma_{t-1}),\quad
\mathcal B_\Psi(\gamma_t,\bar\alpha),\quad \|\xi_t\|_*^2
\]
are integrable. Taking expectations in
Eq.~\eqref{eq:dynamic_regret_dmd_bound_loose} and using the tower property gives
\begin{align*}
\sum_{t=1}^N\mathsf E\!\left[
J_{t+1\mid t}(\alpha_t)-J_{t+1\mid t}(\gamma_t)\right]
\le{}&
\frac{\Omega_{\Psi,\mathcal K}}{\eta_N}
+\sum_{t=2}^N
\frac{\mathsf E[\Delta_{\Psi,\mathcal K}(\gamma_t,\gamma_{t-1})]}{\eta_t}\\
&+\frac{1}{2\sigma_\Psi}\sum_{t=1}^N
\eta_t\,\mathsf E[\|\xi_t\|_*^2]
+\sum_{t=1}^N\kappa_t\,
\mathsf E[\mathcal B_\Psi(\gamma_t,\bar\alpha)]\\
&+\Omega_{\Psi,\mathcal K}\sum_{t=1}^N\kappa_t.
\end{align*}

\section{Numerical diagnostics}\label{sec::SectionSix}
Experiment 6.1 studies the gradient identity and its Kalman-gain interpretation
in a Gaussian local-level model. Experiment 6.2 holds a designed quadratic state
target fixed to separate score scaling, link geometry, and gain dimension.
Experiment 6.3 evaluates the dynamic-regret mechanism as comparator variation
changes, and Experiment 6.4 compares persistent bounded and exponential-link
rules around breaks. Experiments 6.2 and 6.3 use controlled common contexts. The
recursive filters in Experiments 6.1 and 6.4 share observations, score formula,
and scaling rule, but their realised states, scores, and reachable sets are
method-specific.

Rule names combine the recursion --- memoryless \textrm{MD} or persistent
discounted mirror descent (\textrm{DMD}) --- with the link: unbounded exponential (\emph{exp}), bounded
logistic (\emph{logit}), or Euclidean projection (\emph{proj}). Thus
\textrm{DMD-exp} and \textrm{DMD-logit} are exponential- and logistic-link
specialisations of \textrm{aGAS}~\cite{blasques2019accelerating}; constant gain
and \textrm{AdaGrad} complete the set. The compact regret guarantee covers the
bounded or localised rules, not untruncated \textrm{DMD-exp}.
\ref{app::protocol} reports link-specific mobilities.

Specifically, \textrm{MD} uses
$\vartheta_{t+1}=\vartheta_t-\eta_t\xi_t$, whereas \textrm{DMD} uses
$\vartheta_{t+1}=(1-\rho_t)\bar\vartheta+\rho_t\vartheta_t-\eta_t\xi_t$.
The suffix \emph{exp} denotes the unbounded map $\alpha=\mathrm e^{f/2}$,
\emph{logit} a bounded logistic map, and \emph{proj} Euclidean projection onto
$\mathcal A$. Bounded rules use either an interior link or projection;
\textrm{DMD-exp} is only numerically clipped.

Experiments~6.1 and~6.2 select a common bounded-gain interval from a broad range
and screen candidate links by a mobility-weighted path cost. Their gain-rule
hyperparameters are fitted in sample. The switching benchmark fixes
$\mathcal A=[0,1]$ for the bounded rules and estimates each specification
path by path using predictive maximum likelihood. The regret diagnostic fixes
its interval and learning-rate schedule. In the empirical panel, a
market-specific interval is selected once from the initial training window and
held fixed across bounded rules and subsequent refits. \ref{app::numdetails}
gives the details. We report Monte Carlo means of predictive loss, state RMSE
where available, and gain diagnostics, with paired standard errors for central
comparisons.

The synthetic realised-loss results are mechanism diagnostics rather than genuine
out-of-sample forecasts because their hyperparameters are fitted on the design
being evaluated. Pilot selection is nevertheless common across methods, so no
rule receives a different admissible range. When a latent state or oracle is
available, RMSE shows whether better prediction also means better tracking.

\subsection{Gaussian local-level gain tracking}\label{subsec::one}
We use Example~\ref{ex::example}'s Gaussian local-level score specification with fixed unit
observation variance and scaled-score direction $d_t=y_t-\lambda_t$. A
time-varying state-innovation variance generates a moving Kalman-gain schedule.
We simulate 256 paths of length $T=1080$, with initial prediction variance
$P_1=1/9$, initial mean $m_1=0$, and gain interval
$\mathcal A=[0.02,0.80]$.
Table~\ref{tab:exp1_fixed_score_line_biom} compares \textrm{MD-proj},
\textrm{MD-logit}, and \textrm{DMD-logit}; reverse-cloglog appears only in pilot
screening (\ref{app::protocol}).

The target gain is piecewise smooth and projected forward to satisfy the
Kalman-compatible constraint. Write
\begin{equation}\label{eq::schedule}
    \mathcal P_t(c_1,\ldots,c_K;b_1,\ldots,b_{K-1})=c_k
    \quad\text{when } b_{k-1}<t\leq b_k,
\end{equation}
where $b_0=0$ and $b_K=T$. Thus, $c_k$ is the level over the $k$th regime.
A slow sinusoid makes the
target move within regimes, not only at breaks. \ref{app::fixedscore}
gives the schedule, the constraint
$\alpha_{t+1}\ge\alpha_t/(1+\alpha_t)+10^{-4}$, and the inversion producing
$\{q_t\}$. We compare the infeasible Kalman oracle, a fitted constant gain,
projected OMD, logistic MD, and logistic DMD with fitted persistence.

The inversion selects state-innovation variances so that the Riccati recursion
started from $P_1=1/9$ returns the declared schedule. The latent state is drawn
from the matching prior $x_1\sim\mathcal N(0,P_1)$ and the oracle filter fixes
$m_1=0$. The benchmark is therefore the correctly specified Kalman filter for
the simulated DGP. \ref{app::fixedscore} gives the inversion and
initialisation.

In this experiment, $\lambda_t$ denotes the running state of a generic filter
and equals $m_t$ for the Kalman oracle.
At each date $t\geq2$, the adaptive rules use
$\xi_{t-1}=-e_{t-1}e_t$, where $e_t=y_t-\lambda_t$, to update the gain from
$\alpha_{t-1}$ to $\alpha_t$, and then apply $\alpha_t$ to the current forecast
error in the state update. The constant minimises realised predictive loss. Projected OMD uses
$\alpha_{t+1}=\Pi_{\mathcal A}(\alpha_t-\eta\xi_t)$; logistic MD maps an
unconstrained $\vartheta_t$ into $\mathcal A$ and updates
$\vartheta_{t+1}=\vartheta_t-\eta_t\xi_t$. Logistic DMD adds
$\vartheta_{t+1}=(1-\rho)\bar\vartheta+\rho\vartheta_t-\eta_t\xi_t$.
The logistic rules therefore share their geometry and differ only in memory,
while the projected rule supplies a Euclidean benchmark.

\begin{table}[t]
\centering
\small
\begin{tabular}{lrrrr}
\toprule
Method & Mean loss & State RMSE & Gain RMSE & Mean gain \\
\midrule
Kalman oracle & 2.0245 & 0.6292 & 0.0000 & 0.3981 \\
Constant & 2.1640 & 0.7330 & 0.2962 & 0.5367 \\
MD-proj & 2.1057 & 0.6917 & 0.1813 & 0.4352 \\
MD-logit & 2.1099 & 0.6942 & 0.1945 & 0.4586 \\
DMD-logit & 2.1001 & 0.6875 & 0.1722 & 0.4257 \\
\bottomrule
\end{tabular}
\caption{\textbf{Gaussian local-level gain tracking.} All feasible filters use the same observations, Gaussian score formula, unit scaling, and gain interval $\mathcal A=[0.02,0.80]$; their recursively generated states and score realisations are method-specific. Entries average $256$ paths of length $T=1080$. The infeasible Kalman oracle uses the correctly specified prior and time-varying gain schedule described in \ref{app::fixedscore}; gain RMSE is computed relative to that schedule.}
\label{tab:exp1_fixed_score_line_biom}
\end{table}

The constant gain remains competitive, but \textrm{DMD-logit} is the best feasible
adaptive rule. Relative to the constant, it lowers mean loss from $2.1640$ to
$2.1001$ (paired difference $-0.0639$, standard error $0.0016$) and state RMSE
from $0.7330$ to $0.6875$ (difference $-0.0456$, standard error $0.0009$).
Comparing the two logistic rows shows that persistence helps track the structured
target. Appendix Figure~\ref{fig:exp1_fixed_score_line_paths_biom} plots the paths.
Each adaptive gain changes distance along its own realised forecast-error line.

Because the sinusoidal component moves the target within regimes, the design
distinguishes continuous tracking from reactions only at breaks. All methods
use a common score formula and common observations, while recursive feedback
produces method-specific forecast errors. The comparison therefore evaluates
complete filters. Its interpretation is empirical tracking performance across
filtering rules; the common-context regret theorem addresses a different
comparison.

\subsection{Bivariate scaling, link geometry and vector gains}\label{subsec::two}
This experiment separates score scaling, link geometry, and gain dimension. Let
$e_t$ denote the simulated raw-error vector, with
$\Sigma_t=\diag(\tau_t^2,1)$, and set $d_t=B_te_t$, where
$B_t=\diag(b_{1,t},1)$. We compare unit scaling
($b_{1,t}=\tau_t^{-2}$), inverse-square-root Fisher scaling
($b_{1,t}=\tau_t^{-1}$), and inverse-Fisher scaling ($b_{1,t}=1$) while
holding a designed quadratic state target fixed. The design
uses $T=420$ and 256 Monte Carlo paths.
\ref{app::bivariate} gives the DGP algebra.

The time-varying scale $\tau_t$ changes the first coordinate's relative
volatility. The design also includes a moving designed gain target, AR(1) raw
errors, and heterogeneous coordinate speeds. The weighted state loss
reduces to a scalar gain problem for a common gain, making the scalar and diagonal
comparisons directly interpretable.

\subsubsection{Score scaling and target feasibility}
Score scaling determines whether the coordinatewise minimiser of the designed
state loss is feasible. Let
$\delta_t^{\mathrm{state}}=\beta_t\odot e_t$ and
$\alpha_t^{\mathrm{unc}}=\delta_t^{\mathrm{state}}\oslash d_t$. This quantity
minimises the designed weighted quadratic state loss specified in
\ref{app::bivariate}. The conditional predictive oracle
$\mathsf E_t[y_{t+1}]-\lambda_t$ of Section~\ref{sec::SectionTwo} is a distinct
object. When the minimiser lies outside the admissible interval, projection
leaves a state-target gap under the chosen scaling.
Table~\ref{tab:exp2_oracle_feasibility_biom} and
Figure~\ref{fig:exp2_component_oracle_feasibility_biom} show that unit scaling
often makes the first-coordinate target minimiser infeasible; inverse-square-root
Fisher scaling reduces but retains the problem, whereas inverse-Fisher scaling
removes it. Appendix
Figure~\ref{fig:exp2_score_geometry_biom} shows the underlying geometry.

The exceedance frequency measures how often the target lies outside the
admissible interval. Under unit scaling,
the first-coordinate minimiser exceeds the admissible range on nearly half the
dates. Inverse-square-root Fisher scaling lowers the exceedance magnitude while
leaving its frequency unchanged; inverse-Fisher scaling places the coordinatewise
gain targets in common units and restores feasibility. In this design, score scaling determines
target feasibility, while link geometry and persistence govern adaptation within
the admissible set.

\begin{table}[!ht]
\centering
\small
\begin{tabular}{lrrrr}
\toprule
Scaling & Max \(\alpha_1^{\mathrm{unc}}\) & \(\alpha_1^{\mathrm{unc}}>\alpha_H\) & Mean clipped \(\alpha_1\) & Mean clipped \(\alpha_2\) \\
\midrule
Unit & 5.5848 & 0.4881 & 0.6718 & 0.6171 \\
\(\mathrm{Fisher}^{-1/2}\) & 2.3270 & 0.4881 & 0.6902 & 0.6171 \\
\(\mathrm{Fisher}^{-1}\) & 1.0700 & 0.0000 & 0.6171 & 0.6171 \\
\bottomrule
\end{tabular}
\caption{\textbf{Designed target feasibility under alternative score scalings.} Entries hold the designed quadratic state target fixed while varying score scaling and hence the update geometry. The table reports the first-coordinate unconstrained loss minimiser, its exceedance frequency, and the mean clipped minimisers.}
\label{tab:exp2_oracle_feasibility_biom}
\end{table}

\begin{figure}[!ht]
\centering
\includegraphics[width=1\linewidth]{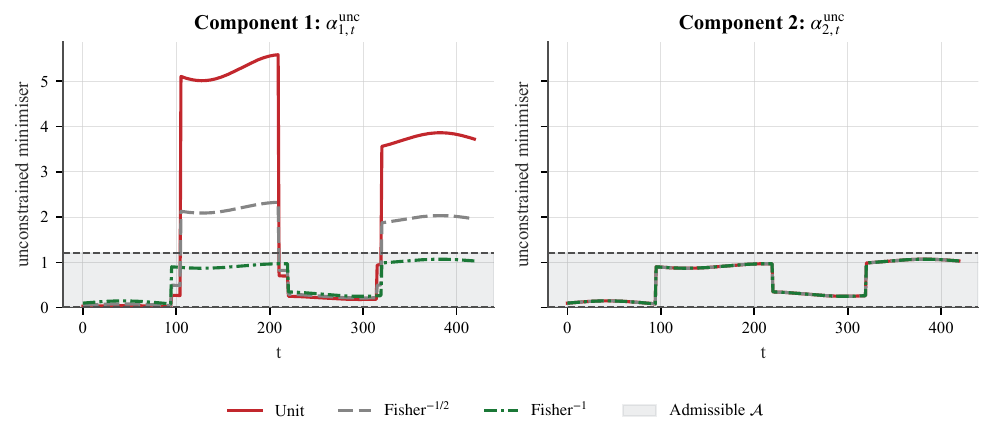}
\caption{\textbf{Componentwise target feasibility.} The panels plot the unconstrained designed-loss minimisers under the three scaling choices and the common admissible interval. Unit and inverse-square-root Fisher scaling frequently place the first-coordinate minimiser above the interval, whereas inverse-Fisher scaling keeps it feasible.}
\label{fig:exp2_component_oracle_feasibility_biom}
\end{figure}
\ref{app::bivariate} reports two additional unit-scaling comparisons. Among common
scalar gains, \textrm{DMD-logit} has the lowest mean loss and cloglog-OMD the
lowest state-target RMSE; relative to the constant, the \textrm{DMD-logit} loss
reduction is $0.2570$ (standard error $0.0044$). Allowing diagonal gains improves
the well-scaled second coordinate, while the first-coordinate feasibility gap
remains.

This comparison separates gain dimension from score normalisation. An additional
gain coordinate permits coordinate-specific adaptation where scores are well
scaled, while targets outside the feasible interval remain constrained by
scaling. Gain dimensionality and score normalisation therefore address distinct
aspects of update design.

\subsubsection{Heterogeneous inverse-Fisher-scaled design: scalar versus vector gains}
Under inverse-Fisher scaling, both coordinatewise targets are feasible in this
design. Heterogeneous adjustment speeds then place the designed target in the
diagonal gain class.
Table~\ref{tab:exp2_heterogeneous_biom} shows common gains near $0.51$ and
diagonal gains near $(0.86,0.21)$. Diagonal \textrm{MD-proj} has loss $0.0026$
and state-target RMSE $0.0675$, against $0.0186$ and $0.3800$ for common
\textrm{MD-logit}; paired reductions are $0.0160$ (standard error $0.0003$) and
$0.3102$ in mean pathwise RMSE (standard error $0.0029$). Figure~\ref{fig:exp2_heterogeneous_gain_paths_biom}
and Appendix Figure~\ref{fig:exp2_heterogeneous_state_biom} show why: a common
gain must compromise between a high and a low coordinate-specific target.

With a common target speed, the designed target lies in the common-gain class
and a scalar gain can pool information across coordinates. Heterogeneous target
speeds place it in the diagonal class, making coordinate-specific gains valuable.
The comparison therefore isolates gain dimension from a generic increase in
flexibility.

\vspace{0.1 cm}
These designed quadratic experiments mirror the benchmark decomposition in
Eq.~\eqref{eq:benchmark_decomposition_biom}. They isolate update geometry under
the state-target loss; Section~\ref{sec::SectionTwo} studies the distinct
conditional predictive Kullback--Leibler criterion.

\begin{table}[!ht]
\centering
\small
\begin{tabular}{llrrrr}
\toprule
Method & Gain & Mean loss & State-target RMSE & Mean \(\alpha_1\) & Mean \(\alpha_2\) \\
\midrule
MD-proj & common & 0.0188 & 0.3823 & 0.5131 & 0.5131 \\
MD-logit & common & 0.0186 & 0.3800 & 0.5125 & 0.5125 \\
DMD-logit & common & 0.0173 & 0.3886 & 0.5157 & 0.5157 \\
Constant & diagonal & 0.0298 & 0.2404 & 0.8706 & 0.2166 \\
MD-proj & diagonal & 0.0026 & 0.0675 & 0.8611 & 0.2124 \\
MD-logit & diagonal & 0.0032 & 0.0704 & 0.8608 & 0.2099 \\
DMD-logit & diagonal & 0.0029 & 0.0686 & 0.8647 & 0.2205 \\
\bottomrule
\end{tabular}
\caption{\textbf{Heterogeneous inverse-Fisher-scaled design: common scalar versus diagonal vector gains.} Inverse-Fisher scaling places the coordinatewise designed-loss minimisers inside the common admissible interval, while the coordinates require different gains. Common rules learn a compromise near $0.51$; diagonal rules learn approximately $(0.86,0.21)$ and reduce quadratic loss and state-target RMSE.}
\label{tab:exp2_heterogeneous_biom}
\end{table}

\begin{figure}[t]
\centering
\includegraphics[width=1\linewidth]{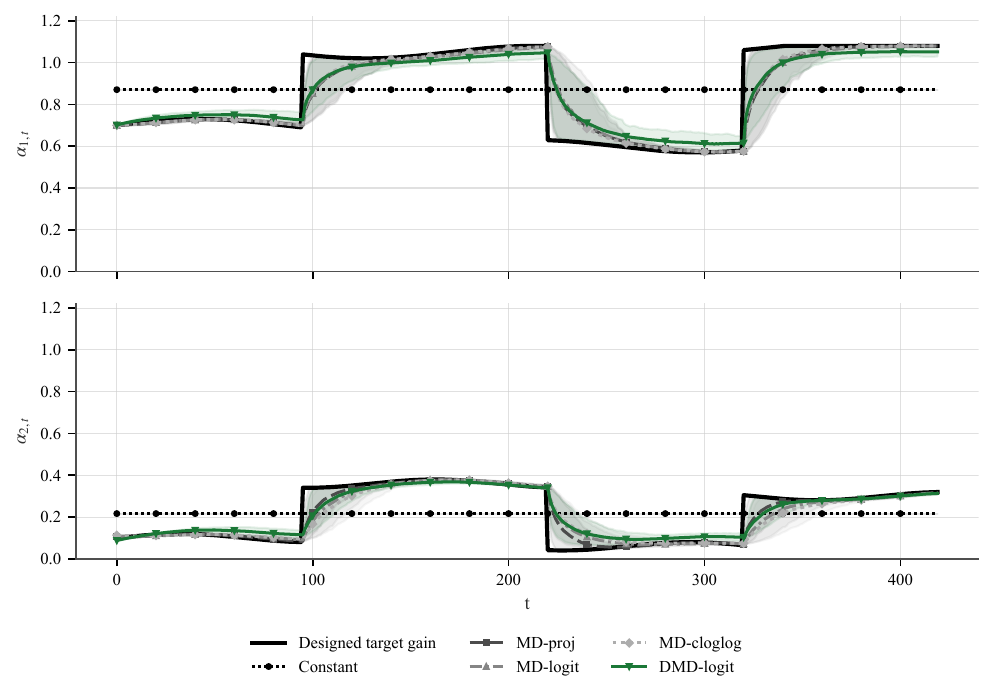}
\caption{\textbf{Diagonal gain paths in the heterogeneous inverse-Fisher-scaled design.} The panels report learned coordinate-specific gains and designed target gains. The first coordinate requires a high gain and the second a low gain; a common scalar gain must compromise.}
\label{fig:exp2_heterogeneous_gain_paths_biom}
\end{figure}

\subsection{Dynamic regret under moving comparators}\label{subsec::three}
This convex experiment examines how comparator variation affects dynamic regret,
holding estimation, score scaling, and link choice fixed. For $t=1,\ldots,T$, define
\begin{equation*}
    L_t(\alpha)=\frac{1}{2}(\delta_t-\alpha d_t)^2,\quad\quad \delta_t=\beta_t d_t + r_t,
\end{equation*}
where $d_t\in[-1,1]$, $r_t$ is bounded, and
$\mathcal A=[0.02,0.98]$. Projected Euclidean OMD uses
$\eta_t=0.35/\sqrt t$. Across 512 paths of length $T=800$, we compare static,
smooth-moving, and fast-moving comparator paths using total variation
$V_T(\beta)=\sum_{t=2}^T|\beta_t-\beta_{t-1}|$, cumulative regret, and the
realised mirror-step certificate derived in the proof of
Theorem~\ref{thm:dynamic_regret_biom}.
Table~\ref{tab:regret_bound_diagnostic_biom} shows mean regret rising from
$0.0041$ for the static comparator, whose path length is zero, to $2.6250$ and
$5.2364$ for path lengths $1.1178$ and $11.1780$. The mean certificate exceeds
mean regret in each design.
Figure~\ref{fig:regret_bound_diagnostic_biom} shows cumulative regret rising
more persistently as the comparator moves more rapidly.

For each path, cumulative regret is
$R_T=\sum_{t=1}^T\{L_t(\alpha_t)-L_t(\beta_t)\}$. The three designs use the
same bounded-loss construction and learning-rate schedule, with comparator
variation as the only feature deliberately changed across designs.

The realised certificate retains the observed gradient energy and comparator
variation, yielding a sharper bound than one based on a uniform gradient
envelope. At the end of the sample, its Monte Carlo mean is approximately twice
mean regret in each design; this ratio is specific to the experiment. The timing
of accumulation also matches the theory. After the initial learning phase,
regret is nearly flat under the static comparator, rises mainly when the smooth
comparator begins to move, and continues to increase as the fast comparator
changes throughout the sample. This experiment complements the preceding
gain-learning experiments by isolating the dynamic-regret mechanism.

\begin{table}[!ht]
\centering
\small
\begin{tabular}{lrrrr}
\toprule
Design & Path length & Final regret & Certificate & Cert./regret \\
\midrule
Static & 0.0000 & 0.0041 & 0.0084 & 2.0372 \\
Smooth & 1.1178 & 2.6250 & 5.2447 & 1.9980 \\
Fast & 11.1780 & 5.2364 & 10.4661 & 1.9987 \\
\bottomrule
\end{tabular}
\caption{\textbf{Dynamic regret for projected OMD on bounded quadratic gain losses.} The table reports Monte Carlo averages over $512$ paths of length $T=800$. The comparator is $\beta_t$, and path length is its total variation. Final regret is cumulative regret relative to $\beta_t$. The certificate is the realised mirror-step quantity computed from the observed gradients and comparator path.}
\label{tab:regret_bound_diagnostic_biom}
\end{table}

\begin{figure}[t]
\centering
\includegraphics[width=1\linewidth]{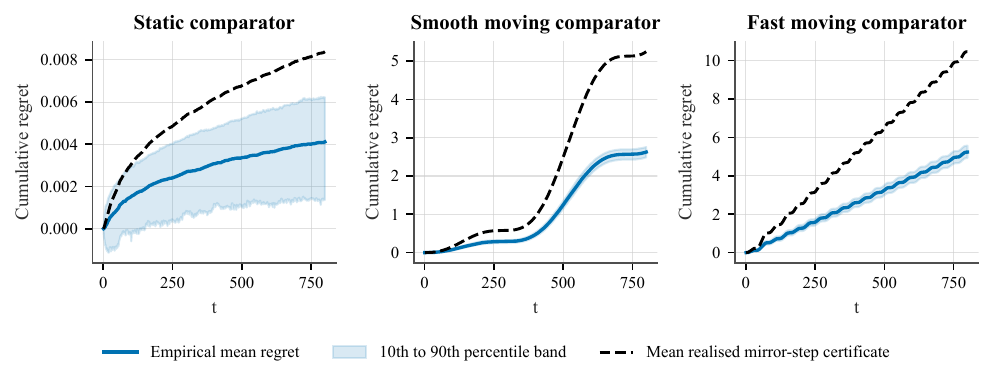}
\caption{\textbf{Dynamic regret under comparator variation.} Each panel reports cumulative regret for projected OMD on bounded quadratic gain losses. The solid line is the Monte Carlo mean of cumulative regret, the shaded region is the 10th--90th percentile pathwise band, and the dashed line is the Monte Carlo mean of the realised mirror-step certificate. The panels correspond to static, smooth-moving, and fast-moving comparator paths. Both cumulative regret and the certificate increase with comparator variation.}
\label{fig:regret_bound_diagnostic_biom}
\end{figure}

\FloatBarrier
\subsection{Switching local-level benchmark and relation to \textrm{aGAS}}\label{subsec::four}
Our final experiment revisits the switching local-level benchmark in Section 4
of Blasques~et~al.~\cite{blasques2019accelerating}:
\begin{equation*}
    y_t = \tilde \mu_t + \varepsilon_t,\quad t \in \mathbb{Z},
\end{equation*}
where $\varepsilon_t$ is i.i.d.\ standard Gaussian and $\tilde\mu_t$ switches
between $0$ and $\delta$ every $100\gamma$ periods. Because Fisher information is
constant, score scaling is fixed across specifications. At each grid point, we
estimate each specification separately by maximum likelihood on every path and
evaluate filtered-state RMSE. We use samples of length $T=1000$ and $M=1000$
Monte Carlo replications per grid cell. Table~\ref{tab:blasques_biom} reports
the $\gamma=2.5$ slice; Appendix Table~\ref{tab:blasques_full_biom} reports the
full grid.

The comparison includes the constant-gain filter, the exponential-link
\textrm{aGAS} specification \textrm{DMD-exp}, the persistent bounded rules
\textrm{DMD-logit} and \textrm{DMD-proj}, the memoryless \textrm{MD-logit} rule,
and \textrm{AdaGrad}.

\begin{table}[!ht]
\centering
\small
\begin{tabular}{ccrrrrrr}
\toprule
\(\delta\) & \(\gamma\) & Constant & \textrm{DMD-exp} & \textrm{DMD-logit} & \textrm{DMD-proj} & \textrm{MD-logit} & AdaGrad \\
\midrule
0.0 & 2.5 & 0.030 & 0.044 & 0.033 & \textbf{0.030} & 0.031 & \underline{0.030} \\
0.5 & 2.5 & 0.169 & 0.169 & \textbf{0.169} & 0.170 & 0.169 & \underline{0.169} \\
1.0 & 2.5 & 0.239 & \underline{0.232} & \textbf{0.231} & 0.235 & 0.238 & 0.238 \\
2.0 & 2.5 & 0.342 & \underline{0.290} & \textbf{0.285} & 0.293 & 0.339 & 0.340 \\
3.0 & 2.5 & 0.424 & 0.332 & \textbf{0.320} & \underline{0.325} & 0.412 & 0.418 \\
\bottomrule
\end{tabular}
\caption{\textbf{Switching local-level benchmark (\(\gamma=2.5\) slice).} Entries are Monte Carlo mean filtered-state RMSEs for the switching local-level design at regime-length parameter \(\gamma=2.5\) (regimes of \(250\) periods), with one row per break size \(\delta\). \textrm{DMD-exp} is the nominally unbounded exponential-link \textrm{aGAS} specification; the remaining adaptive rules are bounded mirror variants. Each specification is estimated path by path by MLE on samples of length \(T=1000\), using \(M=1000\) replications per grid cell. The best method in each row is shown in bold and the second best is underlined; ranking uses full precision, so the displayed values may coincide at three decimals. Appendix Table~\ref{tab:blasques_full_biom} reports the full \(\delta\times\gamma\) grid.}
\label{tab:blasques_biom}
\end{table}

At $\delta=0$, the constant-gain and bounded rules have similar RMSEs;
\textrm{DMD-exp} performs worst, consistent with greater noise sensitivity. The
constant-gain rule is competitive for small breaks. For larger breaks,
discounting reduces lag: \textrm{DMD-exp} outperforms the constant, and bounded
\textrm{DMD-logit} performs best. Standard errors of paired RMSE differences
from the constant are at most $0.0014$ (median $0.0003$); three-decimal ties are
practically equivalent.

Across the full grid, discounting allows gains to rise after a break and revert
within a regime. In stable regimes, the same reactivity can amplify noise, which
helps explain \textrm{DMD-exp}'s higher RMSE at $\delta=0$. Relative to the
constant-gain rule, both the large-break improvements and \textrm{DMD-exp}'s
higher RMSE at $\delta=0$ are several standard errors from zero, while
small-break differences remain modest.

The comparison also clarifies the connection to \textrm{aGAS}: the
exponential and logistic links yield \textrm{DMD-exp} and \textrm{DMD-logit},
while \textrm{MD-logit} removes persistence. Persistence is most helpful for
larger breaks, and bounded geometry improves performance in most cells:
\textrm{DMD-logit} has lower RMSE than \textrm{DMD-exp} in 18 of 20 cells,
including all eight large-break cells. Both exceptions involve small breaks, and
the two methods' RMSEs differ by less than $0.0003$. Section~\ref{sec:empirical}
examines this stability trade-off out of sample.

The experiment illustrates how the two regularisation mechanisms shape
adaptation: the admissible interval limits responses to breaks and isolated
shocks, while persistence governs reversion towards the reference gain. The
regret theorem complements this filter-level comparison by evaluating gain
adaptation along a fixed score-conditioned path.

\section{Empirical illustration}\label{sec:empirical}

We study bounded gain learning in volatility applications. We revisit the
in-sample design of Blasques~et~al.~\cite{blasques2019accelerating} with
full-sample maximum likelihood and BIC, then evaluate expanding-window
one-step-ahead density forecasts across twelve markets.

\subsection{In-sample return-density illustration}\label{sec:emp_insample}
\paragraph{\textbf{Data and model}}
We use Thai baht/US dollar (FRED \texttt{DEXTHUS}, 1985--2017), S\&P~500
(Yahoo \texttt{\textasciicircum GSPC}, 2004--2024), and euro/US dollar
(Yahoo \texttt{EURUSD=X}) returns.\footnote{Equity-index and exchange-rate price data are from Yahoo Finance (\url{https://finance.yahoo.com}); the Thai-baht/US-dollar rate is series \texttt{DEXTHUS} from FRED, Federal Reserve Bank of St.\ Louis (\url{https://fred.stlouisfed.org}). All series are publicly available.}
The Thai baht case revisits the July 1997 peg break in
Blasques~et~al.~\cite{blasques2019accelerating}. Returns are demeaned daily log
changes multiplied by 100. We model the log scale-squared state as
$h_{t+1}=\omega_\lambda+\beta_\lambda h_t+\alpha_t s_t$, with
$0<\beta_\lambda<1$. We fit Gaussian and Student-$t$ observation densities;
under Student-$t$, the score driver is bounded and saturating. The gain loss is
convex under this parametrisation. The comparison includes the constant-gain
filter, the exponential-link \textrm{aGAS} specification \textrm{DMD-exp}, the
persistent bounded rules \textrm{DMD-logit} and \textrm{DMD-proj}, the memoryless
\textrm{MD-logit} rule, and \textrm{AdaGrad}. The bounded rules remain adaptive
while limiting large gain spikes. \textrm{DMD-exp} is numerically capped and
provides the \textrm{aGAS} benchmark.

After $y_t$ is observed, the score driving the update to $h_{t+1}$ is
$s_t=\partial\log p(y_t\mid h_t)/\partial h_t$. Under the Gaussian specification,
$y_t\mid\mathcal F_{t-1}\sim\mathcal N(0,\mathrm e^{h_t})$. Under the
Student-$t_{\nu_{\mathrm{df}}}$ specification, $\mathrm e^{h_t}$ is the scale
squared and the conditional variance is
$\mathrm e^{h_t}\nu_{\mathrm{df}}/(\nu_{\mathrm{df}}-2)$, for
$\nu_{\mathrm{df}}>2$.
In \textrm{DMD-exp}, the newly observed $y_t$ enters the $f_{t+1}$ recursion
through the consecutive-score product, with $\alpha_t=\mathrm e^{f_{t+1}}$.
Each bounded interval contains the fitted constant gain and discounted-logistic
path, with a small margin; its lower bound keeps score feedback active and its
upper bound limits extreme transient gains.

\paragraph{\textbf{In-sample results}}
Table~\ref{tab:emp_insample} reports the BIC comparison. Adaptive gains attain
the lowest BIC twice, both for Thai baht: \textrm{MD-logit}
under Gaussian and \textrm{DMD-logit} under Student-$t$. \textrm{MD-logit} lowers
the Gaussian BIC by $77$ points relative to the constant gain. The constant gain
is selected in all four S\&P~500 and euro/US-dollar columns; \textrm{DMD-exp} is
never selected. Student-$t$ has lower BIC than Gaussian for every series.
Overall, evidence for gain adaptation is mixed.

\begin{table}[!ht]
\centering
\small
\resizebox{\textwidth}{!}{\input{tables/tab-emp-insample.tex}}
\caption{\textbf{In-sample Bayesian information criterion by method, series, and observation density.}
Lower values are preferred; $N$ denotes Gaussian and $t$ Student-$t$. Bold marks the lowest BIC in each column.
Gaussian specifications estimate four parameters for the constant gain, five for \textrm{MD-logit} and \textrm{AdaGrad}, and six for \textrm{DMD-logit}, \textrm{DMD-proj}, and the nominally unbounded \textrm{DMD-exp}.
Student-$t$ specifications add the degrees-of-freedom parameter $\nu_{\mathrm{df}}$. Adaptive gains attain the lowest BIC in two of the six columns.}
\label{tab:emp_insample}
\end{table}

\paragraph{\textbf{The learned gain}}
Figure~\ref{fig:emp_thb_gain} compares \textrm{DMD-exp} and \textrm{DMD-logit}
for the Thai baht. At the largest shocks, the nominally unbounded exponential-link
\textrm{aGAS} gain rises by more than two orders of magnitude, reaching~$55$. The
bounded \textrm{DMD-logit} gain responds more moderately and remains within its
admissible interval, $[0.15,0.52]$. When consecutive scores have the same sign,
their positive product raises the gain, as implied by the gradient identity in
Section~\ref{sec::SectionTwo}, while the logistic mirror geometry bounds the
response.

\begin{figure}[t]\centering
\includegraphics[width=0.85\textwidth]{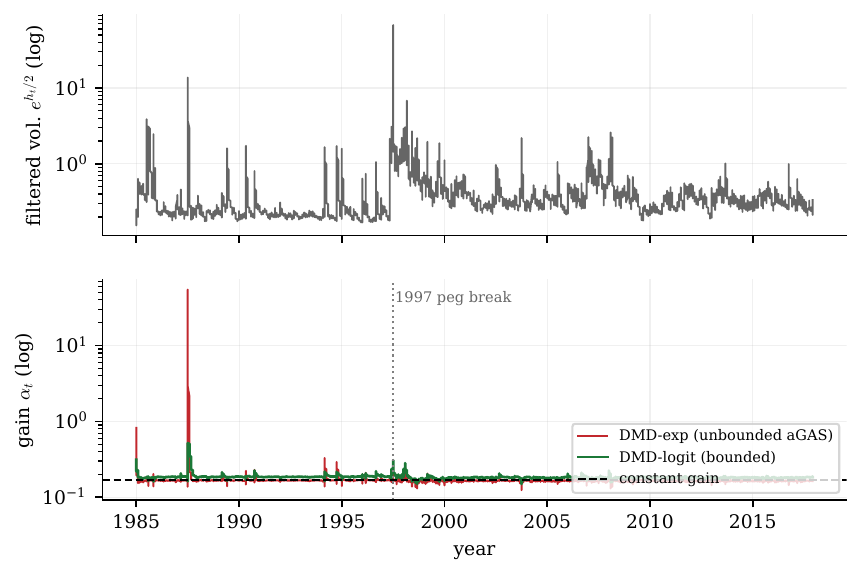}
\caption{\textbf{Thai baht/US dollar, 1985--2017.} Top: filtered volatility on a
logarithmic scale. Bottom: gains on a logarithmic scale. The \emph{nominally
unbounded} exponential-link \textrm{aGAS} gain (\textrm{DMD-exp}) reaches~$55$ at
the largest shocks; bounded \textrm{DMD-logit} remains within $[0.15,0.52]$, and
the constant gain is dashed.}
\label{fig:emp_thb_gain}
\end{figure}

\FloatBarrier
\subsection{Out-of-sample realised-variance panel}\label{sec:emp_oos}
We evaluate twelve equity indices through 2024. Eleven samples start in 2000,
whereas the EURO STOXX 50 sample starts in 2007. Table~\ref{tab:emp_panel_data}
lists the market symbols, countries, sample periods, observation counts, and
evaluation lengths.
The target is the logarithm of the Parkinson realised-variance
proxy~\cite{parkinson1980extreme}, modelled as a Gaussian score-driven level. We
assess expanding-window one-step-ahead forecasts by the Gaussian negative log score
on the log-variance scale and by QLIKE~\cite{patton2011volatility} on the variance
scale; inference uses Diebold--Mariano tests~\cite{diebold1995comparing} and Model
Confidence Sets~\cite{hansen2011model}. For each market, the bounded-rule
interval is selected from the initial training window and held fixed thereafter.
All rules use the same score formula and scaling convention. \ref{app::estimation}
details the model, expanding-window and refitting schemes, HAC standardisation,
and Model Confidence Set implementation.

Figure~\ref{fig:emp_sp500_gain} shows the bounded gain paths for the S\&P~500.
Both learned gains rise around the 2008 financial crisis and the 2020 pandemic
shock while remaining within the fixed admissible interval.

\begin{figure}[t]\centering
\includegraphics[width=0.85\textwidth]{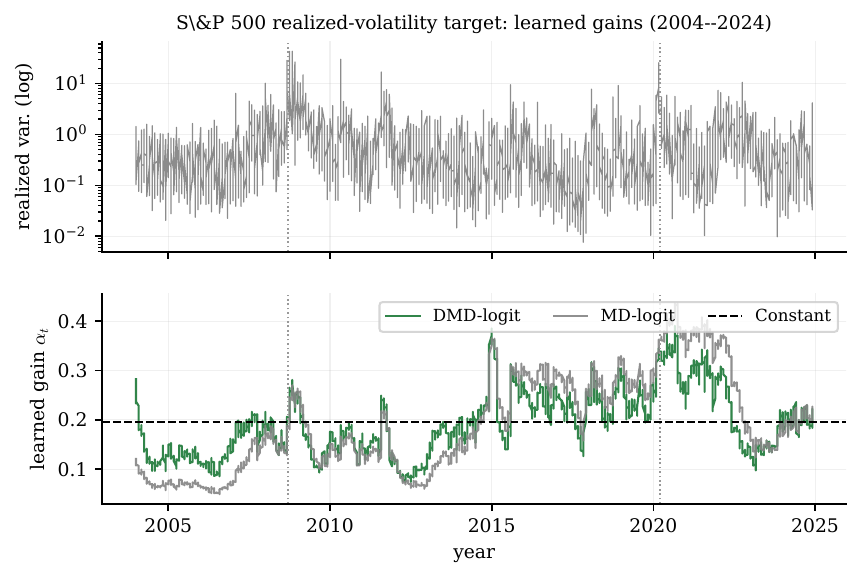}
\caption{\textbf{S\&P~500 log realised-variance proxy and learned gains, 2004--2024.}
Top: log realised-variance proxy. Bottom: gains learned by the
bounded \textrm{DMD-logit} and \textrm{MD-logit} rules, with the constant gain shown
as a dashed line. Dotted lines mark the 2008 financial crisis and the 2020 pandemic shock.}
\label{fig:emp_sp500_gain}
\end{figure}

In the daily-return analysis, the constant-gain specification remains in every
Model Confidence Set, so we focus on realised variance. An adaptive rule attains
the lowest mean negative log score in all twelve markets. \textrm{DMD-logit} has
a lower mean negative log score than the constant gain in eleven markets; DAX is
the sole exception, and six one-sided Diebold--Mariano statistics fall below
$-1.64$. The constant gain is excluded from the $90\%$ Model Confidence Set in
four markets under the negative log score and three under QLIKE.
Table~\ref{tab:emp_oos_panel} and Figure~\ref{fig:emp_panel} report the full results.

\begin{table}[t]
\centering
\small
\resizebox{\textwidth}{!}{%
\begin{tabular}{lrrrlrrc}
\toprule
Market & Constant & \textrm{DMD-logit} & \textrm{DMD-exp} & Best & DM & $p$ & MCS \\
\midrule
SPX     & 1.27654 & 1.27108 & 1.27195  & \textrm{DMD-logit} & $-2.27$ & 0.012 & in  \\
NDX     & 1.24401 & 1.24106 & 1.23992  & \textrm{DMD-exp}   & $-1.19$ & 0.117 & in  \\
DJI     & 1.25435 & 1.24925 & 1.25032  & \textrm{DMD-logit} & $-2.72$ & 0.003 & out \\
FTSE    & 1.19763 & 1.19705 & 1.19742  & \textrm{DMD-logit} & $-0.61$ & 0.272 & in  \\
DAX     & 1.25327 & 1.25352 & 1.25278  & \textrm{DMD-exp}   & $+0.17$ & 0.568 & in  \\
CAC     & 1.22874 & 1.22666 & 1.22672  & \textrm{DMD-proj}  & $-1.38$ & 0.083 & out \\
STOXX50 & 1.47414 & 1.45935 & 1.46058  & \textrm{DMD-logit} & $-4.41$ & $<0.001$ & out \\
NIKKEI  & 1.31670 & 1.31415 & 1.31527  & \textrm{DMD-logit} & $-1.25$ & 0.105 & in  \\
HSI     & 1.18561 & 1.18222 & 1.18325  & \textrm{DMD-logit} & $-1.70$ & 0.045 & in  \\
TSX     & 1.63699 & 1.63699 & 14.75468 & \textrm{MD-logit}  & $-1.97$ & 0.025 & in  \\
ASX     & 1.36643 & 1.33249 & 1.33913  & \textrm{DMD-logit} & $-7.78$ & $<0.001$ & out \\
BVSP    & 1.16250 & 1.16169 & 1.16197  & \textrm{DMD-logit} & $-0.78$ & 0.218 & in  \\
\bottomrule
\end{tabular}}
\caption{\textbf{Out-of-sample realised-variance results under the negative log score.}
Columns report the per-market mean predictive loss for the constant gain,
\textrm{DMD-logit}, and \textrm{DMD-exp}; the best adaptive method; the one-sided
Diebold--Mariano statistic and $p$-value for \textrm{DMD-logit} against the constant
gain; and whether the constant gain belongs to the $90\%$ Model Confidence Set.
Negative DM values favour \textrm{DMD-logit}. \textrm{DMD-proj} and \textrm{MD-logit}
are omitted from the loss columns; they are the best adaptive methods for CAC
($1.22466$) and TSX ($1.63698$), respectively. The unusually high TSX loss for
\textrm{DMD-exp} ($14.75468$) reflects the transient gain spikes discussed in
Section~\ref{sec:emp_bvu}.}
\label{tab:emp_oos_panel}
\end{table}

\paragraph{\textbf{Panel-level evidence}} Because the per-market $p$-values are
unadjusted, counts of individually significant markets are descriptive. Across the
seven methods, \textrm{DMD-logit} has the best average rank under both the negative
log score ($1.58$) and QLIKE ($1.75$), compared with $4.33$ and $4.00$ for the
constant gain. The log-HAR benchmark \cite{corsi2009}, fitted and evaluated over the
same windows, ranks last ($6.08$ and $6.17$); \textrm{DMD-logit} outperforms it in
every market under both criteria. To preserve cross-market and serial dependence,
we resample the $D=5480$ pooled evaluation dates in blocks of length $18$, using
$B=5000$ bootstrap replications. The pooled \textrm{DMD-logit}-minus-constant
difference in negative log score is $-0.0057$ ($95\%$ confidence interval
$[-0.0076,-0.0041]$; two-sided $p<0.001$); market-level differences are negative
in eleven of the twelve markets.

\FloatBarrier
\subsection{Bounded versus unbounded gain behaviour}\label{sec:emp_bvu}
The panel also reveals how bounded and nominally unbounded gain dynamics differ.
Averaged across markets, the time-series standard deviation of the learned gain is
$0.71$ for \textrm{DMD-exp} and $0.046$ for \textrm{DMD-logit}, a ratio of about
sixteen. For the TSX, \textrm{DMD-exp} reaches its numerical ceiling on $2.1\%$ of
out-of-sample days. Its median daily negative log score ($1.30$) is close to the
constant-gain value ($1.29$), yet the $3.6\%$ of losses exceeding five account for
$91\%$ of total loss and raise the mean to $14.75$. This concentration of loss
illustrates the transient-spike risk of the capped \textrm{DMD-exp} implementation.

The pilot-selected interval for the bounded TSX rules is narrow
(Table~\ref{tab:emp_gain_intervals}), so their gain paths remain close to the
constant-gain benchmark. The TSX results therefore mainly reflect the
transient-spike risk of \textrm{DMD-exp}. Across the twelve markets,
\textrm{DMD-logit} records a lower negative log score than \textrm{DMD-exp} in ten
and a lower QLIKE loss in nine. Its improvements over the constant-gain benchmark
occur in the other markets spanning multiple crises.

\begin{figure}[t]\centering
\includegraphics[width=0.95\textwidth]{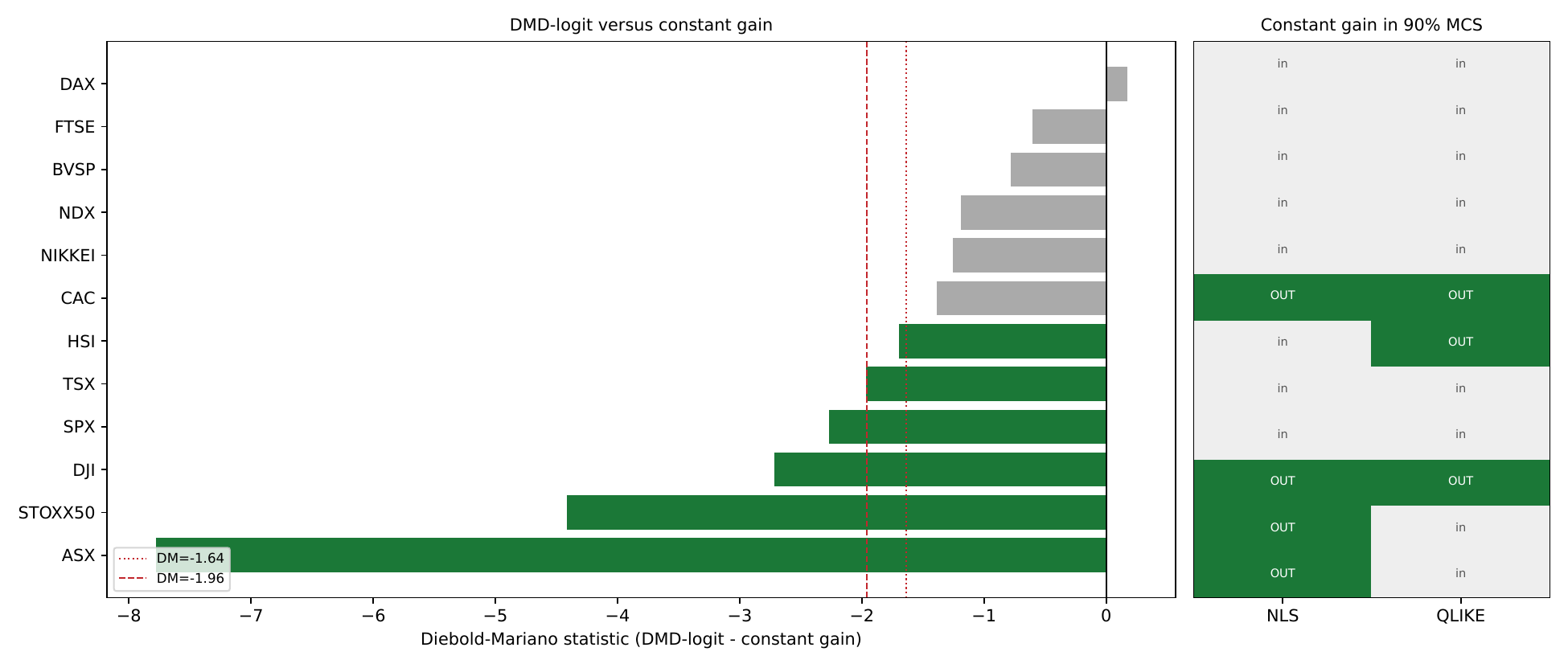}
\caption{\textbf{Out-of-sample realised-variance evidence across twelve equity indices.}
Left: per-market Diebold--Mariano statistics comparing \textrm{DMD-logit} with the
constant gain under the negative log score. Negative values favour \textrm{DMD-logit};
vertical lines mark the one-sided $10\%$ ($-1.64$) and $5\%$ ($-1.96$) critical values.
Right: membership of the constant gain in the $90\%$ Model Confidence Set under the
negative log score (NLS) and QLIKE; ``OUT'' denotes exclusion. Samples end in 2024;
eleven begin in 2000 and the EURO STOXX~50 begins in 2007.}
\label{fig:emp_panel}
\end{figure}

\FloatBarrier
\section{Conclusion}\label{sec::Conclusion}
We study online learning of the observation-driven scale parameter in score-driven
filters, which we call the gain. Given the realised score and scaling rule, each
admissible gain selects a reachable next state and its associated predictive
density. In the scalar unscaled case, the negative product of consecutive scores
gives the stochastic gradient of the next-period gain loss. Gain links induce
mirror geometry, while persistence acts as a Bregman pull towards a reference gain.

Under the compactness, convexity, integrability, and schedule conditions in
Section~\ref{sec::SectionFive}, projected or localised mirror updates satisfy
dynamic-regret bounds against \(\mathcal F_t\)-measurable moving gains. Under the
calendar convention \(\alpha_{\lambda,t}=g(f_{t+1})\), the
Blasques--Gorgi--Koopman recursion admits an exact discounted-MD representation
when \(\beta_f<1\), and also at the unit-persistence endpoint when
\(\omega_f=0\). Applying these bounds to the data-dependent learning rates of
the original aGAS recursion requires the additional conditions in
Remark~\ref{rem::agas}.

The guarantee compares candidate gains on the same realised reachable set at each
date. Complete counterfactual filter policies lie outside its scope, and reported
gain magnitudes depend on score scaling. The recursive empirical filters therefore
provide complementary evidence. In the twelve-market panel, \textrm{DMD-logit} has
a lower mean negative log score than the constant gain in eleven markets and avoids
the extreme transients of \textrm{DMD-exp}, although market-level statistical
evidence remains mixed.

\clearpage
\appendix

\section{Proofs}\label{app:proofs}
\setcounter{equation}{0}

\subsection{Proofs for Section~\ref{sec::SectionOne}: the conditional gain problem}
\label{app:proofs_conditional_gain}

\subsubsection{Proof of Proposition~\ref{prop::kl_representation}}\label{app::proof_kl_representation}
\begin{proof}
Conditional on \(\mathcal F_t\), \(y_{t+1}\) has density \(\tilde p_{t+1}\) under
\(\widetilde Q_{t+1}\). Therefore, Eq.~\eqref{eq::scoringruleexpected} gives
\[
    J_{t+1\mid t}(\alpha)
    =
    -\int_{\mathcal Y}\log p(y\mid\hat\lambda_t(\alpha))\,\tilde p_{t+1}(y)\,\nu(\ud y).
\]
Adding and subtracting
\(\int_{\mathcal Y}\tilde p_{t+1}\log\tilde p_{t+1}\,\nu(\ud y)=-C_{t+1}\),
which is finite almost surely by assumption, gives
\[
    J_{t+1\mid t}(\alpha)
    =
    -\int_{\mathcal Y}\tilde p_{t+1}\log\tilde p_{t+1}\,\nu(\ud y)
    +
    \int_{\mathcal Y}\tilde p_{t+1}\log\frac{\tilde p_{t+1}}{p(\cdot\mid\hat\lambda_t(\alpha))}\,\nu(\ud y)
    =
    C_{t+1}+\mathsf{KL}\big(\widetilde Q_{t+1},P_{\hat\lambda_t(\alpha)}\big).
\]
The rearrangement is valid because \(C_{t+1}\) and the cross-entropy are both well
defined by hypothesis. We use the conventions \(0\log(0/q)=0\) and
\(q\log(q/0)=+\infty\) for \(q>0\), covering points at which
\(\tilde p_{t+1}=0\) or \(p(\cdot\mid\hat\lambda_t(\alpha))=0\).
\end{proof}

\subsection{Proofs for Section~\ref{sec::SectionTwo}: gain-gradient identities}
\label{app:proofs_gain_gradient}

\subsubsection{Proof of Proposition~\ref{prop::proposition_1}}\label{app::proof_proposition_1}
\begin{proof}
Fix \(\alpha\in\inter(\mathcal A)\) and choose a ball centred at \(\alpha\) whose
closure is contained in \(U_\alpha\). Assumption~\ref{ass::local_condition}(ii)
bounds the coordinatewise difference quotients on this ball by the conditionally
integrable variable \(M_{t+1,\alpha}\), and
Assumption~\ref{ass::local_condition}(iii) ensures that \(L_{t+1}(\alpha)\) is
conditionally integrable. Conditional dominated convergence therefore gives
\[
    \nabla_\alpha J_{t+1\mid t}(\alpha)
    =
    \nabla_\alpha \mathsf E_t[L_{t+1}(\alpha)]
    =
    \mathsf E_t[\nabla_\alpha L_{t+1}(\alpha)].
\]
Using the pathwise identity \eqref{eq::gradientlogloss}, this gives
\[
    \nabla_\alpha J_{t+1\mid t}(\alpha)
    =
    \mathsf E_t[G_{t+1}(\alpha)]
    =
    -
    \mathsf E_t[
        H_t^{\top}s_{t+1}(\alpha)
    ],
\]
where \(H_t\) is the gain-loading matrix fixed at date \(t\). This proves
\eqref{eq::nabla_gradient}. For the uniform statement, choose a nested compact
convex exhaustion \((\mathcal K_n)_{n\geq1}\) of \(\inter(\mathcal A)\), with
\(\mathcal K_n\subset\inter(\mathcal K_{n+1})\). On \(\mathcal K_{n+1}\),
Assumption~\ref{ass::global_condition}, base-point integrability, and the gradient
envelope ensure that \(L_{t+1}(u)\) is
conditionally integrable for every \(u\). Parameterised conditional expectation
and conditional dominated differentiation then yield a jointly measurable version
on \(\Omega\times\mathcal K_n\),
\[
    \Gamma_{t,n}(\omega,u)
    :=\mathsf E_t[G_{t+1}(u)](\omega),
    \qquad (\omega,u)\in\Omega\times\mathcal K_n,
\]
which is continuous in \(u\) and equals \(\nabla_\alpha J_{t+1\mid t}(u)\) for
every \(u\in\mathcal K_n\), outside a common null set. On each overlap, the two
versions agree almost surely at every point of a countable dense subset. Continuity
extends this equality to the whole overlap. Since the exhaustion is countable, the
exceptional sets can be combined into a single null set. Outside this set, the
versions patch together to define a continuous \(\Gamma_t\) on
\(\inter(\mathcal A)\), proving the uniform identity.

On \(\mathcal A_\varepsilon\), joint measurability of \(\Gamma_t\) makes
\(\Gamma_t(\cdot,\alpha_t)\) measurable for every \(\mathcal F_t\)-measurable
\(\alpha_t\in\mathcal A_\varepsilon\), while the conditional envelope ensures
integrability. Therefore,
\[
    \nabla_\alpha J_{t+1\mid t}(\alpha_t)
    =
    \Gamma_t(\cdot,\alpha_t)
    =
    \mathsf E_t[G_{t+1}(\alpha_t)]
    =
    -\mathsf E_t[
        H_t^{\top}s(y_{t+1},\hat\lambda_t(\alpha_t))
    ].
\]
Substituting \(\lambda_{t+1}=\hat\lambda_t(\alpha_t)\) establishes
Eq.~\eqref{eq::nabla_gradient_2}.
\end{proof}

\subsubsection{Proof of Corollary~\ref{cor::corollary_1}}\label{app::proof_predictable_gradient}
\begin{proof}
Fix \(\alpha\). Repeating the preceding difference-quotient argument with
\(\mathsf E_{t-1}\) in place of \(\mathsf E_t\) gives
\[
    \nabla_\alpha\bar J_{t\mid t-1}(\alpha)
    =
    \mathsf E_{t-1}[G_{t+1}(\alpha)]
    =
    -\mathsf E_{t-1}[H_t^\top s_{t+1}(\alpha)].
\]
The final equality follows from Eq.~\eqref{eq::gradientlogloss}, establishing
Eq.~\eqref{eq::decision_time_gradient}.

Under the uniform condition, the same compact-exhaustion argument, now conditional
on \(\mathcal F_{t-1}\), yields a jointly measurable version
\(\bar\Gamma_{t-1}\) that is continuous in \(u\):
\[
    \bar\Gamma_{t-1}(\omega,u)
    :=
    \mathsf E_{t-1}[G_{t+1}(u)](\omega)
    =\nabla_\alpha\bar J_{t\mid t-1}(u),
    \qquad (\omega,u)\in\Omega\times\inter(\mathcal A).
\]
Therefore, for any \(\mathcal F_{t-1}\)-measurable
\(\alpha_t\in\mathcal A_\varepsilon\),
\[
    \nabla_\alpha\bar J_{t\mid t-1}(\alpha_t)
    =
    \mathsf E_{t-1}[G_{t+1}(\alpha_t)]
    =
    -\mathsf E_{t-1}[
        H_t^\top s(y_{t+1},\hat\lambda_t(\alpha_t))
    ].
\]
Substituting \(\lambda_{t+1}=\hat\lambda_t(\alpha_t)\) establishes
Eq.~\eqref{eq::decision_time_gradient_policy}.
\end{proof}

\subsection{Proofs for Section~\ref{sec::SectionThree}: mirror geometry}
\label{app:proofs_mirror_geometry}

\subsubsection{Proof of Proposition~\ref{prop::discounted_md}}\label{app::proof_discounted_md}
\begin{proof}
    Since $(1-\rho_t)+\rho_t=1$, collecting the two Bregman terms in Eq.~\eqref{eq::objective_gas} leaves $\Psi_g(\alpha)$ plus an affine function of $\alpha$. The objective is therefore strictly convex on $\mathcal I$, with gradient
    \begin{equation*}
        \eta_t \xi_t + (1-\rho_t)\{\nabla \Psi_g(\alpha)-\nabla \Psi_g(\bar \alpha)\} + \rho_t \{\nabla \Psi_g(\alpha)-\nabla \Psi_g(\alpha_t)\}.
    \end{equation*}
Since $\nabla \Psi_g=g^{-1}$, the first-order condition becomes
\begin{equation*}
    g^{-1}(\alpha)=(1-\rho_t)\bar \vartheta + \rho_t \vartheta_t - \eta_t\xi_t.
\end{equation*}
The right-hand side lies in $\mathbb R^k$, and applying $g$ yields $\alpha_{t+1}\in\mathcal I$. Strict convexity establishes that $\alpha_{t+1}$ is the unique minimiser.
\end{proof}

\subsection{Proofs for Section~\ref{sec::SectionFive}: regret bounds}
\label{app:proofs_regret}

\subsubsection{Proof of Proposition~\ref{prop::convexity_transfer}}\label{app::proof_convexity_transfer}
\begin{proof}
Fix $\alpha,\beta\in\mathcal A$ and $\tau\in[0,1]$. Convexity of $\mathcal A$ gives $\tau\alpha+(1-\tau)\beta\in\mathcal A$, while affineness of the state map gives
\begin{equation}\label{eq:convexity_transfer_affine}
    b_t + H_t(\tau \alpha + (1-\tau) \beta)
    = \tau (b_t + H_t\alpha) + (1-\tau) (b_t + H_t\beta).
\end{equation}
Thus $\mathcal R_t$, the affine image of $\mathcal A$, is convex.

For items (1) and (2), convexity of $\ell_{t+1}^{\mathrm{st}}$ or $R_{t+1}$ on $\mathcal R_t$, together with Eq.~\eqref{eq:convexity_transfer_affine}, establishes convexity of $L_{t+1}$ or $J_{t+1\mid t}$ on $\mathcal A$.

For item (3), $\alpha\ne\beta$ and $H_t(\alpha-\beta)\ne0$ make the two states on the right-hand side of Eq.~\eqref{eq:convexity_transfer_affine} distinct. For $\tau\in(0,1)$, strict convexity of the relevant state criterion then yields the strict-convexity inequality for the corresponding gain criterion.

For item (4), apply the strong-convexity inequality to $u=b_t+H_t\alpha$ and $v=b_t+H_t\beta$. Combining it with Eq.~\eqref{eq:convexity_transfer_affine} and $u-v=H_t(\alpha-\beta)$ yields Eq.~\eqref{eq::inequality}. If $\sigma_{\min}(H_t)^2\ge c_t>0$, then
\[
    \|H_t(\alpha-\beta)\|^2
    \ge \sigma_{\min}(H_t)^2\|\alpha-\beta\|^2
    \ge c_t\|\alpha-\beta\|^2.
\]
Hence $L_{t+1}$ is $\mu c_t$-strongly convex on $\mathcal A$. The same argument, with $R_{t+1}$ in place of $\ell_{t+1}^{\mathrm{st}}$, establishes $\mu c_t$-strong convexity of $J_{t+1\mid t}$.
\end{proof}

\subsubsection{Proof of Theorem~\ref{thm:dynamic_regret_biom}}\label{app::Appendix}
\begin{proof}
Throughout, $\Psi$ is $\sigma_\Psi$-strongly convex on $\mathcal K$, the step
sizes $(\eta_t)_{t=1}^{N}$ are positive and non-increasing, and the iterates and
inputs are those of Theorem~\ref{thm:dynamic_regret_biom}.

\medskip
\noindent\emph{Step 1: a purely geometric one-step mirror inequality.} The inequality below uses only the mirror geometry and allows an arbitrary input $\xi\in\mathbb{R}^{k}$. For $\eta>0$ and $\beta\in\mathcal K$, set
\[
\alpha^+=\argmin_{\alpha\in\mathcal K}
\{\eta\langle\xi,\alpha\rangle+\mathcal B_\Psi(\alpha,\beta)\}.
\]
Then, for every $u\in\mathcal K$,
\begin{equation}\label{eq::mirror_three_point}
    \langle \xi, \beta - u\rangle \leq \frac{\mathcal{B}_\Psi(u,\beta)-\mathcal{B}_\Psi(u,\alpha^{+})}{\eta}
  +
  \frac{\eta\|\xi\|_*^2}{2\sigma_\Psi}.
\end{equation}
Equation~\eqref{eq::mirror_three_point} is the standard three-point mirror inequality. The first-order condition for the displayed minimisation is
\[
\big\langle
\eta\xi+\nabla\Psi(\alpha^+)-\nabla\Psi(\beta),
u-\alpha^+
\big\rangle\ge0,
\qquad u\in\mathcal K.
\]
Combining this condition with the Bregman three-point identity gives Eq.~\eqref{eq::mirror_three_point} after applying Fenchel--Young and strong convexity to bound
\[
\langle\xi,\beta-\alpha^+\rangle
\le
\frac{\eta\|\xi\|_*^2}{2\sigma_\Psi}
+\frac{1}{\eta}\mathcal B_\Psi(\alpha^+,\beta).
\]

\medskip
\noindent\emph{Step 2: linearisation by convexity.} Fix $t\in\{1,\ldots,N\}$. Since $\xi_{t}\in\partial\ell_t(\alpha_t)$, convexity gives
\begin{equation}\label{eq::lin_step}
\ell_t(\alpha_t)-\ell_t(\gamma_t)
  \le \langle \xi_{t}, \alpha_t-\gamma_t\rangle.
\end{equation}

\medskip
\noindent\emph{Step 3: applying the mirror inequality at the update pair.} By the update definition, Eq.~\eqref{eq::mirror_three_point} applies with $\eta=\eta_{t}$, $\beta=\alpha_{t}$, $\xi=\xi_{t}$, $\alpha^{+}=\alpha_{t+1}$, and $u=\gamma_t$. Combining it with Eq.~\eqref{eq::lin_step} yields
\begin{equation}\label{eq::mirror_applied}
\ell_t(\alpha_t)-\ell_t(\gamma_t)
  \le
  \frac{
    \mathcal{B}_{\Psi}(\gamma_t,\alpha_{t})
    -
    \mathcal{B}_{\Psi}(\gamma_t,\alpha_{t+1})
  }{\eta_{t}}
  +
  \frac{\eta_{t}\|\xi_{t}\|_*^2}{2\sigma_\Psi}.
\end{equation}
\medskip
\noindent\emph{Step 4: summation and telescoping.} Summing Eq.~\eqref{eq::mirror_applied} over $t=1,\ldots,N$ produces the gradient-energy term
\begin{equation*}
    \frac{1}{2\sigma_\Psi}\sum_{t=1}^N\eta_t\|\xi_t\|_*^2.
\end{equation*}
Define the remaining Bregman sum as
\begin{equation*}
    \mathcal S_N:=\sum_{t=1}^{N}\frac{
    \mathcal{B}_{\Psi}(\gamma_t,\alpha_{t})
    -
    \mathcal{B}_{\Psi}(\gamma_t,\alpha_{t+1})
  }{\eta_{t}}.
\end{equation*}
Set $w_t:=1/\eta_{t}$, which is non-decreasing because $(\eta_t)$ is non-increasing. Reindexing the subtracted terms in $\mathcal S_N$ and discarding the final contribution $-w_N\mathcal{B}_\Psi(\gamma_N,\alpha_{N+1})\le0$ gives
\begin{equation*}
    \mathcal S_N
    \le
    w_1\mathcal{B}_\Psi(\gamma_1,\alpha_1)
    +
    \sum_{t=2}^{N}
    \Big\{
        w_{t-1}\big[\mathcal{B}_\Psi(\gamma_t,\alpha_{t})-\mathcal{B}_\Psi(\gamma_{t-1},\alpha_{t})\big]
        +
        (w_t-w_{t-1})\,\mathcal{B}_\Psi(\gamma_t,\alpha_{t})
    \Big\}.
\end{equation*}
By the definition in Eq.~\eqref{eq::bregmanincrements},
\(
    \mathcal{B}_\Psi(\gamma_t,\alpha_{t})-\mathcal{B}_\Psi(\gamma_{t-1},\alpha_{t})
    \le
    \Delta_{\Psi,\mathcal K}(\gamma_t,\gamma_{t-1})
\).
Evaluating the same supremum at $\alpha=v$ also gives
\(
    \Delta_{\Psi,\mathcal K}(u,v)
    \ge \mathcal B_\Psi(u,v)\ge0
\).
Since $w_t-w_{t-1}\ge0$, Eq.~\eqref{eq::Omega} bounds the corresponding Bregman terms by $\Omega_{\Psi,\mathcal K}$, and their coefficients telescope:
\(
    w_1\Omega_{\Psi,\mathcal K}+\sum_{t=2}^{N}(w_t-w_{t-1})\Omega_{\Psi,\mathcal K}=w_N\Omega_{\Psi,\mathcal K}=\Omega_{\Psi,\mathcal K}/\eta_{N}.
\)
Moreover, $\Delta_{\Psi,\mathcal K}(\gamma_t,\gamma_{t-1})\ge0$ and $w_{t-1}=1/\eta_{t-1}\le 1/\eta_t$, so we obtain
\begin{equation*}
    \mathcal S_N\le
    \frac{\Omega_{\Psi,\mathcal K}}{\eta_{N}}
    +\sum_{t=2}^{N}\frac{\Delta_{\Psi,\mathcal K}(\gamma_t,\gamma_{t-1})}{\eta_{t}}.
\end{equation*}
Combining the energy term with this bound on $\mathcal S_N$ yields exactly the right-hand side of Eq.~\eqref{eq:dynamic_regret_bound_biom}, which completes the proof.
\end{proof}

\subsubsection{Proof of Corollary~\ref{cor::corollary}}\label{app::proof_expected_regret}
\begin{proof}
Apply Theorem~\ref{thm:dynamic_regret_biom} pathwise with $\ell_t=L_{t+1}$.
Because $\alpha_t$ and $\gamma_t$ are $\mathcal F_t$-measurable, the tower
property gives
\(\mathsf E[L_{t+1}(\alpha_t)]=\mathsf E[J_{t+1\mid t}(\alpha_t)]\) and
\(\mathsf E[L_{t+1}(\gamma_t)]=\mathsf E[J_{t+1\mid t}(\gamma_t)]\).
Since $\Omega_{\Psi,\mathcal K}$ and $\sigma_\Psi$ are deterministic constants,
taking expectations in the pathwise bound yields the stated result.
\end{proof}

\subsubsection{Proof of Theorem~\ref{thm:dynamic_regret_dmd}}\label{app::AppendixDMD}
\begin{proof}
Write $\mathcal{B}:=\mathcal{B}_\Psi$. Throughout, $\Psi$ is
$\sigma_\Psi$-strongly convex on $\mathcal{K}$,
$\Omega_{\Psi,\mathcal K}<\infty$, and all iterates, the reference
$\bar\alpha$, and the comparator path lie in $\mathcal{K}$. We adapt the four
steps from the proof of Theorem~\ref{thm:dynamic_regret_biom} to the two-centre
(anchored) proximal update in Eq.~\eqref{eq::objective_gas}.

\medskip
\noindent\emph{Step 1: two-centre variational inequality.} Strict convexity
makes $\alpha_{t+1}$ the unique minimizer. The first-order
condition---Proposition~\ref{prop::discounted_md} in the interior case, or its
normal-cone counterpart under projection---implies, for every
$u\in\mathcal{K}$,
\begin{equation}\label{eq::dmd_VI}
    \big\langle\,\eta_t\xi_t+(1-\rho_t)[\nabla\Psi(\alpha_{t+1})-\nabla\Psi(\bar\alpha)]
    +\rho_t[\nabla\Psi(\alpha_{t+1})-\nabla\Psi(\alpha_t)],\;u-\alpha_{t+1}\,\big\rangle\ge0.
\end{equation}
Apply the Bregman three-point identity
$\langle\nabla\Psi(a)-\nabla\Psi(b),u-a\rangle=\mathcal{B}(u,b)-\mathcal{B}(u,a)-\mathcal{B}(a,b)$
to Eq.~\eqref{eq::dmd_VI}, using $(a,b)=(\alpha_{t+1},\alpha_t)$ for the
memory term and $(a,b)=(\alpha_{t+1},\bar\alpha)$ for the anchor term. Setting
$u=\gamma_t$ and rearranging gives
\begin{align}\label{eq::dmd_threepoint}
    \eta_t\langle\xi_t,\alpha_{t+1}-\gamma_t\rangle
    \le\;&\rho_t\big[\mathcal{B}(\gamma_t,\alpha_t)-\mathcal{B}(\gamma_t,\alpha_{t+1})-\mathcal{B}(\alpha_{t+1},\alpha_t)\big]\nonumber\\
    &+(1-\rho_t)\big[\mathcal{B}(\gamma_t,\bar\alpha)-\mathcal{B}(\gamma_t,\alpha_{t+1})-\mathcal{B}(\alpha_{t+1},\bar\alpha)\big].
\end{align}

\medskip
\noindent\emph{Step 2: linearisation by convexity.} Since $\ell_t$ is convex and $\xi_t\in\partial\ell_t(\alpha_t)$,
\begin{equation}\label{eq::dmd_lin}
    \ell_t(\alpha_t)-\ell_t(\gamma_t)\le\langle\xi_t,\alpha_t-\gamma_t\rangle
    =\langle\xi_t,\alpha_{t+1}-\gamma_t\rangle+\langle\xi_t,\alpha_t-\alpha_{t+1}\rangle.
\end{equation}

\medskip
\noindent\emph{Step 3: the composite one-step inequality.} Fenchel--Young and
the strong-convexity bound
$\mathcal{B}(\alpha_{t+1},\alpha_t)\ge\tfrac{\sigma_\Psi}{2}\|\alpha_{t+1}-\alpha_t\|^2$
give
\begin{equation}\label{eq::dmd_FY}
    \eta_t\langle\xi_t,\alpha_t-\alpha_{t+1}\rangle
    \le\frac{\eta_t^2\|\xi_t\|_*^2}{2\sigma_\Psi}+\frac{\sigma_\Psi}{2}\|\alpha_t-\alpha_{t+1}\|^2
    \le\frac{\eta_t^2\|\xi_t\|_*^2}{2\sigma_\Psi}+\mathcal{B}(\alpha_{t+1},\alpha_t).
\end{equation}
Multiply Eq.~\eqref{eq::dmd_lin} by $\eta_t$, and substitute
Eqs.~\eqref{eq::dmd_threepoint} and~\eqref{eq::dmd_FY} for the two inner
products. The memory-centre curvature terms then combine as
$-\rho_t\mathcal{B}(\alpha_{t+1},\alpha_t)+\mathcal{B}(\alpha_{t+1},\alpha_t)=(1-\rho_t)\mathcal{B}(\alpha_{t+1},\alpha_t)\ge0$,
and we retain this nonnegative remainder. Among the remaining terms, we retain
$-\rho_t\mathcal{B}(\gamma_t,\alpha_{t+1})$ for the memory telescope. The two
anchor terms $-(1-\rho_t)\mathcal{B}(\gamma_t,\alpha_{t+1})\le0$ and
$-(1-\rho_t)\mathcal{B}(\alpha_{t+1},\bar\alpha)\le0$ may be dropped, yielding
a weaker valid bound. Dividing by $\eta_t>0$ gives
\begin{align}\label{eq::dmd_onestep}
    \ell_t(\alpha_t)-\ell_t(\gamma_t)
    \le\;&\frac{\rho_t}{\eta_t}\big[\mathcal{B}(\gamma_t,\alpha_t)-\mathcal{B}(\gamma_t,\alpha_{t+1})\big]
    +\frac{1-\rho_t}{\eta_t}\,\mathcal{B}(\gamma_t,\bar\alpha)\nonumber\\
    &+\frac{1-\rho_t}{\eta_t}\,\mathcal{B}(\alpha_{t+1},\alpha_t)
    +\frac{\eta_t\|\xi_t\|_*^2}{2\sigma_\Psi}.
\end{align}
When $\rho_t=1$, the two terms weighted by $1-\rho_t$ vanish, and
Eq.~\eqref{eq::dmd_onestep} reduces to the one-step inequality
\eqref{eq::mirror_applied} used for Theorem~\ref{thm:dynamic_regret_biom}. When
$\rho_t<1$, the Fenchel--Young bound contributes
$\mathcal B(\alpha_{t+1},\alpha_t)$, while the variational inequality
contributes $-\rho_t\mathcal B(\alpha_{t+1},\alpha_t)$, leaving the
nonnegative remainder
$(1-\rho_t)\mathcal B(\alpha_{t+1},\alpha_t)$. Retaining this remainder
preserves the gradient-energy coefficient $1/(2\sigma_\Psi)$. Scaling
Fenchel--Young to match $\rho_t$ would instead produce
$1/(2\sigma_\Psi\rho_t)$. The retained term is the second Bregman component of
$\mathcal B_N^{\mathrm{ref}}$ in Eq.~\eqref{eq::dmd_bias}.

\medskip
\noindent\emph{Step 4: summation and telescoping.} Summing
Eq.~\eqref{eq::dmd_onestep} over $t=1,\ldots,N$ yields the gradient-energy term
$\frac{1}{2\sigma_\Psi}\sum_{t=1}^N\eta_t\|\xi_t\|_*^2$, as in
Theorem~\ref{thm:dynamic_regret_biom}. Define the memory-centre contribution by
\begin{equation*}
    \mathcal S_N:=\sum_{t=1}^N w_t\big[\mathcal{B}(\gamma_t,\alpha_t)-\mathcal{B}(\gamma_t,\alpha_{t+1})\big],\qquad w_t:=\frac{\rho_t}{\eta_t},
\end{equation*}
This is the same telescoping sum as in the proof of
Theorem~\ref{thm:dynamic_regret_biom}, with $w_t=\rho_t/\eta_t$ in place of
$1/\eta_t$. Reindexing produces the terminal term
$-w_N\mathcal{B}(\gamma_N,\alpha_{N+1})$. Because $w_N\ge0$ and
$\mathcal{B}\ge0$, this term may be dropped, giving
\begin{equation*}
    \mathcal S_N\le w_1\mathcal{B}(\gamma_1,\alpha_1)
    +\sum_{t=2}^N\Big\{w_{t-1}\big[\mathcal{B}(\gamma_t,\alpha_t)-\mathcal{B}(\gamma_{t-1},\alpha_t)\big]
    +(w_t-w_{t-1})\,\mathcal{B}(\gamma_t,\alpha_t)\Big\}.
\end{equation*}
By the $\mathcal K$-specific path-increment definition in
Theorem~\ref{thm:dynamic_regret_dmd},
$\mathcal{B}(\gamma_t,\alpha_t)-\mathcal{B}(\gamma_{t-1},\alpha_t)\le
\Delta_{\Psi,\mathcal K}(\gamma_t,\gamma_{t-1})$. Since $w_1\ge0$ and
$w_t-w_{t-1}\ge0$, the diameter bound applies to the remaining Bregman terms,
and their coefficients telescope:
\(
w_1\Omega_{\Psi,\mathcal K}+\sum_{t=2}^N(w_t-w_{t-1})\Omega_{\Psi,\mathcal K}
=w_N\Omega_{\Psi,\mathcal K}=\rho_N\Omega_{\Psi,\mathcal K}/\eta_N
\).
Hence
\begin{equation}\label{eq::dmd_Sbound}
    \mathcal S_N\le\frac{\rho_N\Omega_{\Psi,\mathcal K}}{\eta_N}+\sum_{t=2}^N\frac{\rho_{t-1}}{\eta_{t-1}}\,\Delta_{\Psi,\mathcal K}(\gamma_t,\gamma_{t-1}).
\end{equation}
The two terms in Eq.~\eqref{eq::dmd_onestep} weighted by
$(1-\rho_t)/\eta_t$ accumulate pointwise. With
$\kappa_t=(1-\rho_t)/\eta_t$, their sum defines the reference-gain remainder
\begin{equation}\label{eq::dmd_bias}
    \mathcal{B}^{\mathrm{ref}}_N:=\sum_{t=1}^N\frac{1-\rho_t}{\eta_t}\big[\mathcal{B}(\gamma_t,\bar\alpha)+\mathcal{B}(\alpha_{t+1},\alpha_t)\big]
    =\sum_{t=1}^N\kappa_t\big[\mathcal{B}(\gamma_t,\bar\alpha)+\mathcal{B}(\alpha_{t+1},\alpha_t)\big].
\end{equation}
Combining the gradient-energy term with Eqs.~\eqref{eq::dmd_Sbound}
and~\eqref{eq::dmd_bias} gives \eqref{eq:dynamic_regret_dmd_bound}.
For the looser bound \eqref{eq:dynamic_regret_dmd_bound_loose}, use
$\mathcal{B}(\alpha_{t+1},\alpha_t)\le\Omega_{\Psi,\mathcal K}$, $\rho_N\le1$, and
$\rho_{t-1}/\eta_{t-1}\le1/\eta_t$. The last inequality follows because
$(\eta_t)$ is non-increasing and $\rho_{t-1}\le1$. This completes the proof.
\end{proof}

\subsubsection{Proof of Corollary~\ref{cor::dmd_sqrt}}\label{app::proof_dmd_sqrt}
\begin{proof}
Substituting $\rho_t=\rho$ and $\eta_t=\eta t^{-1/2}$ into
Eq.~\eqref{eq:dynamic_regret_dmd_bound} gives
$\kappa_t=(1-\rho)\sqrt{t}/\eta$. Moreover,
$\rho_N/\eta_N=\rho\sqrt{N}/\eta$,
$\rho_{t-1}/\eta_{t-1}\le \rho\sqrt{N}/\eta$, and
$\sum_{t=1}^N\eta_t\le2\eta\sqrt{N}$.
Together with $\|\xi_t\|_*\le G$ and
$\mathcal B_\Psi(\alpha_{t+1},\alpha_t)\le\Omega_{\Psi,\mathcal K}$,
these relations give the stated inequality.
\end{proof}

\subsubsection{Proof of Corollary~\ref{cor::dmd_coupled}}\label{app::proof_dmd_coupled}
\begin{proof}
Under the coupled schedule, $\kappa_t=(1-\rho_t)/\eta_t=\kappa$ and
$w_t=1/\eta_t-\kappa$ is non-decreasing. Since
$\eta_t=\eta t^{-1/2}$, we have
$\eta_N^{-1}=\eta^{-1}\sqrt N$,
$\eta_t^{-1}\le\eta^{-1}\sqrt N$, and
$\sum_{t=1}^N\eta_t\le2\eta\sqrt N$.
Substituting these relations and $\|\xi_t\|_*\le G$ into
Eq.~\eqref{eq:dynamic_regret_dmd_bound_loose} gives the first inequality in
Corollary~\ref{cor::dmd_coupled}; dividing by $N$ gives the second.
Under $V_N^{\Psi,\mathcal K}(\gamma)=o(\sqrt N)$ and
\[
\frac{1}{N}\sum_{t=1}^N\mathcal B_\Psi(\gamma_t,\bar\alpha)
\longrightarrow\overline{\mathcal B},
\]
the terminal, path-variation, and gradient-energy terms in the average bound
converge to zero, while the reference-gain term converges to
$\kappa(\Omega_{\Psi,\mathcal K}+\overline{\mathcal B})$. Taking the upper
limit gives the stated result.
\end{proof}

\section{Numerical and empirical implementation details}\label{app::numdetails}
\setcounter{equation}{0}
\setcounter{table}{0}
\setcounter{figure}{0}
This appendix provides the estimation procedures, parameter constraints,
empirical implementation details, and supplementary numerical results supporting
Sections~\ref{sec::SectionSix} and~\ref{sec:empirical}.

\subsection{Estimation protocol and pilot-interval selection}\label{app::protocol}
For each experiment, the bounded-gain domain is fixed before the method
comparison and shared across all bounded rules. Its construction reflects the
experiment's design. In Experiments~6.1 and~6.2, candidate intervals are based
on robust quantiles of a preliminary discounted-logistic gain path fitted over
a broad range. Candidate interval-link pairs are compared using the following
mobility-weighted path criterion, motivated by Eq.~\eqref{eq::mobility}, and the
selected pair is then refitted under the experiment-specific in-sample objective:
\begin{equation*}
    \frac{1}{2} \sum_{t,j}\frac{(\alpha_{j,t+1}^{\text{pilot}}-\alpha_{j,t}^{\text{pilot}})^2}{g_j'(g_j^{-1}(\alpha_{j,t}^{\text{pilot}}))}.
\end{equation*}
The switching benchmark fixes $\mathcal A=[0,1]$ for its bounded rules,
while the regret diagnostic fixes both its compact interval and learning-rate
schedule.

For each market in the empirical panel, the constant-gain and
\textrm{DMD-logit} specifications are fitted on the initial $1000$ observations
over $[0.001,1]$. The resulting interval covers the constant-gain estimate and
the \textrm{DMD-logit} pilot path, is widened by $10\%$ of their joint span, and
is clipped to the broad range. It is then held fixed across bounded rules and
all subsequent out-of-sample refits.

\paragraph{\textbf{Link local mobilities}} For any interval $\mathcal{A}=[\alpha_{L},\alpha_{H}]$, write $D = \alpha_{H}-\alpha_{L}$ and $u = (\alpha-\alpha_L)/D$. The local mobilities of the logistic, cloglog and reverse-cloglog links are, respectively, $D\, u\,(1-u)$, $D\,(1-u)\,(-\log(1-u))$, and $D\,u\,(-\log u)$. Each mobility measures where on the admissible interval a latent-coordinate step most moves the gain. For diagonal vector gains, these formulas apply coordinate-wise.

\subsection{Estimation objective, parameter constraints, and optimisation}\label{app::estimation}
The local-level, switching, and empirical comparisons estimate their static
parameters by minimising the cumulative negative log predictive density along
each filtered path,
\begin{equation}\label{eq::estimation_objective}
    \hat\theta=\argmin_{\theta}\ \sum_{t}-\log p\big(y_{t}\mid\lambda_{t}(\theta)\big).
\end{equation}
Here $\theta$ contains the static parameters relevant to each specification:
the state intercept $\omega_\lambda$ and persistence $\beta_\lambda$; the gain
persistence $\rho$ and learning rate $\eta$; either the gain-recursion intercept
$\omega$ or the reference gain $\bar\alpha$; and, when estimated, the initial
state and observation-density parameters. The two gain-location
parameterisations are equivalent through
$\bar\alpha=g(\omega/(1-\rho))$, so each specification estimates only one of
them. Experiment~6.2 instead minimises
the quadratic state loss defined in Section~6.2. In the empirical out-of-sample
comparison, QLIKE~\cite{patton2011volatility} is used as the evaluation loss.

Table~\ref{tab:constraints} lists the parameter constraints and the transformations
used to impose them.

\begin{table}[!ht]
\centering
\small
\begin{tabular}{llll}
\toprule
Parameter & Symbol & Constraint & Enforced by \\
\midrule
State intercept          & $\omega_\lambda$ & $\in\mathbb{R}$        & --- \\
State persistence        & $\beta_\lambda$  & $\in(0,1)$            & sigmoid \\
Gain (bounded rules)     & $\alpha_t$       & $\in[\alpha_L,\alpha_H]$        & link $g$ or projection \\
Gain-recursion location  & $\omega$ or $\bar\alpha$ & $\bar\alpha=g(\omega/(1-\rho))$ & coordinate choice \\
Gain persistence         & $\rho$           & $\in(0,1)$            & sigmoid \\
Learning rate            & $\eta$           & $>0$                  & softplus \\
Student-$t$ d.o.f.       & $\nu_{\mathrm{df}}$ & $>2$               & $2+\mathrm{softplus}$ \\
Gaussian scale           & $\sigma$         & $>0$                  & softplus \\
\bottomrule
\end{tabular}
\caption{\textbf{Parameter constraints.} Static parameter constraints are imposed by reparametrisation, so estimation is unconstrained in the transformed coordinates; bounded gain paths are enforced by their link or projection. The underlying exponential link in \textrm{DMD-exp} has no model-level gain interval. For numerical stability, the switching implementation uses $\alpha_t=\mathrm e^{f_{t+1}/2}$ with $f_{t+1}$ clipped to $[-12,10]$, while the empirical implementation uses $\alpha_t=\mathrm e^{f_{t+1}}$ with $f_{t+1}$ clipped to $[-25,4]$. The corresponding gain ceilings are $\mathrm e^5\simeq148.4$ and $\mathrm e^4\simeq54.6$. The two normalisations are equivalent up to a rescaling of $f$ whenever clipping is inactive.}
\label{tab:constraints}
\end{table}

Static parameters are estimated by L-BFGS-B for smooth objectives. In the
switching experiment, the clipped \textrm{DMD-exp} and projected
\textrm{DMD-proj} objectives are instead optimised by multistart Powell search
followed by Nelder--Mead. The switching and empirical fits estimate their
initial states, whereas Experiment~6.1 fixes $m_1=0$ for every method, matching
the Gaussian prior used by its Kalman oracle.

\paragraph{\textbf{Out-of-sample evaluation}}
Each method is estimated separately and therefore generates its own filtered path.
Static parameters are re-estimated on expanding windows, while each market's
admissible gain interval $[\alpha_L,\alpha_H]$ is selected once from a pilot path
fitted on the initial training window and then held fixed throughout the evaluation.
Diebold--Mariano statistics use Newey--West HAC standard errors. For each
market, the Model Confidence Set uses the range statistic and a moving-block
bootstrap with block length $\lceil n^{1/3}\rceil$, $B=3000$ replications, and
a $90\%$ confidence level. The panel comparison in
Section~\ref{sec:emp_oos} instead uses a whole-date block bootstrap, preserving
serial dependence and contemporaneous dependence across markets.

\begin{table}[!ht]
\centering
\small
\resizebox{\textwidth}{!}{%
\begin{tabular}{lllccr}
\toprule
Index & Yahoo symbol & Country & Sample & Obs & OOS \\
\midrule
S\&P 500          & \texttt{\textasciicircum GSPC}     & United States  & 2000--2024 & 6288 & 5288 \\
Nasdaq Composite  & \texttt{\textasciicircum IXIC}     & United States  & 2000--2024 & 6288 & 5288 \\
Dow Jones Ind.    & \texttt{\textasciicircum DJI}      & United States  & 2000--2024 & 6288 & 5288 \\
FTSE 100          & \texttt{\textasciicircum FTSE}     & United Kingdom & 2000--2024 & 6313 & 5313 \\
DAX               & \texttt{\textasciicircum GDAXI}    & Germany        & 2000--2024 & 6347 & 5347 \\
CAC 40            & \texttt{\textasciicircum FCHI}     & France         & 2000--2024 & 6389 & 5389 \\
EURO STOXX 50     & \texttt{\textasciicircum STOXX50E} & Euro area      & 2007--2024 & 4451 & 3451 \\
Nikkei 225        & \texttt{\textasciicircum N225}     & Japan          & 2000--2024 & 6125 & 5125 \\
Hang Seng         & \texttt{\textasciicircum HSI}      & Hong Kong      & 2000--2024 & 6159 & 5159 \\
S\&P/TSX Comp.    & \texttt{\textasciicircum GSPTSE}   & Canada         & 2000--2024 & 6278 & 5278 \\
S\&P/ASX 200      & \texttt{\textasciicircum AXJO}     & Australia      & 2000--2024 & 6315 & 5315 \\
Bovespa           & \texttt{\textasciicircum BVSP}     & Brazil         & 2000--2024 & 6190 & 5190 \\
\bottomrule
\end{tabular}}
\caption{\textbf{Out-of-sample panel data.} Twelve equity indices using daily
open--high--low--close prices from Yahoo Finance. ``Obs'' is the number of
retained trading-day observations with complete OHLC prices;
``OOS'' is the evaluation length after the initial $1000$-day window. The EURO
STOXX 50 series begins in 2007. Prices are unadjusted daily OHLC; realised
variance is floored at $10^{-8}$ before taking logarithms and is not winsorised.}
\label{tab:emp_panel_data}
\end{table}

\begin{table}[!ht]
\centering
\small
\begin{tabular}{lr@{\qquad}lr}
\toprule
Market & $[\alpha_L,\alpha_H]$ & Market & $[\alpha_L,\alpha_H]$ \\
\midrule
SPX     & $[0.05744,0.44781]$ & STOXX50 & $[0.00100,0.76345]$ \\
NDX     & $[0.00100,0.53750]$ & NIKKEI  & $[0.05229,0.41904]$ \\
DJI     & $[0.03585,0.47992]$ & HSI     & $[0.00906,0.49189]$ \\
FTSE    & $[0.00412,0.26981]$ & TSX     & $[0.124316,0.124403]$ \\
DAX     & $[0.00511,0.18748]$ & ASX     & $[0.00100,1.00000]$ \\
CAC     & $[0.00100,0.61410]$ & BVSP    & $[0.04191,0.36281]$ \\
\bottomrule
\end{tabular}
\caption{\textbf{Pilot-selected admissible intervals for bounded panel methods.}
Each interval is selected from the initial $1000$ observations and held fixed
across bounded gain rules and out-of-sample refits. The narrow TSX
interval implies that the bounded methods remain close to the constant-gain
benchmark for that market.}
\label{tab:emp_gain_intervals}
\end{table}

\FloatBarrier
\paragraph{\textbf{Out-of-sample realised-variance target and window schedule}}
\looseness=-2
\enlargethispage{\baselineskip}
For the panel in Section~\ref{sec:emp_oos}, let
$z_t=\log\widehat{\mathrm{RV}}_t$ denote the logarithm of the daily Parkinson
range-based variance proxy~\cite{parkinson1980extreme}. We model $z_t$ as a
Gaussian score-driven level. Conditional on
$\mathcal F_{t-1}$, $z_t\sim\mathcal N(h_t,\sigma^2)$, where
$h_{t+1}=\omega_\lambda+\beta_\lambda h_t+\alpha_t s_t$ and
$s_t=(z_t-h_t)/\sigma^2$; $\sigma$ is estimated. Forecasts are evaluated by the
Gaussian negative log-density for $z_t$ and by QLIKE using the corresponding
lognormal mean $\widehat{\mathrm{RV}}^{\,f}_t=\exp(h_t+\tfrac12\sigma^2)$ as
the variance forecast.
Forecasts use an expanding window initialised with $1000$ days and refitted every
$126$ trading days using all available observations. At each refit, the full
history is re-filtered under the updated static parameters before forecasting
resumes.

\FloatBarrier
\subsection{Gaussian local-level target schedule}\label{app::fixedscore}
Figure~\ref{fig:exp1_fixed_score_line_paths_biom} reports the state and gain
paths for the experiment in Subsection~\ref{subsec::one}.

\begin{figure}[!ht]
\centering
\includegraphics[width=1\linewidth]{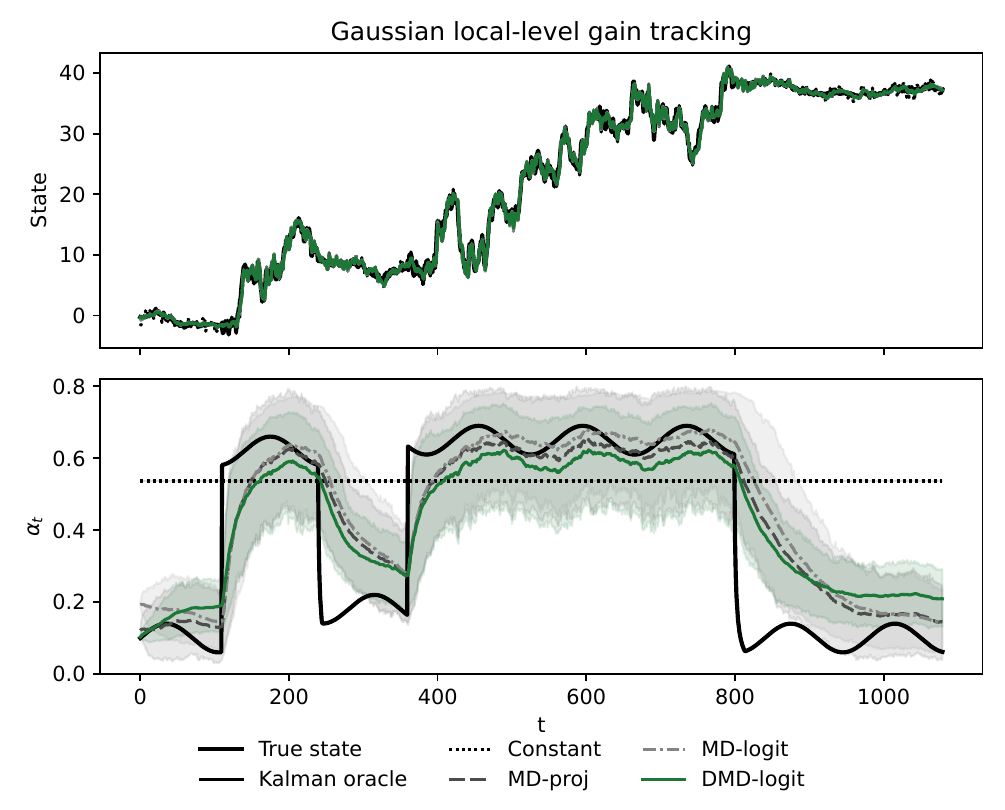}
\caption{\textbf{Gaussian local-level gain tracking.} The upper panel shows the latent state and filtered estimates; the lower panel shows the corresponding gain paths. Filters share the Gaussian score formula and observations, but their realised forecast errors are method-specific. The infeasible Kalman benchmark is the exact conditional-mean filter for the simulated prior and variance schedule.}
\label{fig:exp1_fixed_score_line_paths_biom}
\end{figure}

\paragraph{\textbf{Target schedule and Kalman-\(q\) inversion}} In the notation
of Eq.~\eqref{eq::schedule}, define the raw gain schedule as
\begin{equation*}
\begin{aligned}
  \widetilde\alpha_t
  &=\operatorname{clip}\{\mathcal P_t(0.10,0.62,0.18,0.65,0.10;110,240,360,800)\\
  &\hspace{4em}+0.04\sin\{2\pi(t-1)/140\},\,0.04,\,0.90\}.
\end{aligned}
\end{equation*}
The final target path \(\{\alpha_t^\star\}_{t=1}^T\) is obtained by projecting
\(\{\widetilde\alpha_t\}\) forward onto \(\mathcal A=[0.02,0.80]\) while enforcing
\begin{equation*}
  \alpha_{t+1}^\star\ge \frac{\alpha_t^\star}{1+\alpha_t^\star}+10^{-4}.
\end{equation*}
Let \(x_t\) denote
the latent state, \(m_t=\mathsf E[x_t\mid\mathcal F_{t-1}]\) the filter mean,
and \(P_t=\operatorname{Var}(x_t\mid\mathcal F_{t-1})\) the prediction
variance. For the Kalman filter associated with this target path,
\begin{equation*}
  \alpha_t^\star=\frac{P_t}{P_t+r},
  \qquad
  P_{t+1}=(1-\alpha_t^\star)P_t+q_t.
\end{equation*}
\looseness=-1
Inverting this Riccati recursion gives
\begin{equation}\label{eq::exp1_qfromalpha}
  q_t=P_{t+1}-r\,\alpha_t^\star=r\!\left(\frac{\alpha_{t+1}^\star}{1-\alpha_{t+1}^\star}-\alpha_t^\star\right)\ (t<T),\qquad q_T=q_{T-1},\qquad P_1=r\,\frac{\alpha_1^\star}{1-\alpha_1^\star}.
\end{equation}
For \(t<T\), the condition \(q_t\ge0\) is equivalent to
\(\alpha_{t+1}^\star\ge\alpha_t^\star/(1+\alpha_t^\star)\). The added
\(10^{-4}\) provides a strict numerical margin. We set \(r=1\). Because the
sinusoid is indexed by \(t-1\), \(\alpha_1^\star=0.1\), which implies \(P_1=1/9\) and an initial
standard deviation of \(1/3\). The DGP is \(y_t=x_t+\varepsilon_t\),
\(x_{t+1}=x_t+\upsilon_t\), with \(\varepsilon_t\sim\mathcal N(0,r)\) and
\(\upsilon_t\sim\mathcal N(0,q_t)\); the innovations are mutually independent
over time and independent of the initial state. The latent state is initialised as
\(x_1\sim\mathcal N(0,P_1)\), and every filter starts from \(m_1=0\). Under this
DGP and these initial conditions, the Kalman benchmark is the correctly specified
conditional-mean oracle.

\subsection{Bivariate scaling, link geometry and vector gains}\label{app::bivariate}
Figure~\ref{fig:exp2_score_geometry_biom} shows how the three score-scaling
conventions change the relative update magnitudes of the two coordinates.
Table~\ref{tab:exp2_geometry_memory_biom} compares link geometries and discounted
memory for a common scalar gain under unit scaling. Table~\ref{tab:exp2_unit_scalar_diagonal_biom} and
Figure~\ref{fig:exp2_diagonal_state_biom} compare common scalar and diagonal gains
under unit scaling. Figure~\ref{fig:exp2_heterogeneous_state_biom} shows a
representative state path for the heterogeneous inverse-Fisher-scaled design
summarized in Table~\ref{tab:exp2_heterogeneous_biom}.

\begin{figure}[!ht]
\centering
\includegraphics[width=1\linewidth]{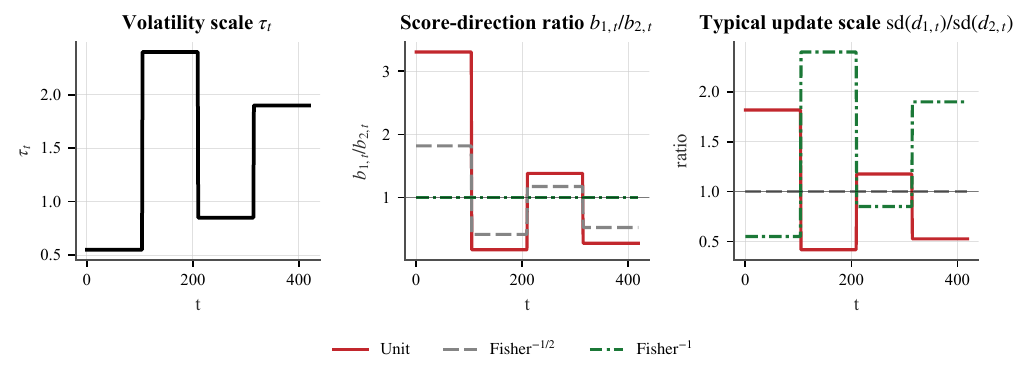}
\caption{\textbf{Bivariate score-scaling geometry.} The left panel shows the volatility scale $\tau_t$. The middle panel shows the scaling-coefficient ratio $b_{1,t}/b_{2,t}$ under unit, inverse-square-root Fisher, and inverse-Fisher scaling. The right panel shows the implied ratio of typical update scales $\mathrm{sd}(d_{1,t})/\mathrm{sd}(d_{2,t})$. Inverse-square-root Fisher scaling equalises these marginal update scales, whereas inverse-Fisher scaling expresses the designed state targets in common gain units.}
\label{fig:exp2_score_geometry_biom}
\end{figure}

For the link-geometry comparison at fixed unit scaling, \textrm{DMD-logit} has the lowest mean quadratic loss, whereas cloglog-OMD has the lowest state-target RMSE. Relative to the fitted constant gain, the paired \textrm{DMD-logit} loss reduction is $0.2570$ (standard error $0.0044$) and its paired pathwise state-target RMSE reduction is $0.2603$ (standard error $0.0029$). All methods share the designed target, score rule, scaling, and admissible gain interval. Link geometry governs local mobility, while discounting introduces reversion.

\begin{table}[t]
\centering
\small
\begin{tabular}{lrrrr}
\toprule
Method & Mean loss & Mean gain & State-target RMSE & \(\mu_2\) RMSE \\
\midrule
Constant & 0.3133 & 0.2402 & 1.0554 & 0.5421 \\
MD-proj & 0.0658 & 0.5724 & 0.8083 & 0.1653 \\
MD-logit & 0.0592 & 0.5854 & 0.8006 & 0.1614 \\
Cloglog-OMD & 0.0577 & 0.5985 & 0.7929 & 0.1625 \\
DMD-logit & 0.0563 & 0.6194 & 0.7936 & 0.1377 \\
\bottomrule
\end{tabular}
\caption{\textbf{Unit-scaled bivariate design: link geometry and discounted-memory comparison for a common scalar gain.} All methods use the same score and scaling rule and the same admissible gain interval. The differences arise from the gain update rule, link geometry, and memory structure.}
\label{tab:exp2_geometry_memory_biom}
\end{table}

\FloatBarrier

In the unit-scaled scalar-versus-diagonal comparison, the diagonal gain improves
the second coordinate while the first-coordinate feasibility gap remains. The
common \textrm{DMD-logit} gain has a state-target RMSE of $0.7936$, compared with
$0.7871$ for diagonal \textrm{DMD-logit}. The diagonal version reduces
second-coordinate RMSE from $0.1377$ to $0.1049$, while first-coordinate RMSE
changes little ($1.1138$ versus $1.1082$). Relative to common
\textrm{DMD-logit}, diagonal \textrm{DMD-logit} reduces second-coordinate
pathwise RMSE by $0.0305$ (standard error $0.0015$) and overall pathwise
state-target RMSE by $0.0063$ (standard error $0.0009$). The additional gain
dimension therefore yields a modest aggregate improvement concentrated in the
well-scaled second coordinate.

\begin{table}[!ht]
\centering
\scriptsize
\setlength{\tabcolsep}{3pt}
\begin{tabular}{llrrrr}
\toprule
Method & Gain & Mean loss & State-target RMSE & \(\mu_1\) RMSE & \(\mu_2\) RMSE \\
\midrule
Constant & common & 0.3133 & 1.0554 & 1.3907 & 0.5421 \\
MD-logit & common & 0.0592 & 0.8006 & 1.1207 & 0.1614 \\
DMD-logit & common & 0.0563 & 0.7936 & 1.1138 & 0.1377 \\
Constant & diagonal & 0.3146 & 1.0425 & 1.4229 & 0.3862 \\
MD-logit & diagonal & 0.1637 & 0.8039 & 1.1323 & 0.1021 \\
DMD-logit & diagonal & 0.1524 & 0.7871 & 1.1082 & 0.1049 \\
\bottomrule
\end{tabular}
\caption{\textbf{Common scalar versus diagonal vector gains under unit scaling.}
Entries are Monte Carlo averages over 256 simulated paths. The diagonal gain
improves the well-scaled second coordinate, while the first-coordinate feasibility
gap induced by unit scaling persists.}
\label{tab:exp2_unit_scalar_diagonal_biom}
\end{table}

\begin{figure}[t]
\centering
\includegraphics[width=1\linewidth]{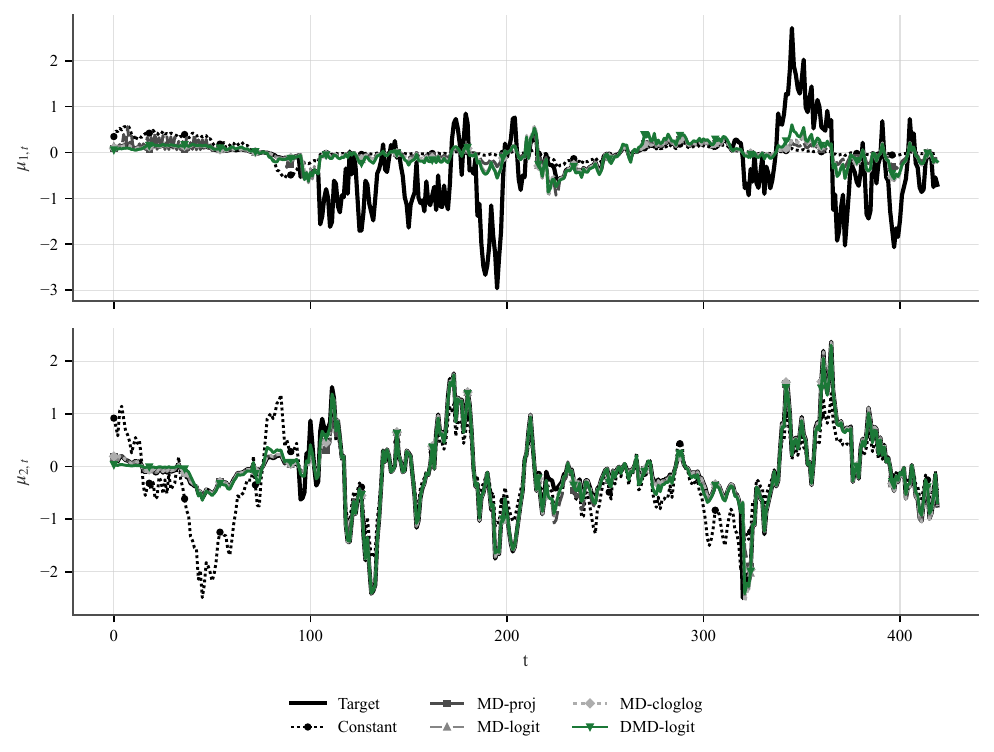}
\caption{\textbf{Representative state path under unit scaling with diagonal gains.} The diagonal gain improves the better-scaled second coordinate, while the first-coordinate target gap persists under unit scaling.}
\label{fig:exp2_diagonal_state_biom}
\end{figure}

\begin{figure}[t]
\centering
\includegraphics[width=1\linewidth]{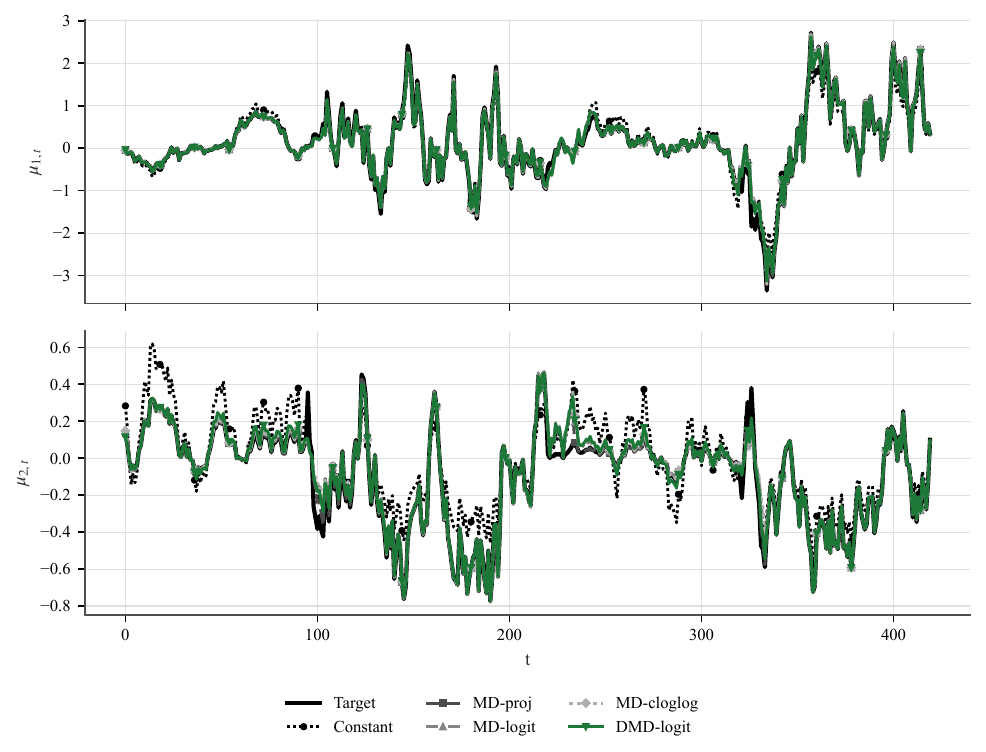}
\caption{\textbf{Representative state path in the heterogeneous inverse-Fisher-scaled design.} Under inverse-Fisher scaling and heterogeneous adjustment speeds, the adaptive filters track both state targets closely.}
\label{fig:exp2_heterogeneous_state_biom}
\end{figure}

\noindent\begin{minipage}{\linewidth}
\paragraph{\textbf{Data-generating process and designed loss}} For each of the
$N=256$ paths and each $j\in\{1,2\}$, draw $z_{j,1}\sim\mathcal N(0,1)$ and,
for $t=2,\ldots,T$, set
$z_{j,t}=\phi_jz_{j,t-1}+\sqrt{1-\phi_j^2}\varepsilon_{j,t}$. All initial
values and innovations are mutually independent, with
$\varepsilon_{j,t}\sim\mathcal N(0,1)$ and
$(\phi_1,\phi_2)=(0.97,0.92)$. Set
$e_{1,t}=\tau_tz_{1,t}$, $e_{2,t}=z_{2,t}$, and
$\Sigma_t=\diag(\tau_t^2,1)$. The designed state target is
$\delta_t^{\mathrm{state}}=\beta_t\odot e_t$. The AR process implies
$\mathsf E_t[e_{1,t+1}]=\phi_1(\tau_{t+1}/\tau_t)e_{1,t}$ and
$\mathsf E_t[e_{2,t+1}]=\phi_2e_{2,t}$. These conditional means generally
differ from $\delta_t^{\mathrm{state}}$, so evaluation uses the designed
quadratic state loss defined in Subsection~\ref{subsec::two}.
\end{minipage}

Scaling enters through $d_t=B_te_t$, where $B_t=\diag(b_{1,t},1)$,
$b_{1,t}=\tau_t^{-2}$ under unit scaling, $b_{1,t}=\tau_t^{-1}$ under
inverse-square-root Fisher scaling, and $b_{1,t}=1$ under inverse-Fisher
scaling. For a common scalar gain,
$L_t(\alpha)=\tfrac12\|\delta_t^{\mathrm{state}}-\alpha d_t\|_{\Sigma_t^{-1}}^2$.
Since $\chi_t>0$ almost surely, completing the square gives
$\tfrac12(\delta_t^{\mathrm{eff}}-\alpha e_t^{\mathrm{eff}})^2+c_t$, where
$e_t^{\mathrm{eff}}=\sqrt{\chi_t}$,
$\delta_t^{\mathrm{eff}}=\zeta_t/\sqrt{\chi_t}$,
$\chi_t=\sum_{j=1}^{2}d_{j,t}^2/(\Sigma_t)_{jj}$, and
$\zeta_t=\sum_{j=1}^{2}d_{j,t}\delta_{j,t}^{\mathrm{state}}/(\Sigma_t)_{jj}$.

\FloatBarrier
\subsection{Switching local-level benchmark: full grid}\label{app::blasquesfull}
Table~\ref{tab:blasques_full_biom} reports the full $\delta\times\gamma$ grid for the switching local-level benchmark of Subsection~\ref{subsec::four}; the main text reports the $\gamma=2.5$ slice (Table~\ref{tab:blasques_biom}).

\begin{table}[!ht]
\centering
\small
\begin{tabular}{ccrrrrrr}
\toprule
\(\delta\) & \(\gamma\) & Constant & \textrm{DMD-exp} & \textrm{DMD-logit} & \textrm{DMD-proj} & \textrm{MD-logit} & AdaGrad \\
\midrule
0.0 & 1.0 & 0.030 & 0.044 & 0.033 & \textbf{0.030} & 0.031 & \underline{0.030} \\
0.0 & 1.5 & 0.030 & 0.044 & 0.033 & \textbf{0.030} & 0.031 & \underline{0.030} \\
0.0 & 2.0 & 0.030 & 0.044 & 0.033 & \textbf{0.030} & 0.031 & \underline{0.030} \\
0.0 & 2.5 & 0.030 & 0.044 & 0.033 & \textbf{0.030} & 0.031 & \underline{0.030} \\
0.5 & 1.0 & \textbf{0.222} & 0.223 & 0.224 & 0.224 & 0.224 & \underline{0.223} \\
0.5 & 1.5 & \textbf{0.201} & 0.202 & 0.202 & \underline{0.202} & 0.202 & 0.202 \\
0.5 & 2.0 & \underline{0.181} & 0.181 & 0.181 & \textbf{0.181} & 0.181 & 0.181 \\
0.5 & 2.5 & 0.169 & 0.169 & \textbf{0.169} & 0.170 & 0.169 & \underline{0.169} \\
1.0 & 1.0 & 0.316 & \underline{0.314} & \textbf{0.314} & 0.315 & 0.316 & 0.316 \\
1.0 & 1.5 & 0.285 & \underline{0.282} & \textbf{0.282} & 0.284 & 0.285 & 0.285 \\
1.0 & 2.0 & 0.256 & \underline{0.251} & \textbf{0.251} & 0.253 & 0.256 & 0.256 \\
1.0 & 2.5 & 0.239 & \underline{0.232} & \textbf{0.231} & 0.235 & 0.238 & 0.238 \\
2.0 & 1.0 & 0.457 & \underline{0.435} & \textbf{0.432} & 0.437 & 0.457 & 0.457 \\
2.0 & 1.5 & 0.410 & \underline{0.377} & \textbf{0.372} & 0.379 & 0.410 & 0.410 \\
2.0 & 2.0 & 0.367 & \underline{0.323} & \textbf{0.317} & 0.326 & 0.366 & 0.367 \\
2.0 & 2.5 & 0.342 & \underline{0.290} & \textbf{0.285} & 0.293 & 0.339 & 0.340 \\
3.0 & 1.0 & 0.572 & 0.531 & \textbf{0.509} & \underline{0.519} & 0.572 & 0.572 \\
3.0 & 1.5 & 0.511 & 0.450 & \textbf{0.431} & \underline{0.440} & 0.510 & 0.511 \\
3.0 & 2.0 & 0.457 & 0.378 & \textbf{0.360} & \underline{0.368} & 0.451 & 0.455 \\
3.0 & 2.5 & 0.424 & 0.332 & \textbf{0.320} & \underline{0.325} & 0.412 & 0.418 \\
\bottomrule
\end{tabular}
\caption{\textbf{Switching local-level benchmark: full grid.} Entries are mean filtered-state RMSEs over \(M=1000\) replications of length \(T=1000\). The break size is \(\delta\), and each regime lasts \(\gamma\times10^2\) periods. The fitted specifications are estimated path by path by maximum likelihood; \textrm{DMD-exp} denotes the nominally unbounded exponential-link \textrm{aGAS} specification. Bold and underlined entries mark the lowest and second-lowest full-precision RMSEs in each row, so they may coincide after rounding. The main text reports the \(\gamma=2.5\) slice in Table~\ref{tab:blasques_biom}.}
\label{tab:blasques_full_biom}
\end{table}

\end{document}

%% file: tables/tab-emp-insample.tex
\begin{tabular}{lrrrrrr}
\toprule
Method & THB/USD (N) & THB/USD (t) & S\&P 500 (N) & S\&P 500 (t) & EUR/USD (N) & EUR/USD (t) \\ \midrule
Constant & 5547 & 921 & \textbf{18295} & \textbf{17984} & \textbf{9564} & \textbf{9063} \\
\textrm{DMD-exp} & 5510 & 922 & 18314 & 17995 & 9570 & 9076 \\
\textrm{MD-logit} & \textbf{5470} & 930 & 18304 & 17992 & 9573 & 9072 \\
\textrm{DMD-logit} & 5553 & \textbf{916} & 18297 & 17990 & 9582 & 9078 \\
\textrm{DMD-proj} & 5553 & 918 & 18302 & 17990 & 9579 & 9077 \\
\textrm{AdaGrad} & 5478 & 930 & 18304 & 17992 & 9573 & 9072 \\
\bottomrule
\end{tabular}